\begin{document}

\title{ROML: A Robust Feature Correspondence Approach for Matching Objects in A Set of Images}


\author{Kui~Jia \and
        Tsung-Han~Chan \and
        Zinan~Zeng \and
        Shenghua~Gao \\
        Gang~Wang \and
        Tianzhu~Zhang \and
        Yi~Ma
        }


\institute{Kui~Jia \and Tsung-Han~Chan \and Zinan~Zeng \and Shenghua~Gao \and Tianzhu~Zhang \at
              Advanced Digital Sciences Center, 1 Fusionopolis Way, Singapore \\
              \email{kuijia@gmail.com; chantsunghan@gmail.com; edwin.zeng@adsc.com.sg; gaosheeis@gmail.com; tz.zhang@adsc.com.sg}
           \and
           Gang~Wang \at
              School of Electrical and Electronic Engineering, Nanyang Technological University, Singapore \\
              \email{wanggang@ntu.edu.sg}
           \and
           Yi~Ma \at
              Department of Electrical and Computer Engineering, University of Illinois at Urbana-Champaign, Urbana, IL, USA \\
              \email{yima@illinois.edu} \\
              School of Information Science and Technology, ShanghaiTech University, China \\
              \email{mayi@shanghaitech.edu.cn}
           }

\date{}

\maketitle

\begin{abstract}

Feature-based object matching is a fundamental problem for many applications in computer vision, such as object recognition, 3D reconstruction, tracking, and motion segmentation. In this work, we consider simultaneously matching object instances in a set of images, where both inlier and outlier features are extracted. The task is to identify the inlier features and establish their {\it consistent} correspondences across the image set. This is a challenging combinatorial problem, and the problem complexity grows exponentially with the image number. To this end, we propose a novel framework, termed ROML, to address this problem. ROML optimizes simultaneously a partial permutation matrix (PPM) for each image, and feature correspondences are established by the obtained PPMs. Two of our key contributions are summarized as follows. (1) We formulate the problem as rank and sparsity minimization for PPM optimization, and treat simultaneous optimization of multiple PPMs as a regularized consensus problem in the context of distributed optimization. (2) We use the ADMM method to solve the thus formulated ROML problem, in which a subproblem associated with a single PPM optimization appears to be a difficult integer quadratic program (IQP). We prove that under wildly applicable conditions, this IQP is equivalent to a linear sum assignment problem (LSAP), which can be efficiently solved to an exact solution. Extensive experiments on rigid/non-rigid object matching, matching instances of a common object category, and common object localization show the efficacy of our proposed method.

\keywords{Object matching \and Feature correspondence \and Low-rank \and Sparsity.}

\end{abstract}

\section{Introduction}
\label{IntroSec}

Object matching is a fundamental problem in computer vision. Given a pair or a set of images that contain common object instances, or an object captured under varying poses, it involves establishing correspondences between the parts or features of the objects contained in the images. Accurate, robust, and consistent matching across images is a key ingredient in a wide range of applications such as object recognition, shape matching, 3D reconstruction, tracking, and motion segmentation.

For a pair of feature sets extracted from two images, finding inliers from them and establishing correspondences are in general a combinatorial search problem. Objects may appear in images with cluttered background, and some parts of the objects may also be occluded. The search space can further explode when a {\it globally consistent matching} across a set of images is desired. For object instances with large intra-category variations or those captured under varying poses (e.g., non-rigid objects with articulated pose changes), the matching tasks become even more difficult. All these factors make object matching a very challenging task.

In literature, a variety of strategies have been proposed for object matching. In particular, early shape matching works use point sets to represent object patterns \cite{SLH,ShapiroBrady92}. To match between a pair of point sets, they build point descriptions by modeling spatial relations of points within each point set as higher level geometric structures, e.g., lines, curves, and surfaces, or more advanced features, e.g., shape context \cite{BelongieShapeContext}. In \cite{ICP,ChuiRangarajanNewPointMatching,BelongieShapeContext}, alternating estimation of point correspondence and geometric transformation is also used for non-rigid shape matching. In general, point set based shape matching is less robust to measurement noise and outliers, with classical techniques such as RANSAC \cite{RANSAC} available to improve its robustness. The development of local invariant features \cite{SIFT,InterestPtsDetector} for discriminative description of visual appearance has brought significant progress in object matching and recognition \cite{DescriptorForObjectRecog}. For example in \cite{RussellSegmentationUOCD,ObjectnessForUOCD}, instances of a common object category from an image collection can be located and matched by exploiting the discriminative power of local feature descriptors. The popular Bag-of-Words model for object recognition is also built on matching (clustering) similar local region descriptors. However, local descriptors alone can be ambiguous for matching when there exist repetitive textures or less discriminative local appearance in images. In between of these two extremes, recent graph matching methods \cite{LeordeanuSpectralCorrespondence,TorresaniModelAndGlobalOpt,zhou2012factorized} consider both feature similarity and geometric compatibility between two sets of features, where the nodes of graphs correspond to local features and edges encode spatial relations between them. Mathematically, graph matching is formulated as a quadratic assignment problem (QAP), which is known to be NP-hard. Although intensive efforts of these methods have been focusing on devising more accurate and efficient algorithms to solve this problem, in general, they can only obtain approximate solutions for QAP, and thus suboptimal correspondences for robust object matching. 

Most of these methods focus on establishing correspondences between a pair of images. However, in practice, it is very common that when such a pair of images are available, a set of images are also available that we know a common object is present in them, such as a video sequence with a moving object, or a set of images collected from the Internet that contain instances of a generic object category. In these situations, it is desired that a globally consistent matching can be established. This is a very challenging combinatorial problem. As the number of images increases, the problem complexity explodes exponentially. A straightforward approach is to locally build correspondences between pairs of images. Obviously, pair-wise matching can only get suboptimal solutions, since matching found between pairs of images may not be globally consistent across the whole set. Compared to global matching, pair-wise matching is also less robust to outliers and occlusion of inlier features, as it cannot leverage additional constraints from other images that also contain the same object pattern of interest. In this work, we are thus interested in the following object matching problem.

{\it Problem 1:} Given a set of images with both inlier and outlier features extracted from each image, simultaneously identify a given number of inlier features from each image and establish their consistent correspondences across the image set.

In Problem 1, we consider the common scenario in object matching that there is exactly one object instance in each image. The inlier features describe appearance of the salient local regions of the object, and the rest of the features are outliers. One may think of these features as local region descriptors such as SIFT \cite{SIFT} or HOG \cite{HOG}, although other types of features can also be used, which will be deliberated in later sections. Under this setting, the object instance contained in each image is naturally represented as a set of inlier features. Without consideration of intra-category variations, the corresponding inlier features extracted from different images, which characterize the same salient local regions of different instances of the object category, would be linearly correlated. When concatenating ordered inlier features in each image as a simple feature of the object instance (thinking of concatenating several SIFT feature vectors as a single vector), and then arraying these features of different images as the columns of a large matrix, this matrix will have low-rank, and ideally rank one. In situations where intra-category variations exist, e.g., variations in inlier features of different images caused by illumination or pose changes, the matrix low-rank property can still hold by decomposing out some errors.

Motivated by these observations, we propose in this paper a novel and principled framework, termed {\it Robust Object Matching using Low-rank constraint} (ROML), for identifying and matching inlier features of object instances across a set of images. ROML leverages the aforementioned low-rank property, via minimizing rank of a matrix and sparsity of a matrix (for decomposing out sparse errors), to simultaneously optimize a partial permutation matrix (PPM) for each image, and feature correspondences are established by the obtained PPMs (cf. (\ref{EqnPPM}) for the definition of PPM). The so formulated ROML problem concerns with simultaneous optimization of multiple PPMs, which belongs to a more general class of multi-index assignment problem (MiAP) and is proven to be NP-hard \cite{AssignProbBook}. Exact solution methods are prohibitively slow for practical use. In this work, we treat simultaneous optimization of multiple PPMs involved in ROML as a regularized consensus problem in the context of distributed optimization \cite{BertsekasDistributedOptBook}. We use the Alternating Direction Method of Multipliers (ADMM) \cite{BoydADMM} to solve the ROML problem, in which a subproblem associated with a single PPM optimization appears to be a difficult integer quadratic program (IQP). We prove that under widely applicable conditions, this IQP is equivalent to a linear sum assignment problem (LSAP) \cite{AssignProbBook}, which can be efficiently solved to an exact solution using the Hungarian algorithm \cite{KuhnHungarian}. Extensive experiments on rigid/non-rigid object matching, matching instances of a common object category, and common object localization show the efficacy of our proposed method. A MATLAB implementation of our method and the data used in the experiments can be found from our project website: https://sites.google.com/site/kuijia/research/roml.

A preliminary work of this paper has appeared in \cite{ZinanECCV}. In the present paper, we have made significant improvement over \cite{ZinanECCV} in the following aspects. In addition, we have also completely rewritten the paper to present our ideas more clearly.

\begin{itemize}

\item Although \cite{ZinanECCV} proposes to optimize a set of PPMs via rank and sparsity minimization for robust feature matching, however, its solution of each PPM optimization is obtained by sequentially solving two costly subproblems: a quadratic program over the continuous-domain relaxation of PPM, followed by a binary integer programming that projects the relaxed PPM into its feasible set. In fact, the second subproblem is irrelevant to the original objective function, and consequently, the thus obtained PPM is only suboptimal. In contrast, we propose in the present paper a new method to solve the PPM optimization and prove that under wildly applicable conditions, the PPM optimization step is equivalent to an LSAP, which can be efficiently solved to an exact solution using the Hungarian algorithm. Extensive experiments in Section \ref{ExpObjMatching} show the great advantage of the proposed ROML over the method in \cite{ZinanECCV} in terms of both matching accuracy and efficiency.

\item We present mathematical analysis in this paper to show that the proposed ROML formulation belongs to the NP-hard MiAP. We also discuss the suitability of ADMM for approximately solving ROML from the perspective of distributed optimization. These analysis and discussion put ROML in a broader context, which are overlooked in \cite{ZinanECCV}.



\end{itemize}

\section{Related Works}
\label{LiteratureSec}

There is an intensive literature on object/shape matching between a pair of images \cite{ThirtyYears}. Representative works include shape context \cite{BelongieShapeContext}, graph \cite{LeordeanuSpectralCorrespondence,TorresaniModelAndGlobalOpt} and hyper-graph \cite{HyperGraphReweightedRandomWalk,ZassShashua,TensorHighOrderGraphMatching} matching. In this section, we briefly review several existing methods that use multiple images/point sets for object matching, and also the more general MiAP.

Maciel and Costeira \cite{AGlobalSolution} first proposed to use PPM to model both feature matching and outlier rejection in a set of images. They formulated optimization of PPMs as an integer constrained minimization problem. To solve this combinatorial problem, they relaxed both the objective function and integer constraints, resulting in an equivalent concave minimization problem. However, the complexity of concave minimization is still non-polynomial. Moreover, matching criteria used in the cost function of \cite{AGlobalSolution} were locally based on pair-wise similarity of features in different images. Instead, our method is based on low-rank and sparse minimization (via convex surrogate functions), whose problem size is polynomial w.r.t. the numbers of features and images, and whose cost function is also globally defined over features in all the images.

Rank constraints have been used in \cite{OliveiraRankConstraint,OliveiraAssignmentTensor} for point matching across video frames. They constructed a measurement matrix containing image coordinates of points extracted from a moving rigid object. Motivated by factorization model in shape-from-motion \cite{Rank4FactorShapeFromMotion}, they assumed this measurement matrix was low-rank, and used rank constraints to optimize PPMs for establishing point correspondences across frames. The method in \cite{OliveiraRankConstraint} is limited in several aspects: (1) an initial template of point set without outliers is assumed given; (2) every inlier point is required to be visible in all frames; (3) matching across frames is a bootstrapping process - points in a subsequent frame are to be aligned to those of previously matched frames, thus matching errors will inevitably propagate and accumulate; (4) an initial rough estimate of point correspondences for a new frame is assumed given in their algorithm, which may be only valid for slow motion objects. The aspects (1) and (3) have to some extent been alleviated in \cite{OliveiraAssignmentTensor}, but \cite{OliveiraAssignmentTensor} cannot cope with the other limiting aspects. As a globally consistent and robust matching framework, our method has no such limitations. More importantly, we note that the mechanism of rank constraints used in \cite{OliveiraRankConstraint,OliveiraAssignmentTensor} is different from that of our method. Methods \cite{OliveiraRankConstraint,OliveiraAssignmentTensor} can only apply to matching of rigid objects using image coordinates as features, while our method considers low-rank assumption on a type of generally defined features, which take image coordinates and region descriptors as instances. Consequently, our method is able to apply in more general scenarios, such as matching of objects with non-rigid deformation.

Recently, a low-dimensional embedding method was proposed in \cite{OneShot} for feature matching. Given feature points extracted from each of a set of images, it can learn an embedded feature space, which encodes information of both region descriptors and the geometric structure of points in each image. \cite{OneShot} used k-means clustering in the embedded space for feature matching. As we will show in Section \ref{ExpObjMatching}, k-means based on Euclidean distances of embedded features is not a good way to establish correspondences. There is no explicit outlier rejection mechanism in \cite{OneShot} either. Compared to \cite{OneShot}, our method uses the low-rank and sparse constraints to optimize PPMs, which integrates correspondence and outlier rejection in a single step.

As mentioned in Section \ref{IntroSec}, our ROML formulation for multi-image object matching belongs to a more general class of MiAP for data association \cite{AssignProbBook}, with other vision applications such as multi-target tracking \cite{Collins}. MiAP is proven to be NP-hard, and only implicit enumeration methods such as branch-and-bound are known to give an exact solution, which are however prohibitively slow for practical use. Classical approximate solution methods include greedy, Greedy Randomized Adaptive Search Procedure (GRASP) \cite{GRASP}, and relaxation based methods \cite{NewLagrangianRelaxation}.

Greedy approaches build a matching that has the lowest cost at each iteration, which has the obvious weakness that decisions once made, are fixed and may later be shown to be suboptimal. GRASP improves over greedy approaches by progressively constructing a list of best candidate matches and randomly selecting one from them. The process is repeated until all matches are exhausted. A final local search over the neighborhood of obtained matches may be used to further optimize the solution. In \cite{NewLagrangianRelaxation}, Poore and Robertson presented a Lagrangian relaxation method that progressively relaxes the original and intermediate recovery MiAPs to linear assignment problems, by incorporating constraints of each MiAP into its objective function via the Lagrangian. However, this method involves implicitly enumerative procedure, and is difficult to implement and analyze.

Collins \cite{Collins} recently proposed an iterated conditional modes (ICM) like method for video based multi-target tracking. His method is based on factoring the global decision variable for each target trajectory into a product of local variables defined for a target matching between each pair of adjacent frames. It then pair-wisely builds target matchings between adjacent frames by optimizing the corresponding local variables, but using a global cost function as matching criteria. However, the cost function in \cite{Collins} is defined by enumerating for every possible target trajectory a constant-velocity motion energy, and the number of candidate trajectories grows exponentially with the number of frames. Both factors make it less applicable to the feature-based object matching problem considered in this paper. Nevertheless, our ROML formulation bears some spiritually similar idea with \cite{Collins}, in the sense that we also factor the global decision variables for multi-image feature matching into separate components. The difference is that we factor these global variables as a set of PPMs, each of which is to be optimized to identify inlier features from an image and re-arrange them in a proper order. We then treat the joint optimization of these PPMs as a regularized consensus problem in the context of distributed optimization \cite{BertsekasDistributedOptBook}, and solve it using a ADMM-based method \cite{BoydADMM}. By this means the original NP-hard combinatorial problem boils down as to iteratively solve a set of independent pair-wise matching problems, which turn out to be easily solved.

In the preparation of this paper, we notice that a related method called permutation synchronization \cite{NIPS13MultiWayMatching} was recently proposed, which also addresses the MiAP by optimizing a permutation matrix for each feature set. However, \cite{NIPS13MultiWayMatching} assumes initial matchings between each pair of the feature sets be available, and can only apply in the scenarios where there exist no outliers in each feature set, which make it less useful in the considered problem of feature-based object matching across a set of images.

\section{Robust Object Matching Using Low-Rank and Sparse Constraints}
\label{MainFormSec}

Given a set of $K$ images, we present in this section our problem formulation and algorithm for robust object matching. We consider the settings as stated in Problem 1. Assume $n_{k}$ features $\{ {\bf f}_{i}^{k} \}_{i=1}^{n_{k}}$ be extracted from the $k^{th}$ image, where the feature vector ${\bf f}_{i}^{k} \in \mathbb{R}^{d}$ can be either image coordinates of the feature point, or region descriptors such as SIFT \cite{SIFT} that characterize the local appearance. It can also be a combination of them by low-dimensional embedding \cite{OneShot}. In spite of these multiple choices, for now we generally refer to them as {\it features}. Discussion of different feature types and their applicable spectrums will be presented in Section \ref{FeaDiscussSec}. These $n_{k}$ features are categorized as inliers or outliers. We assume at this moment that there are $n$ inliers in each of the $K$ images, where $n \leq n_{k}$ for $k = 1, \dots, K$. We will discuss the case of missing inliers shortly. In such a setting every $k^{th}$ image is represented as a set of $n_{k}$ features, and the contained object instance is represented as the $n$ inlier features.

\subsection{Problem Formulation}

Note that for inlier features in the $K$ images, it is the feature similarity and geometric compatibility that determine they form an object {\it pattern} and this pattern repeats across the set of images. While similar outlier features may appear in multiple images, they just accidently do so in a random, unstructured way. Our formulation for object matching is essentially motivated by these observations. Concretely, denote $\overline{\bf F}^{k} = [ {\bf f}_{1}^{k}, \dots, {\bf f}_{n}^{k} ] \in \mathbb{R}^{d\times n}$ as the matrix formed by inlier feature vectors in the $k^{th}$ image, so defined are the matrices $\{ \overline{\bf F}^{1}, \dots, \overline{\bf F}^{K} \}$ for all the $K$ images. Assume column vectors in each of these matrices are arrayed in the same order, i.e., inlier features in $\{ \overline{\bf F}^{1}, \dots, \overline{\bf F}^{K} \}$ are respectively corresponded, then the matrix formed by $\overline{\bf D} = [ \mathrm{vec}(\overline{\bf F}^{1}), \dots, \mathrm{vec}(\overline{\bf F}^{K}) ] \in \mathbb{R}^{dn \times K}$ will be approximately low-rank, ideally rank one, where $\mathrm{vec}(\cdot)$ is an operator that vectorizes a matrix by concatenating its column vectors.

Now consider the general case that there are outliers. Denote $ {\bf F}^{k} = [ {\bf f}_{1}^{k}, \dots, {\bf f}_{n_{k}}^{k} ] \in \mathbb{R}^{d\times n_{k} }$ as the matrix having all $n_{k}$ features of the $k^{th}$ image as its columns, where feature vectors are placed in a random order. The matrices $\{ {\bf F}^{1}, \dots, {\bf F}^{K} \}$ for all $K$ images are similarly defined. As aforementioned our interest for object matching is to identify the $n$ inlier feature vectors from each matrix of $\{ {\bf F}^{1}, \dots, {\bf F}^{K} \}$, and establish correspondences among them. For any $k^{th}$ image, this can be realized by the {\it partial permutation matrix} (PPM) defined by
\begin{multline}\label{EqnPPM} {\cal{P}}^{k} = \{ {\bf P}^{k} \in \mathbb{R}^{n_{k}\times n} \big| p_{i,j}^{k} \in \{0, 1\}, \sum_{i}p_{i,j}^{k} = 1 \\  \forall j = 1, \dots, n, \sum_{j}p_{i,j}^{k} \leq 1 \ \forall i = 1, \dots, n_{k}  \} , \end{multline}
where $p_{i,j}^k$ denotes an entry of the PPM $\mathbf{P}^{k}$ at the $i^{th}$ row and $j^{th}$ column. Thus, there exist PPMs $ \{ {\bf P}^{k} \in {\cal{P}}^{k} \}_{k=1}^{K}$ such that inlier feature vectors are selected and corresponded in $\{ {\bf F}^{k}{\bf P}^{k} \in \mathbb{R}^{d\times n} \}_{k=1}^{K}$, i.e., the matrix $[ \mathrm{vec}({\bf F}^{1}{\bf P}^{1}), \dots, \mathrm{vec}({\bf F}^{K}{\bf P}^{K}) ] \in \mathbb{R}^{dn\times K}$ is rank deficient. In the following of this paper, we also use
\begin{equation}\label{EqnDMatrix}
{\bf D}( \{\mathbf{P}^k \}_{k=1}^K ) = [ \mathrm{vec}({\bf F}^{1}{\bf P}^{1}), \dots, \mathrm{vec}({\bf F}^{K}{\bf P}^{K}) ] ,
\end{equation}
to simplify writings of equations. In the context where the values of $\{\mathbf{P}^k \}_{k=1}^K$ are determined, we also write ${\bf D} = [ \mathrm{vec}({\bf F}^{1}{\bf P}^{1}), \dots, \mathrm{vec}({\bf F}^{K}{\bf P}^{K}) ]$. Based on this low-rank assumption, feature correspondence can be formulated as the following problem to optimize $\{{\bf P}^{k} \}_{k=1}^{K}$
\begin{displaymath}
\quad\quad \min_{ \{ {\bf P}^{k} \in {\cal{P}}^{k} \}_{k=1}^{K} } \mathrm{rank}\left( {\bf D}( \{\mathbf{P}^k \}_{k=1}^K ) \right) ,
\end{displaymath}
where $\mathrm{rank}(\cdot)$ is a function to measure matrix rank. By introducing an auxiliary variable $\mathbf{L}$ (to facilitate the development of a solving algorithm), the above problem can also be written as the following equivalent problem
\begin{eqnarray}\label{EqnLowRankForm}
\quad\quad  \min_{ \{ {\bf P}^{k} \in {\cal{P}}^{k} \}_{k=1}^{K} , {\bf L} } \mathrm{rank}({\bf L}) \nonumber \quad\quad \\ \mathrm{s.t.} \ {\bf D}( \{\mathbf{P}^k \}_{k=1}^K ) = {\bf L}  .
\end{eqnarray}

In practice, however, some inlier features characterizing the same local appearance information of object instances in different images could vary due to illumination change, object pose change, or other intra-category object variations. Some inlier features could also be missing due to partial occlusion of object instances. Thus the low-rank assumption used in (\ref{EqnLowRankForm}) cannot be fully satisfied. To improve the robustness, we introduce a sparse error term into (\ref{EqnLowRankForm}) to model all these contaminations, and modify the formulation (\ref{EqnLowRankForm}) as
\begin{eqnarray}\label{EqnLowRankSparseForm}
\quad \min_{ \{ {\bf P}^{k} \in {\cal{P}}^{k} \}_{k=1}^{K} , {\bf L}, {\bf E} } \mathrm{rank}({\bf L}) + \lambda \| {\bf E} \|_{0} \nonumber \\ \mathrm{s.t.} \ {\bf D}( \{\mathbf{P}^k \}_{k=1}^K ) = {\bf L} + {\bf E} ,
\end{eqnarray}
where $\| \cdot \|_{0}$ is $\ell_{0}$-norm counting the number of nonzero entries, and $\lambda > 0$ is a parameter controlling the trade-off between rank of ${\bf L}$ and sparsity of ${\bf E}$.

\subsection{The Algorithm}

The optimization problem (\ref{EqnLowRankSparseForm}) is not directly tractable due to the following aspects: (1) both $\mathrm{rank}(\cdot)$ and $\|\cdot \|_{0}$ are non-convex, discrete-valued functions, minimization of which is NP-hard; (2) entries of $\{ {\bf P}^{k} \}_{k=1}^{K}$ are constrained to be binary, resulting in a difficult nonlinear integer programming problem. To make it tractable, we first consider the recent convention of replacing $\mathrm{rank}(\cdot)$ and $\|\cdot \|_{0}$ with their convex surrogates $\| \cdot \|_{*}$ and $\|\cdot \|_{1}$ respectively \cite{RPCA}, where $\| \cdot \|_{*}$ denotes nuclear norm (sum of the singular values) and $\|\cdot \|_{1}$ is $\ell_{1}$-norm. Applying the same relaxation to (\ref{EqnLowRankSparseForm}) yields
\begin{eqnarray}\label{EqnNuclearL1Form}
\min_{ \{ {\bf P}^{k} \in {\cal{P}}^{k} \}_{k=1}^{K} , {\bf L}, {\bf E} } \| {\bf L} \|_{*} + \lambda \| {\bf E} \|_{1} \nonumber \quad\quad\quad \\ \mathrm{s.t.} \quad {\bf D}( \{\mathbf{P}^k \}_{k=1}^K ) = {\bf L} + {\bf E}, \nonumber \quad\quad \\ {\cal{P}}^{k} = \big\{ {\bf P}^{k} \in \{0, 1\}^{n_{k}\times n} \big| {\bf 1}_{n_{k}}^{\top}{\bf P}^{k} = {\bf 1}_{n}^{\top},  \nonumber \quad \\ {\bf P}^{k}{\bf 1}_{n} \leq {\bf 1}_{n_{k}}  \big\}, \ \forall \ k = 1, \dots, K , \quad
\end{eqnarray}
where we have written the constraints of $\{ {\cal{P}}^{k} \}_{k=1}^{K}$ in matrix form, and ${\bf 1}_{n_{k}}$ (or ${\bf 1}_{n}$) denotes a column vector of length $n_{k}$ (or $n$) with all entry values of $1$. We refer to the problem (\ref{EqnNuclearL1Form}) as our framework of {\it Robust Object Matching using Low-rank (and sparse) constraints} (ROML). Without mentioning we always set the parameter $\lambda = 5/\sqrt{dn}$, which is suggested by the closely related work of Robust Principal Component Analysis (RPCA) \cite{RPCA} \footnote{Suppose we have a data matrix $\mathbf{D} \in \mathbb{R}^{m_1\times m_2}$, which is formed by superposition of a low-rank matrix $\mathbf{L}$ and a sparse matrix $\mathbf{E}$, i.e., $\mathbf{D} = \mathbf{L} + \mathbf{E}$. Assume the low-rank matrix $\mathbf{L}$ is not sparse, and the sparse matrix $\mathbf{E}$ is not low-rank (e.g., the support pattern of $\mathbf{E}$ may be selected uniformly at random).  RPCA proves that the matrices $\mathbf{L}$ and $\mathbf{E}$ can be recovered {\it exactly} via a convex program called Principal Component Pursuit: $\min_{\mathbf{L}_{rpca}, \mathbf{E}_{rpca}} \| \mathbf{L}_{rpca} \|_* + \lambda_{rpca} \| \mathbf{E}_{rpca} \|_1 \ \mathrm{s.t.} \ \mathbf{D} = \mathbf{L}_{rpca} + \mathbf{E}_{rpca}$, provided that the rank of $\mathbf{L}$ is not too large, and that $\mathbf{E}$ is reasonably sparse. Under these assumptions, a {\it universal} choice of the parameter $\lambda_{rpca} = 1/\sqrt{\max(m_1, m_2)}$ is identified in the theoretical analysis of RPCA. In practical data, assumptions used in the theoretical proof of RPCA are not generally satisfied. RPCA suggests setting $\lambda_{rpca} = C/\sqrt{\max(m_1, m_2)}$, where $C$ is a constant which can be adjusted properly to improve performance on practical data.}.

The problem (\ref{EqnNuclearL1Form}) involves jointly optimizing a set of $K$ PPMs. As reviewed in Section \ref{LiteratureSec}, it is an instance of MiAP and proved to be NP-hard, for which approximate solution methods are practically used. To solve (\ref{EqnNuclearL1Form}), note that it is a formulation of regularized consensus problem, where the local variables $\mathrm{vec}({\bf F}^{k}{\bf P}^{k})$ (function of ${\bf P}^{k}$), $k = 1, \dots, K$, in ${\bf D}( \{\mathbf{P}^k \}_{k=1}^K )$ are constrained to be equal to components (column vectors) of the global variable ${\bf L} + {\bf E}$, which is further regularized in the objective function. In literature, consensus problems are popularly solved using ADMM method in the context of distributed optimization \cite{BoydADMM,Bertsekas,BertsekasDistributedOptBook}. The general ADMM method decomposes a global problem into local subproblems that can be readily solved. For consensus problems such as (\ref{EqnNuclearL1Form}), ADMM decomposes optimization of ${\bf L}$, ${\bf E}$, and $\{ {\bf P}^{k} \}_{k=1}^{K}$ into subproblems that update ${\bf L}$, ${\bf E}$, and each of $\{ {\bf P}^{k} \}_{k=1}^{K}$ respectively. Thus joint optimization over $\{ {\bf P}^{k} \}_{k=1}^{K}$ boils down as independent optimization of individual ${\bf P}^{k}$, $k = 1, \dots, K$, in each ADMM iteration. However, the subproblem to update each ${\bf P}^{k}$ concerns with nonlinear integer programming. It is essential to understand the convergence property of ADMM under this condition, which we will discuss in Section \ref{ADMMConvergeAnalysisSec} after presentation of our algorithmic procedure.

We first write the augmented Lagrangian of (\ref{EqnNuclearL1Form}) as
\begin{multline}\label{EqnMainLagrangian}
{\cal{L}}_{\rho}({\bf L}, {\bf E}, \{ {\bf P}^{k} \in {\cal{P}}^{k} \}_{k=1}^{K}, {\bf Y}) = \| {\bf L} \|_{*} + \lambda \| {\bf E} \|_{1} + \\ \frac{\rho}{2} \| {\bf L} + {\bf E} - {\bf D}( \{\mathbf{P}^k \}_{k=1}^K ) + \frac{1}{\rho}{\bf Y} \|_{F}^{2} ,
\end{multline}
where ${\bf Y} \in \mathbb{R}^{dn\times K}$ is a matrix of Lagrange multipliers, $\rho$ is a positive scalar, and $\| \cdot \|_{F}$ denotes the Frobenius norm. The ADMM algorithm iteratively estimates one of the matrices ${\bf L}$, ${\bf E}$, $\{ {\bf P}^{k} \}_{k=1}^{K}$, and the Lagrange multiplier ${\bf Y}$ by minimizing (\ref{EqnMainLagrangian}), while keeping the others fixed. More specifically, our ADMM procedure consists of the following iterations \begin{eqnarray}\label{EqnADMMMainIters} {\bf L}_{t+1} & = & \arg\min_{{\bf L}} {\cal{L}}_{\rho}\big({\bf L}, {\bf E}_{t}, \{ {\bf P}_{t}^{k} \}_{k=1}^{K}, {\bf Y}_{t} \big), \label{EqnMainLagUpdateL} \\ {\bf E}_{t+1} & = & \arg\min_{{\bf E}} {\cal{L}}_{\rho}\big({\bf L}_{t+1}, {\bf E}, \{ {\bf P}_{t}^{k} \}_{k=1}^{K}, {\bf Y}_{t} \big), \label{EqnMainLagUpdateE} \\ \{ {\bf P}_{t+1}^{k} \}_{k=1}^{K} \!\!\!\!\!\!\! & = & \!\!\!\!\!\!\! \mathop{\arg\min}_{ \{ {\bf P}^{k} \in {\cal{P}}^{k} \}_{k=1}^{K} } \!\!\!\!\! {\cal{L}}_{\rho}\big({\bf L}_{t+1}, {\bf E}_{t+1}, \{ {\bf P}^{k} \}_{k=1}^{K}, {\bf Y}_{t} \big), \label{EqnMainLagUpdateP} \\ {\bf Y}_{t+1} & = & {\bf Y}_{t} + \rho\big({\bf L}_{t+1} + {\bf E}_{t+1} - {\bf D}_{t+1}\big) , \label{EqnMainLagUpdateY} \end{eqnarray} where $t$ denotes the iteration number and we compute ${\bf D}_{t+1} = [ \mathrm{vec}({\bf F}^{1}{\bf P}_{t+1}^{1}), \dots, \mathrm{vec}({\bf F}^{K}{\bf P}_{t+1}^{K}) ]$ after step (\ref{EqnMainLagUpdateP}).

The problems (\ref{EqnMainLagUpdateL}) and (\ref{EqnMainLagUpdateE}) for updating the global variables ${\bf L}$ and ${\bf E}$ are both convex programs. They can be explicitly written as the forms of the proximal operators associated with a nuclear norm or an $\ell_{1}$-norm respectively \cite{ZhouchenALM}. To spell out the solutions, define the soft-thresholding operator for scalars as ${\cal{T}_{\tau}}[x] = \mathrm{sign}(x)\cdot \max\{|x|-\tau , 0\}$, with $\tau > 0$. When applied to vectors/matrices, it operates element-wisely. The optimal solution to (\ref{EqnMainLagUpdateL}) and (\ref{EqnMainLagUpdateE}) can be written as \begin{eqnarray}\label{EqnMainLagUpdateLSolution} ({\bf U}, {\bf S}, {\bf V}) & = & \mathrm{svd}\big({\bf D}_{t} - {\bf E}_{t} - \frac{1}{\rho}{\bf Y}_{t}\big), \nonumber \\ {\bf L}_{t+1} & = & {\bf U}{\cal{T}}_{\frac{1}{\rho}}\big[{\bf S}\big]{\bf V}^{\top} , \end{eqnarray} \begin{equation}\label{EqnMainLagUpdateESolution} {\bf E}_{t+1} = {\cal{T}}_{\frac{\lambda}{\rho}}\big[{\bf D}_{t} - {\bf L}_{t+1} - \frac{1}{\rho}{\bf Y}_{t}\big] . \end{equation}
Derivations of the above solution to problems (\ref{EqnMainLagUpdateL}) and (\ref{EqnMainLagUpdateE}) can be found in Appendix \ref{appendix_LESolutionDerivation}.

Optimization of (\ref{EqnMainLagUpdateP}) is more involved than those of (\ref{EqnMainLagUpdateL}) and (\ref{EqnMainLagUpdateE}), mostly because of the binary constraints enforced on the entries of $\{ {\bf P}^{k} \}_{k=1}^{K}$. Given updated variables $\mathbf{L}_{t+1}$, $\mathbf{E}_{t+1}$, and $\mathbf{Y}_t$, we explicitly write the problem (\ref{EqnMainLagUpdateP}) as
\begin{multline}\label{EqnUpdatePExpand} \min_{ \{ {\bf P}^{k} \in {\cal{P}}^{k} \}_{k=1}^{K} } \frac{\rho}{2} \big\| {\bf L}_{t+1} + {\bf E}_{t+1} + \frac{1}{\rho} {\bf Y}_{t} - \\ [ \mathrm{vec}({\bf F}^{1}{\bf P}^{1}), \dots, \mathrm{vec}({\bf F}^{K}{\bf P}^{K}) ] \big\|_{F}^{2} . \end{multline}
We observe that (\ref{EqnUpdatePExpand}) can be decoupled into $K$ independent subproblems, each of which concerns with optimization of one of the local variables $\{ {\bf P}^{k} \}_{k=1}^{K}$. The $k^{th}$ subproblem to update ${\bf P}^{k}$ is written as
\begin{equation}\label{EqnUpdatePExpandSubProblem} \min_{ {\bf P}^{k} \in {\cal{P}}^{k} } \frac{\rho}{2} \big\| \big( {\bf L}_{t+1} + {\bf E}_{t+1} + \frac{1}{\rho} {\bf Y}_{t} \big) {\bf e}_{k} - \mathrm{vec}({\bf F}^{k}{\bf P}^{k}) \big\|_{2}^{2}  , \end{equation}
where ${\bf e}_{k}$ denotes a unit column vector with all entries set to $0$ except the $k^{th}$ one, which is set to $1$. Denote $\theta^{k} = \mathrm{vec}({\bf P}^{k}) \in \mathbb{R}^{nn_{k}}$, ${\bf G}^{k} = {\bf I}_{n} \otimes {\bf F}^{k} \in \mathbb{R}^{dn\times nn_{k}}$, ${\bf J}^{k} = {\bf I}_{n}\otimes {\bf 1}_{n_{k}}^{\top} \in \mathbb{R}^{n\times n n_{k}}$, ${\bf H}^{k} = {\bf 1}_{n}^{\top} \otimes {\bf I}_{n_{k}} \in \mathbb{R}^{n_{k}\times nn_{k}}$, $\otimes$ is the Kronecker product, and ${\bf I}_{n}$ (or ${\bf I}_{n_{k}}$) is the identity matrix of size $n\times n$ (or $n_{k}\times n_{k}$). Using the fact $\mathrm{vec}({\bf XYZ}) = ({\bf Z}^{\top}\otimes {\bf X})\mathrm{vec}({\bf Y})$, we can rewrite (\ref{EqnUpdatePExpandSubProblem}) as the following equivalent problem to update $\theta^{k}$
\begin{multline}\label{EqnThetaSubProbBinary} \min_{\theta^{k}} \frac{\rho}{2}\theta^{k\top}{\bf G}^{k\top}{\bf G}^{k}\theta^{k} - {\bf e}_{k}^{\top}\big[{\bf Y}_{t}^{\top} + \rho\big({\bf L}_{t+1} + {\bf E}_{t+1}\big)^{\top} \big] {\bf G}^{k}\theta^{k} \quad\quad\quad\quad \\ \mathrm{s.t.} \ {\bf J}^{k}\theta^{k} = {\bf 1}_{n}, \ {\bf H}^{k}\theta^{k} \leq {\bf 1}_{n_{k}}, \ \theta^{k} \in \{0, 1\}^{nn_{k}}. \end{multline}
(\ref{EqnThetaSubProbBinary}) appears to be a difficult integer constrained quadratic program. To solve it, a common approach is to relax the constraint set of (\ref{EqnThetaSubProbBinary}) into its convex hull, and then project back the attained continuous-domain results by either thresholding or more complicated methods, which, however, cannot guarantee to get the optimal solution \cite{leordeanu2009integer}. For the ROML problem (\ref{EqnNuclearL1Form}), we assume that distinctive information of each column vector in any ${\bf F}^{k}$ of $\{ {\bf F}^{k} \}_{k=1}^{K}$ is represented by the relative values of its elements, rather than their absolute magnitude. In other words, multiplying each feature vector by a scaling factor does not change the pattern of each feature. Based on this assumption, we prove that (\ref{EqnThetaSubProbBinary}) is equivalent to a linear sum assignment problem \cite{AssignProbBook}.

\begin{theorem}\label{ExactRelaxTheorem}
For the ROML problem (\ref{EqnNuclearL1Form}), assuming distinctive information of each column vector in any ${\bf F}^{k}$ of $\{ {\bf F}^{k} \}_{k=1}^{K}$ is represented by the relative values of its elements, (\ref{EqnThetaSubProbBinary}) is always equivalent to the following formulation of linear sum assignment problem \begin{eqnarray}\label{EqnThetaSubProbRelaxLP} \min_{\theta^{k}} - {\bf e}_{k}^{\top}\big[{\bf Y}_{t}^{\top} + \rho\big({\bf L}_{t+1} + {\bf E}_{t+1}\big)^{\top} \big] {\bf G}^{k}\theta^{k} \nonumber \quad \\ \mathrm{s.t.} \ {\bf J}^{k}\theta^{k} = {\bf 1}_{n}, \ {\bf H}^{k}\theta^{k} \leq {\bf 1}_{n_{k}}, \ \theta^{k} \in \{0, 1\}^{nn_{k}} . \end{eqnarray}
\end{theorem}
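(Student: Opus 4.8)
\emph{Proof proposal.} The plan is to show that the two optimization problems (\ref{EqnThetaSubProbBinary}) and (\ref{EqnThetaSubProbRelaxLP}) have identical constraint sets and objectives differing only by an additive constant, from which equivalence of their minimizers is immediate. Since the linear terms and the constraints in the two formulations are already written identically, everything reduces to proving that the quadratic term $\frac{\rho}{2}\theta^{k\top}{\bf G}^{k\top}{\bf G}^{k}\theta^{k}$ is constant over the feasible set $\{\theta^{k}\in\{0,1\}^{nn_{k}} : {\bf J}^{k}\theta^{k}={\bf 1}_{n},\ {\bf H}^{k}\theta^{k}\leq{\bf 1}_{n_{k}}\}$.

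First I would translate this quadratic form back into the language of ${\bf P}^{k}$. Using ${\bf G}^{k}={\bf I}_{n}\otimes{\bf F}^{k}$ together with the identity $\mathrm{vec}({\bf XYZ})=({\bf Z}^{\top}\otimes{\bf X})\mathrm{vec}({\bf Y})$, one has ${\bf G}^{k}\theta^{k}=({\bf I}_{n}\otimes{\bf F}^{k})\mathrm{vec}({\bf P}^{k})=\mathrm{vec}({\bf F}^{k}{\bf P}^{k})$, hence $\theta^{k\top}{\bf G}^{k\top}{\bf G}^{k}\theta^{k}=\|{\bf F}^{k}{\bf P}^{k}\|_{F}^{2}$. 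Next, the constraints defining ${\cal P}^{k}$ force each of the $n$ columns of ${\bf P}^{k}$ to be a standard basis vector of $\mathbb{R}^{n_{k}}$, and the row constraint ${\bf H}^{k}\theta^{k}\leq{\bf 1}_{n_{k}}$ forces those columns to be pairwise distinct. Therefore ${\bf F}^{k}{\bf P}^{k}$ is obtained from ${\bf F}^{k}$ by selecting $n$ distinct columns (and reordering them), so $\|{\bf F}^{k}{\bf P}^{k}\|_{F}^{2}=\sum_{i\in{\cal I}}\|{\bf f}_{i}^{k}\|_{2}^{2}$, where ${\cal I}\subseteq\{1,\dots,n_{k}\}$ with $|{\cal I}|=n$ is the index set picked out by ${\bf P}^{k}$.

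The crucial step is to invoke the stated assumption. Because the distinctive information of each feature vector resides in the relative values of its entries, we may rescale each column of every ${\bf F}^{k}$ arbitrarily without changing the matching problem; the natural choice is to normalize each ${\bf f}_{i}^{k}$ to unit $\ell_{2}$ norm as a preprocessing step. Under this normalization $\|{\bf f}_{i}^{k}\|_{2}^{2}=1$ for every $i$, whence $\|{\bf F}^{k}{\bf P}^{k}\|_{F}^{2}=n$ for \emph{every} feasible ${\bf P}^{k}$, regardless of which inliers are selected. Consequently the objective of (\ref{EqnThetaSubProbBinary}) equals $\frac{\rho n}{2}$ plus the objective of (\ref{EqnThetaSubProbRelaxLP}) on their common feasible set, which establishes the claimed equivalence. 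I would finish by noting that (\ref{EqnThetaSubProbRelaxLP}) is exactly a (rectangular) linear sum assignment problem: the constraint matrix is totally unimodular, so the integrality requirement is automatically attained at vertices of the relaxed polytope, and the problem is solved exactly by the Hungarian algorithm.

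The part requiring the most care is not the algebra but the logical role of the scaling assumption: without normalization, $\sum_{i\in{\cal I}}\|{\bf f}_{i}^{k}\|_{2}^{2}$ genuinely depends on ${\cal I}$, so the quadratic term is not constant and the reduction would be only approximate. I would therefore be explicit that the equivalence is \emph{exact} precisely because the unit-norm preprocessing, licensed by the assumption, makes the quadratic form identically constant on the feasible set; and I would carefully re-verify the Kronecker-product manipulation $({\bf I}_{n}\otimes{\bf F}^{k})\mathrm{vec}({\bf P}^{k})=\mathrm{vec}({\bf F}^{k}{\bf P}^{k})$ and the ``$n$ distinct columns'' consequence of the PPM constraints, since a stray transpose or an off-by-one there is the only place the argument could slip.
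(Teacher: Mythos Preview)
Your proposal is correct and follows essentially the same route as the paper's own proof: identify the quadratic term as $\|{\bf F}^{k}{\bf P}^{k}\|_{F}^{2}$ via the Kronecker identity, observe that the PPM constraints make this a sum of $n$ distinct column norms, and then invoke the normalization licensed by the assumption to render that sum constant over the feasible set. The only minor embellishment in the paper is that it singles out the case $n_{k}=n$, where the quadratic term is constant even without normalization; your remark on total unimodularity is extra but harmless.
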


\begin{proof}
We prove the equivalence by showing that, under the considered assumption for the ROML problem (\ref{EqnNuclearL1Form}), the objective function of (\ref{EqnThetaSubProbBinary}) is equivalent to a linear function, as written in (\ref{EqnThetaSubProbRelaxLP}), which together with the constraints of (\ref{EqnThetaSubProbRelaxLP}), turns out to be a formulation of LSAP. Denote ${\bf p}_{j}^{k} \in \mathbb{R}^{n_{k}}$, $j = 1, \dots, n$, as columns of PPM ${\bf P}^{k}$. From the definitions of ${\bf G}^{k}$ and $\theta^{k}$, it is straightforward to show that \begin{equation}\label{EqnExactRelaxTheoremEqn1} {\bf G}^{k}\theta^{k} = \mathrm{vec}({\bf F}^{k}{\bf P}^{k}) = \big[({\bf F}^{k}{\bf p}_{1}^{k})^{\top}, \dots, ({\bf F}^{k}{\bf p}_{n}^{k})^{\top}\big]^{\top} \end{equation} Since ${\bf F}^{k} = [{\bf f}_{1}^{k}, \dots, {\bf f}_{n_{k}}^{k}] \in \mathbb{R}^{d\times n_{k}}$, and from the constraints of ${\bf P}^{k}$ (explicitly stated in (\ref{EqnNuclearL1Form})), it is clear that each subvector ${\bf F}^{k}{\bf p}_{j}^{k}$, $j = 1, \dots, n$, of (\ref{EqnExactRelaxTheoremEqn1}) selects one column feature vector from ${\bf F}^{k}$, with a unique index from the set $\{1, \dots, n_{k} \}$. From (\ref{EqnExactRelaxTheoremEqn1}) we also have \begin{equation}\label{EqnExactRelaxTheoremEqn2} \theta^{k\top}{\bf G}^{k\top}{\bf G}^{k}\theta^{k} = \| {\bf G}^{k}\theta^{k} \|_{2}^{2} = \sum_{j=1}^{n}\| {\bf F}^{k}{\bf p}_{j}^{k}\|_{2}^{2} .\end{equation} In case of $n_{k} = n$, i.e., there exist no outliers in the considered feature-based object matching, (\ref{EqnExactRelaxTheoremEqn2}) is equal to a constant value no matter what feasible ${\bf P}^{k}$ or $\theta^{k}$ is used. In the more general case of $n_{k} > n$, since information of each of the feature vectors ${\bf f}_{i}^{k}$, $i = 1, \dots, n_{k}$, is preserved by relative values of its elements, we can always normalize them so that they have an equal Euclidean norm, i.e., $\| {\bf f}_{1}^{k} \|_{2} = \cdots = \| {\bf f}_{n_{k}}^{k} \|_{2} = c_{k}$. And (\ref{EqnExactRelaxTheoremEqn2}) is again equal to a constant value no matter what feasible ${\bf P}^{k}$ or $\theta^{k}$ is used. We thus finish the proof. \qed
\end{proof}

The LSAP (\ref{EqnThetaSubProbRelaxLP}) can be exactly and efficiently solved using a rectangular-matrix variant of the Hungarian algorithm \cite{AssignProbBook}. After solving $K$ (\ref{EqnThetaSubProbRelaxLP})-like problems for $k = 1, \dots, K$, we get the updates of $\{ \theta_{t+1}^{k} \}_{k=1}^{K}$ and compute ${\bf D}_{t+1} = \big[ {\bf G}^{1}\theta_{t+1}^{1}, \dots, {\bf G}^{K}\theta_{t+1}^{K}\big]$. The Lagrange multiplier matrix ${\bf Y}_{t+1}$ is then updated using (\ref{EqnMainLagUpdateY}). Our ADMM procedure iteratively performs the steps (\ref{EqnMainLagUpdateL}), (\ref{EqnMainLagUpdateE}), (\ref{EqnMainLagUpdateP}), and (\ref{EqnMainLagUpdateY}), until a specified stopping condition is satisfied. Normally, the primal and dual residuals can be used as the stopping criteria \footnote{For the ROML problem (\ref{EqnNuclearL1Form}), the primal residual is ${\cal{R}}_{pri.}^{t+1} = {\bf L}^{t+1} + {\bf E}^{t+1} - {\bf D}^{t+1}$, and the dual residuals are ${\cal{R}}_{dual,{\bf L}}^{t+1} = \rho ( {\bf E}^{t} + {\bf D}^{t} - {\bf E}^{t+1} - {\bf D}^{t+1}) $ (w.r.t. the variable ${\bf L}$) and ${\cal{R}}_{dual,{\bf E}}^{t+1} = \rho ({\bf D}^{t} - {\bf D}^{t+1})$ (w.r.t. the variable ${\bf E}$). }. To improve the convergence, a common practice is to use a monotonically increasing sequence of $\{ \rho_{t} \}$. We also adopt this strategy. The pseudocode of our algorithm is summarized in Algorithm \ref{MainAlgm} \footnote{We note that the obtained solution $\{ {\bf P}^{k} \}_{k=1}^{K}$ by solving the ROML problem (\ref{EqnNuclearL1Form}) using Algorithm \ref{MainAlgm} belongs to a group of equivalent solutions, since there is no constraint on the order of columns in any of $\{ {\bf P}^{k} \}_{k=1}^{K}$. It is always easy to transform $\{ {\bf P}^{k} \}_{k=1}^{K}$ to some canonical form, e.g., by permuting columns of each of $\{ {\bf P}^{k} \}_{k=1}^{K}$ according to sorted image coordinates of feature points in any of the $K$ images. Without loss of generality we assume the solution $\{ {\bf P}^{k} \}_{k=1}^{K}$ given by Algorithm \ref{MainAlgm} has been transformed to some canonical form for ease of evaluation.}.

\begin{algorithm}[t]

{\footnotesize

\SetKwInOut{Input}{input}
\SetKwInOut{Output}{output}

\Input{Feature vectors ${\bf F}^{k} = [{\bf f}_{1}^{k}, \dots, {\bf f}_{n_{k}}^{k}] \in \mathbb{R}^{d\times n_{k}}$ (normalized as $\| {\bf f}_{1}^{k} \|_{2} = \cdots = \| {\bf f}_{n_{k}}^{k} \|_{2} = c_{k}$ when there exist outliers), $k = 1, \dots, K$, the number $n$ of inliers, weight $\lambda > 0$, and initialization of $\{ {\bf P}_{0}^{k} \in {\cal{P}}^{k} \}_{k=1}^{K}$, ${\bf L}_{0} = 0$, ${\bf E}_{0} = 0$, ${\bf Y}_{0} = 0$, and $\rho_{0} > 0$. }

\While{not converged}{

$({\bf U}, {\bf S}, {\bf V}) = \mathrm{svd}\big({\bf D}_{t} - {\bf E}_{t} - \frac{1}{\rho_{t}}{\bf Y}_{t}\big)$.

${\bf L}_{t+1} = {\bf U}{\cal{T}}_{\frac{1}{\rho_{t}}}\big[{\bf S}\big]{\bf V}^{\top}$.

${\bf E}_{t+1} = {\cal{T}}_{\frac{\lambda}{\rho_{t}}}\big[{\bf D}_{t} - {\bf L}_{t+1} - \frac{1}{\rho_{t}}{\bf Y}_{t}\big]$.

\For{each $k$}{

let $\theta_{t}^{k} = \mathrm{vec}({\bf P}_{t}^{k})$, ${\bf G}^{k} = {\bf I}_{n} \otimes {\bf F}^{k}$, ${\bf J}^{k} = {\bf I}_{n}\otimes {\bf 1}_{n_{k}}^{\top}$, ${\bf H}^{k} = {\bf 1}_{n}^{\top} \otimes {\bf I}_{n_{k}}$, solve the LSAP problem (\ref{EqnThetaSubProbRelaxLP}) to get the update $\theta_{t+1}^{k}$.

}

${\bf D}_{t+1} = \big[ {\bf G}^{1}\theta_{t+1}^{1}, \dots, {\bf G}^{K}\theta_{t+1}^{K}\big]$.

${\bf Y}_{t+1} = {\bf Y}_{t} + \rho_{t}\big({\bf L}_{t+1} + {\bf E}_{t+1} - {\bf D}_{t+1}\big)$.

$\rho_{t+1} \leftarrow \rho_{t}$.

$t \leftarrow t+1$.

}

\Output{ solution $\{ {\bf P}_{t}^{k} \}_{k=1}^{K}$, ${\bf L}_{t}$, and ${\bf E}_{t}$.  }

\caption{ Solving ROML by ADMM } \label{MainAlgm}

}
\end{algorithm}

\subsubsection{Discussion of Solving ROML Using ADMM}
\label{ADMM4ROMLDiscussSec}

Solving the ROML formulation (\ref{EqnNuclearL1Form}) establishes $n$ sets of consistent feature correspondences across the given $K$ images. In other words, it aims to find $n$ ``good'' ones out of the total $\frac{1}{n!}( \frac{n_{k}!}{ (n_{k}-n)!} )^K$ feasible solutions, assuming $n_{1} = \cdots = n_{k} = \cdots = n_{K}$. As reviewed in Section \ref{LiteratureSec}, ROML belongs to the more general class of MiAP. To see how ROML relates to MiAP, we write the standard MiAP formulation \cite{AssignProbBook} for the considered feature-based object matching problem as
\begin{eqnarray}\label{EqnGeneralMiAPForm} \min_{ \{ z_{ i_{1},i_{2},\dots,i_{K} } \} } \sum_{i_{1}=1}^{n_{1}}\sum_{i_{2}=1}^{n_{2}}\cdots\sum_{i_{K}=1}^{n_{K}} a_{ i_{1},i_{2},\dots,i_{K} } z_{ i_{1},i_{2},\dots,i_{K} } \nonumber \\ \mathrm{s.t.} \sum_{i_{2}=1}^{n_{2}}\sum_{i_{3}=1}^{n_{3}}\cdots\sum_{i_{K}=1}^{n_{K}} z_{ i_{1},i_{2},\dots,i_{K} } \leq 1 \ , \ i_{1} = 1, \dots, n_{1} \ \ \nonumber \\  \sum_{i_{1}=1}^{n_{1}}\sum_{i_{3}=1}^{n_{3}}\cdots\sum_{i_{K}=1}^{n_{K}} z_{ i_{1},i_{2},\dots,i_{K} } \leq 1 \ , \ i_{2} = 1, \dots, n_{2} \ \ \nonumber \\  \vdots \qquad\qquad\qquad\qquad \ \ \nonumber \\ \sum_{i_{1}=1}^{n_{1}}\sum_{i_{2}=1}^{n_{2}}\cdots\sum_{i_{K-1}=1}^{n_{K-1}} z_{ i_{1},i_{2},\dots,i_{K} } \leq 1 \ , \ i_{K} = 1, \dots, n_{K} \nonumber \\ \sum_{i_{1}=1}^{n_{1}}\sum_{i_{2}=1}^{n_{2}}\cdots\sum_{i_{K}=1}^{n_{K}} z_{ i_{1},i_{2},\dots,i_{K} } = n \ , \ z_{ i_{1},i_{2},\dots,i_{K} } \in \{0, 1\} ,  \end{eqnarray}
where $i_{k} \in \{1, \dots, n_{k}\}$ indexes the $n_{k}$ feature vectors extracted from the $k^{th}$ image, $k = 1, \dots, K$. The global decision variable $z_{ i_{1},i_{2},\dots,i_{K} }$ is equal to $1$ when the corresponding feature points are matched across the $K$ images, with each feature from one of the $K$ images, and $a_{ i_{1},i_{2},\dots,i_{K} }$ denotes the cost of this matching. By factoring/reformulating the set of global decision variables $\{ z_{ i_{1},i_{2},\dots,i_{K} } \}$ as PPMs $\{ {\bf P}^{k} \}_{k=1}^{K}$ defined by (\ref{EqnPPM}), we get the following equivalent problems
\begin{eqnarray}\label{EqnMiAP2ROMLForm1} \min_{ \{ {\bf P}^{k} \in {\cal{P}}^{k} \}_{k=1}^{K} } \sum_{j=1}^{n} \sum_{i_{1}=1}^{n_{1}}\cdots\sum_{i_{K}=1}^{n_{K}} a_{ i_{1},\dots,i_{K} } \prod_{k=1}^{K}p_{i_{k},j}^{k} , \end{eqnarray}
\begin{eqnarray}\label{EqnMiAP2ROMLForm2} \min_{ \{ {\bf P}^{k} \in {\cal{P}}^{k} \}_{k=1}^{K} } \sum_{j=1}^{n} \sum_{i_{1}=1}^{n_{1}}\cdots\sum_{i_{K}=1}^{n_{K}} \big\| [ {\bf f}_{i_{1}}^{1}, \dots, {\bf f}_{i_{K}}^{K} ] \big\|_{*} \prod_{k=1}^{K}p_{i_{k},j}^{k} , \end{eqnarray}
\begin{eqnarray}\label{EqnMiAP2ROMLForm3} \min_{ \{ {\bf P}^{k} \in {\cal{P}}^{k} \}_{k=1}^{K} } \sum_{j=1}^{n} \big\| [ {\bf F}^{1}{\bf p}_{j}^{1}, \dots, {\bf F}^{K}{\bf p}_{j}^{K} ] \big\|_{*} , \end{eqnarray}
where $j \in \{1, \dots, n\}$ indexes the $n$ inlier matches, and $p_{i_{k}, j}^{k}$ and ${\bf p}_{j}^{k}$ denote the $(i_{k}, j)$ entry and $j^{th}$ column of ${\bf P}^{k}$ respectively. In (\ref{EqnMiAP2ROMLForm2}) and (\ref{EqnMiAP2ROMLForm3}), we have used nuclear norm of the matrix formed by a candidate match of $K$ feature vectors as the cost coefficient $a_{ i_{1},\dots,i_{K} }$. As an instance of MiAP, jointly optimizing the set of PPMs in the above equivalent problems is NP-hard. Approximate methods are thus important to get practically meaningful solutions. In fact, due to inevitable noise in cost coefficients, e.g., that generated by various variations of object instances in different images, it is often sufficient to find suboptimal solutions that are within the noise level of the optimal one.

To understand how we have developed an approximate method in preceding sections, we slightly modify (\ref{EqnMiAP2ROMLForm3}) by vertically arraying the $n$ matrices $[ {\bf F}^{1}{\bf p}_{j}^{1}, \dots,$ ${\bf F}^{K}{\bf p}_{j}^{K} ]$, $j = 1, \dots, n$, as a bigger matrix, resulting in the following problem to optimize $\{ {\bf P}^{k} \}_{k=1}^{K}$
\begin{eqnarray}\label{EqnMiAP2ROMLForm4} \min_{ \{ {\bf P}^{k} \in {\cal{P}}^{k} \}_{k=1}^{K} } \big\| [ \mathrm{vec}({\bf F}^{1}{\bf P}^{1}), \dots, \mathrm{vec}({\bf F}^{K}{\bf P}^{K}) ] \big\|_{*} , \end{eqnarray}
which turns out to be equivalent to a nuclear norm relaxed version of (\ref{EqnLowRankForm}). Indeed, by introducing the global variable ${\bf L}$ in (\ref{EqnLowRankForm}), and also the global variables ${\bf L}$ and ${\bf E}$ in (\ref{EqnLowRankSparseForm}) and (\ref{EqnNuclearL1Form}) for a robust extension, we essentially formulate the multi-image object matching version of MiAP as a regularized consensus problem \cite{BertsekasDistributedOptBook}, with $\| {\bf L} \|_{*} + \lambda\| {\bf E} \|_{1}$ as the regularization term. It becomes well suited to be solved using distributed optimization methods such as ADMM. As in the presented ADMM procedure (\ref{EqnMainLagUpdateL})-(\ref{EqnMainLagUpdateY}), the ``fusion'' steps (\ref{EqnMainLagUpdateL}) and (\ref{EqnMainLagUpdateE}) collect information of the $t^{th}$ iteration $\{ {\bf P}_{t}^{k} \}_{k=1}^{K}$ to update ${\bf L}$ and ${\bf E}$, and the ``broadcast'' step (\ref{EqnMainLagUpdateP}) independently updates each ${\bf P}^{k}$ of $\{ {\bf P}^{k} \}_{k=1}^{K}$, using the updated fusion centers ${\bf L}_{t+1}$ and ${\bf E}_{t+1}$. Our proposed ADMM method thus belongs to a strategy of ``fusion-and-broadcast'', for ROML and the more general MiAP.

\subsubsection{Convergence Analysis}
\label{ADMMConvergeAnalysisSec}

The ADMM method is proven to converge to global optimum under some mild conditions for linearly constrained convex problem whose objective function is separable into two individual convex functions with non-overlapping variables (see \cite{ConvergRef3} and references therein). In our case, the ROML problem (\ref{EqnNuclearL1Form}) is nonconvex, due to the binary constraint associated with $\{ {\bf P}^{k} \}_{k=1}^{K}$. The convergence property of ADMM for nonconvex problems is still an open question in theory. However, it is not uncommon to see that ADMM has served a powerful heuristic for some nonconvex problems in practice \cite{ConvergRef6,ConvergRef7}. In the following, we present simulated experiments that demonstrate the excellent convergence property of ADMM for ROML.

Specifically, we generated synthetically $K = 30$ groups of vectors simulating extracted feature vectors from $K$ images, with dimension of each vector ${\bf f}$ as $d = 50$. There were both inlier and outlier feature vectors in each group. The inliers were produced by randomly generating $d$-dimensional vectors whose entries were drawn from i.i.d. normal distribution, and were shared in each of the $K$ groups. The outliers were similarly produced by randomly generating $d$-dimensional vectors following i.i.d. normal distribution, but were independently generated for each group. We then added sparse errors of large magnitude to both inlier and outlier vectors. For each vector ${\bf f}$, the error values were uniformly drawn from the range $[-2\max (\mathrm{abs} ({\bf f})), 2\max (\mathrm{abs} ({\bf f}))]$. Finally, we normalized all vectors to constant $\ell_{2}$-norm to fit with our algorithmic settings. We fixed the number of inliers in each group as $n = 10$, and investigated the convergence and recovery properties of our algorithm under varying numbers of outliers and ratios of sparse errors. The number of outliers in each group was ranged in $[0, 40]$, and the ratio of sparse errors in each vector was ranged in $[0, 0.8]$. Denote the ground truth PPMs of any test setting as $\{ {{\bf P}^{k}}^{*} \}_{k=1}^{K}$, and the recovered PPMs as $\{ {\bf P}^{k} \}_{k=1}^{K}$. The recovery rate is computed as $\sum_{k=1}^{K} \| {\bf P}^{k} \circ {{\bf P}^{k}}^{*} \|_{0} \big/ \sum_{k=1}^{K} \| {{\bf P}^{k}}^{*} \|_{0}$, where $\circ$ is Hadamard product. For each setting of outlier number and sparse error ratio, we run $5$ random trials and averaged the results. Figure \ref{ConvergSimuFigs}-(b) reports the recovery rates under different settings, which shows that our algorithm works perfectly in a large range of outlier numbers and ratios of sparse errors. For one of them (the outlier number is $32$ and sparse error ratio is $0.4$), we plot in Figure \ref{ConvergSimuFigs}-(a) its convergence curves of $5$ random trials in terms of the primal residual ($\| {\bf L} + {\bf E} - {\bf D} \|_{F}$) and objective function ($ \| {\bf L} \|_{*} + \lambda \| {\bf E} \|_{1}  $). Convergence properties under other settings are similar to Figure \ref{ConvergSimuFigs}-(a).

\begin{figure}[t]
\centering

\includegraphics[scale=0.22]{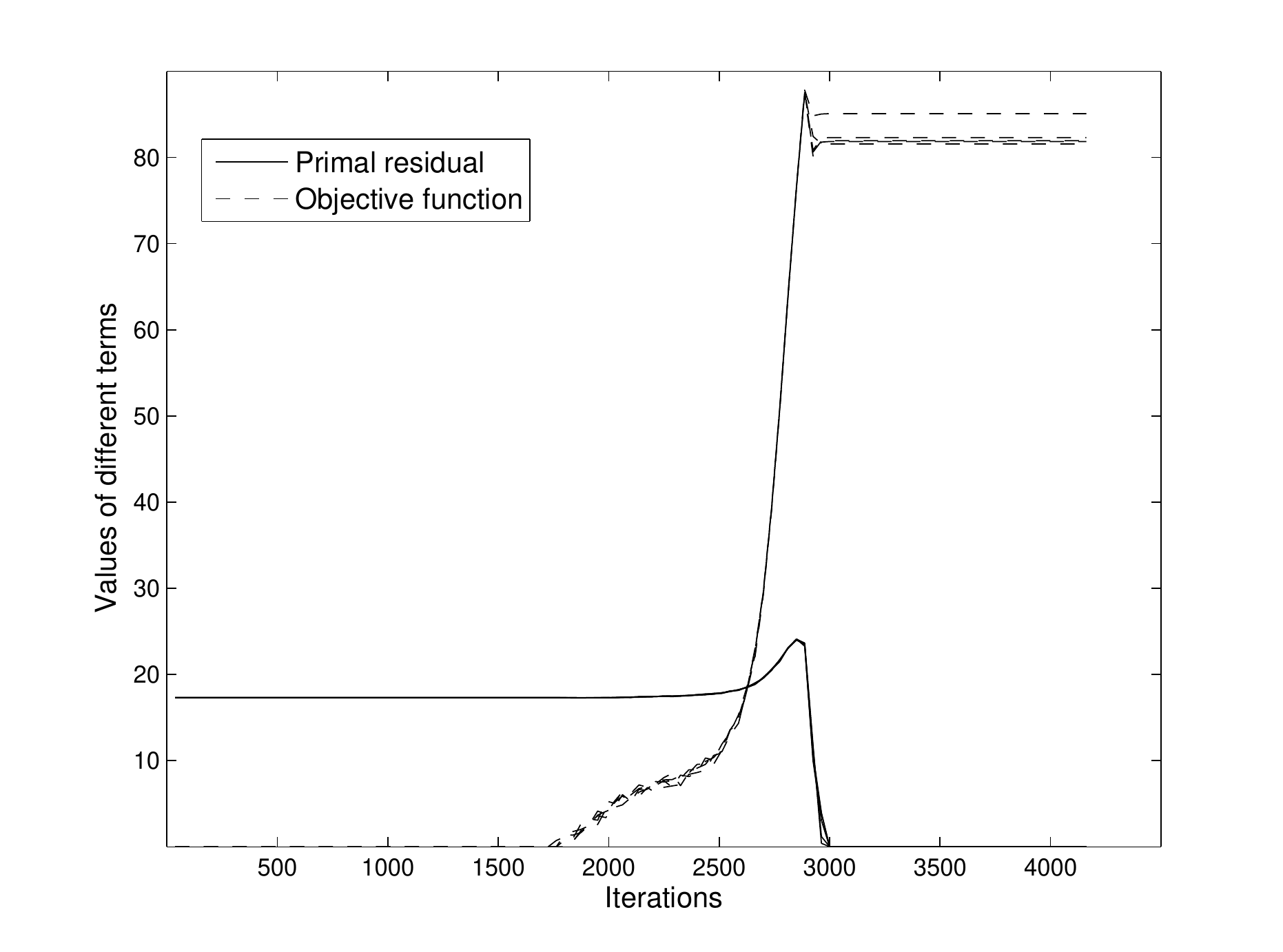}
\includegraphics[scale=0.22]{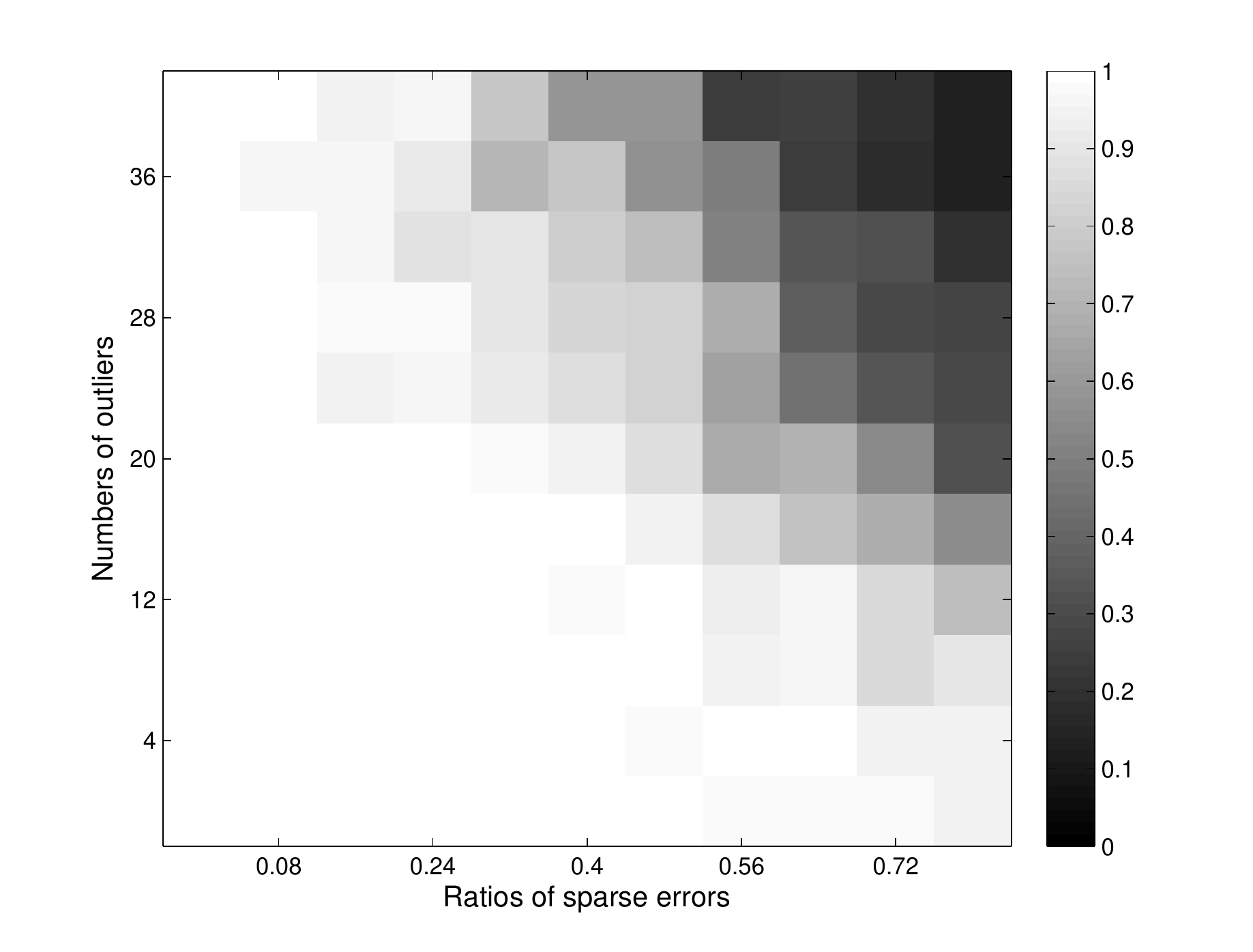} \\

\hfill\hfill (a) \hfill\hfill (b)  \hfill\hfill\hfill

\caption{ {\small Simulation of Algorithm \ref{MainAlgm}, using $30$ groups of synthetic feature vectors. The number of inliers in each group is fixed as $10$. The number of outliers in each group is ranged in $[0, 40]$, and the ratio of sparse errors in each vector is ranged in $[0, 0.8]$. Each setting of outlier number and sparse error ratio is tested with $5$ random trials. (a) convergence plot in terms of the primal residual and objective function for $5$ random trials of one test setting (the outlier number is $32$ and sparse error ratio is $0.4$); (b) recovery rates under different settings of outlier number and sparse error ratio, obtained by averaging over $5$ random trials for each test setting.}  }  \label{ConvergSimuFigs}
\end{figure}

\subsubsection{Computational Complexity}
\label{ComplexityAnalysisSec}

For ease of analysis we assume here $n_{1} = \cdots = n_{k} = \cdots = n_{K} > n$. Using an efficient implementation of the Hungarian algorithm \cite{AssignProbBook}, the complexity for solving the LSAP is ${\cal{O}}\big( n_{k}^{3} \big)$. The overall complexity for each iteration of Algorithm \ref{MainAlgm} is ${\cal{O}}\big( Kn_{k}^{3} + Kdn^{2}n_{k} + K^{2}dn \big)$. The number of iterations for Algorithm \ref{MainAlgm} to converge depends on the initial value of $\rho_{0}$ and the factor at which $\rho_{t}$ increases after each iteration. If $\rho_{t}$ increases too fast, it has the risk of converging to worse local optima \cite{ZhouchenALM}. In our experiments, without mentioning we always set $\rho_{0} = 1\mathrm{e}^{-4}$ and increase it iteratively with a factor of $1.001$. Under this setting, it normally takes about $3000$ iterations for Algorithm \ref{MainAlgm} to converge. In Section \ref{ExpObjMatching}, we also report practical computation time of our method and compare with that of competing ones.

\subsubsection{Estimating the Number of Inliers}
\label{InlierNumEstSec}


Up to now we have assumed that the number of inliers is known for a given image set. This might be a strong assumption. In order to investigate how performance of ROML is influenced by this prior knowledge, we conducted simulated experiments that take different values of $n$, assuming different numbers of inliers, as inputs of Algorithm \ref{MainAlgm}. More specifically, we generated $K = 30$ groups of synthetic feature vectors simulating extracted features from $K$ images, similarly as did in Section \ref{ADMMConvergeAnalysisSec}. There were $n_k = 30$, $k = 1, \dots, K$, feature vectors including both inliers and outliers in every $k^{th}$ group, with dimension of each vector ${\bf f}$ as $d = 50$. The ground truth number of inliers was $10$ in each group. The inlier feature vectors were generated by randomly drawing as vector entries from i.i.d. normal distribution, and were shared in each of the $K$ groups. The outliers were similarly produced, but were independently generated for each group. We then added sparse errors of large magnitude to both inlier and outlier vectors. For each vector ${\bf f}$, the error values were uniformly drawn from the range $[-2\max (\mathrm{abs} ({\bf f})), 2\max (\mathrm{abs} ({\bf f}))]$. We finally normalized all vectors to constant $\ell_{2}$-norm. In this investigation, we considered the test settings that the ratio of sparse errors in each feature vector was either $0.2$ or $0.4$.

By setting integer values of $n$ increasingly from $n = 1$ to $n = n_k$, we compared in each of the $n_k$ cases the identified and matched feature vectors from the $K$ groups by ROML (via Algorithm \ref{MainAlgm}), with the ground truth inlier features distributed in these $K$ groups. We introduce two measures to quantify these comparisons, namely {\it Match Ratio} and {\it Identification Ratio} \footnote{From any given $K$ groups of feature vectors, we can enumerate $\frac{K!}{(K-2)!2!}$ pairs of groups. For each group pair, assume ROML (or other feature matching methods) identify $n$ pair-wise correspondences between feature vectors of these two groups. Denote the number of {\it ground truth correspondences} between inlier features of these two groups as $n^*$, and the number of {\it identified ground truth correspondences} by ROML (or other feature matching methods) as $\bar{n}$. We define {\it Match Ratio} as $\sum \bar{n} / \sum n$, and {\it Identification Ratio} as $\sum \bar{n} / \sum n^*$, where $\sum$ sums over all the $\frac{K!}{(K-2)!2!}$ group pairs. Note that when $n = n^*$, i.e., setting value of $n$ as the ground truth number of inliers, the two introduced measures give equal results. Our definitions of the two measures using notions of pair-wise matching are to facilitate comparisons between ROML and traditional feature matching methods, such as graph matching \cite{TorresaniModelAndGlobalOpt,GraphReweightedRandomWalk,LeordeanuSpectralCorrespondence}, which work in the setting of matching between a pair of images/groups. Section \ref{ExpObjMatching} presents such comparisons.}, which are similar to the measures of {\it precision} and {\it recall} in information retrieval. Figure \ref{InlierEstSimuFigs}-(a) reports results of Match Ratio and Identification Ratio, which were obtained by averaging over $5$ random trials for each setting of sparse error ratio. Figure \ref{InlierEstSimuFigs}-(a) suggests that when the value of $n$ is close to the ground truth number of inliers, ROML performs better in terms of giving a balanced result of Match Ratio and Identification Ratio, which verifies the importance of knowing the true number of inliers when applying ROML to feature-based object matching. Unfortunately, in many practical applications such as object recognition or 3D reconstruction, this information is usually unknown for a given image set. It becomes essential to develop a mechanism to estimate the number of inliers, in order to apply ROML to these practical problems.

\begin{figure}[t]
\centering

\includegraphics[scale=0.22]{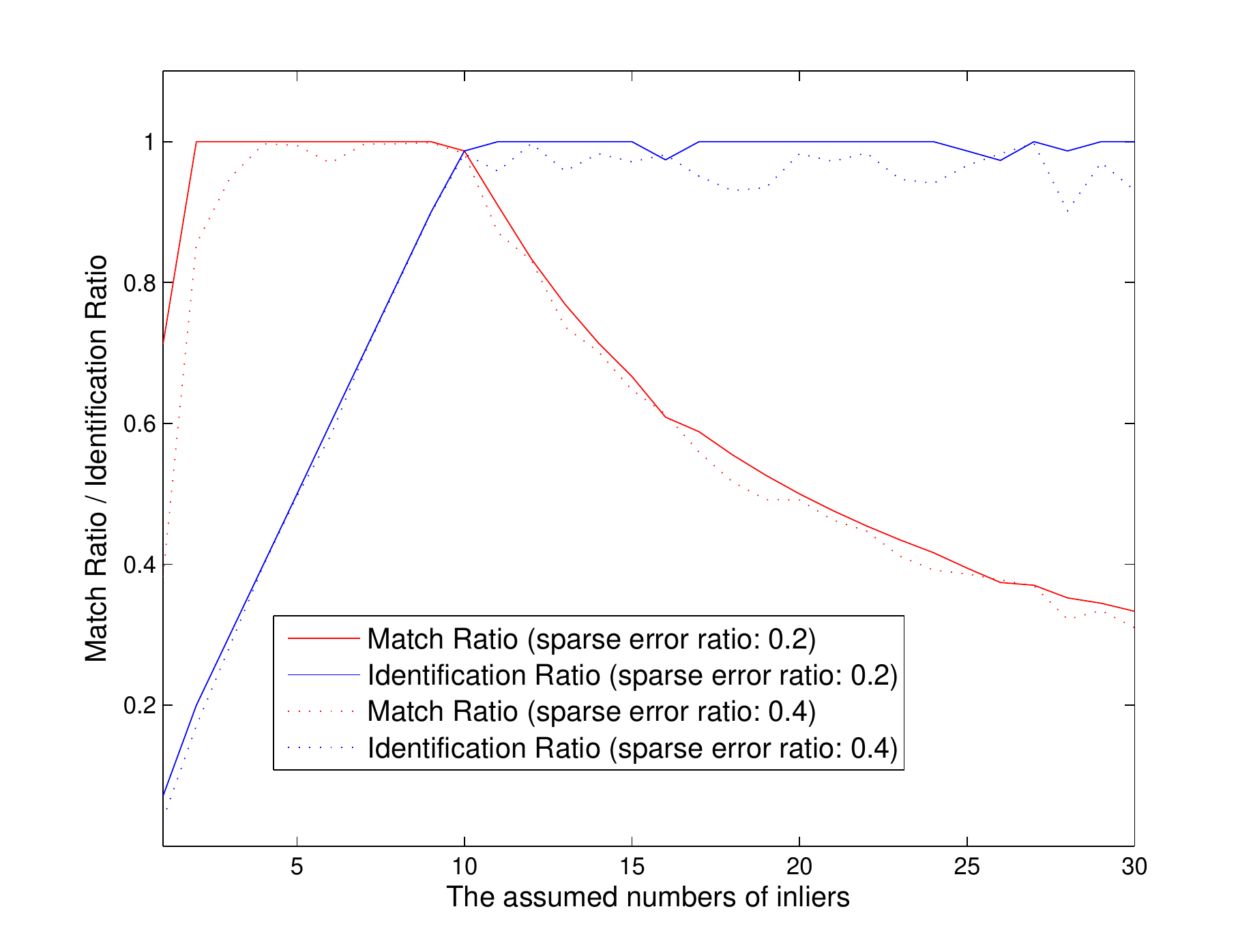}
\includegraphics[scale=0.22]{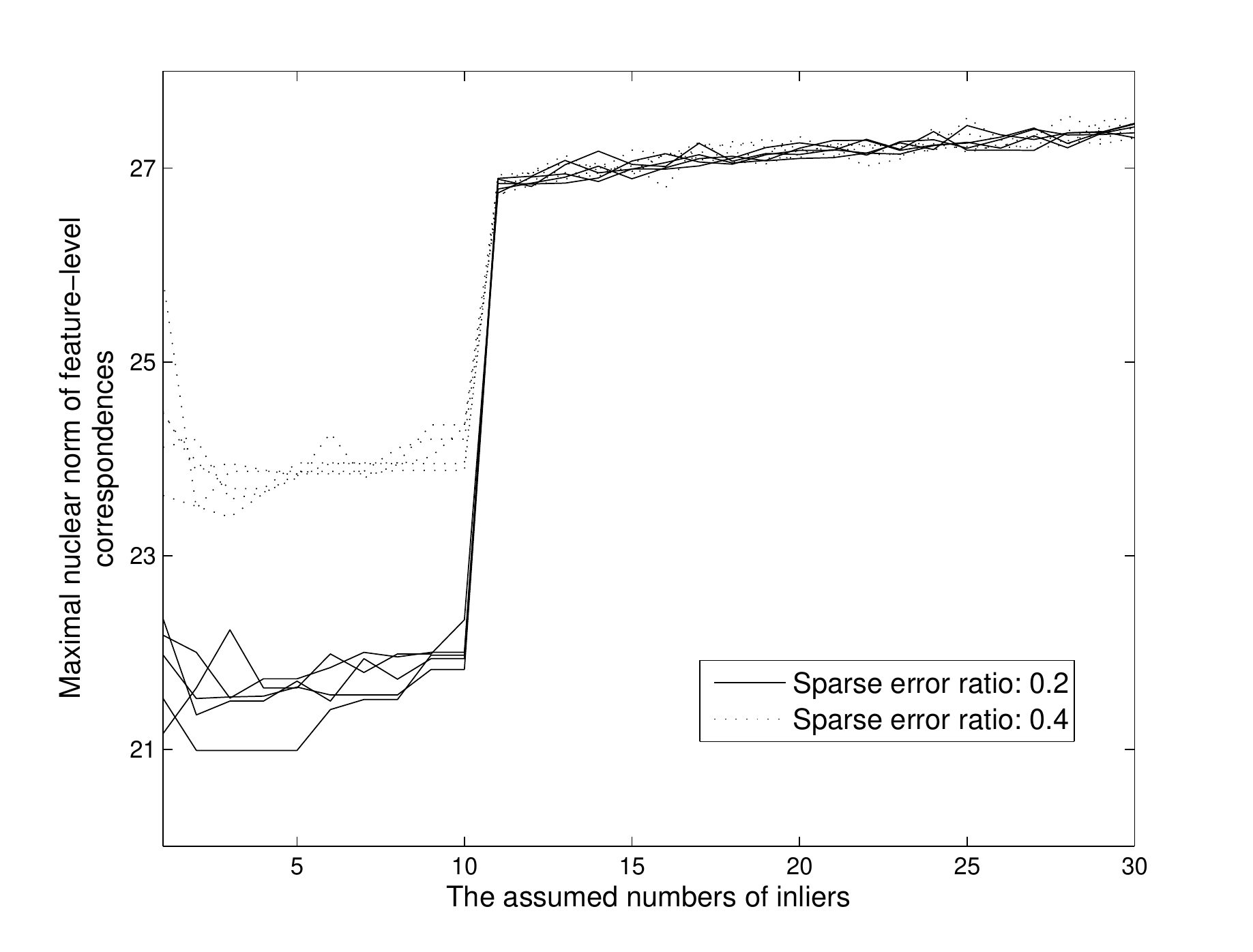} \\

\hfill\hfill (a) \hfill\hfill (b)  \hfill\hfill\hfill

\caption{ {\small Simulation for estimating the number of inliers, where two experiment settings are considered (sparse error ratios are $0.2$ and $0.4$ respectively), and each setting is tested with $5$ random trials. The ground truth number of inliers is $10$ in all these experiments. (a) Performance of ROML against varying assumed numbers of inliers (varying values of $n$ as inputs of Algorithm \ref{MainAlgm}), measured by Match Ratio and Identification Ratio. Each curve is obtained by averaging over $5$ random trials. (b) Maximal nuclear norm of the established $n$ feature-level correspondences by ROML (the $\gamma_n$ in equation (\ref{EqnMaxPerFeaNuclearNorm})) against varying assumed numbers of inliers (varying values of $n$ as inputs of Algorithm \ref{MainAlgm}). Each curve corresponds to one of the total $10$ experiments. }  }  \label{InlierEstSimuFigs}
\end{figure}

Our consideration of estimating the number of inliers is motivated by the exact low-rank property that ROML leverages for feature matching. Given a value of $n$, ROML establishes $n$ feature-level correspondences across the $K$ groups by solving Algorithm \ref{MainAlgm}, and formes a matrix ${\bf D} \in \mathbb{R}^{dn\times K}$, defined as equation (\ref{EqnDMatrix}), using the obtained PPMs $\{ \mathbf{P}^k \}_{k=1}^K$. Each of the $n$ feature-level correspondences consists of $K$ feature vectors respectively from the $K$ groups, and corresponds to a submatrix of ${\bf D}$, which we denote as ${\bf D}_j \in \mathbb{R}^{d\times K}$ with $j \in \{1, \dots, n\}$. Equation (\ref{EqnDMatrix}) indicates ${\bf D} = [ {\bf D}_1^{\top}, \dots, {\bf D}_n^{\top} ]^{\top}$. When any $j^{th}$ correspondence is formed by inlier features, the corresponding ${\bf D}_j$ would be rank deficient, and numerically would have lower nuclear norm. Conversely, the nuclear norm would be higher. We thus consider using the measure of $\| {\bf D}_j \|_*$, $j = 1, \dots, n$, for estimating the number of inliers. More specifically, given a data set of $K$ groups of feature vectors, we solve a series of ROML problems (via Algorithm \ref{MainAlgm}) with values of $n$ set increasingly from $n = 1$ to $n = n_k$. For any specific value of $n$, a matrix ${\bf D} \in \mathbb{R}^{dn\times K}$ can be obtained when Algorithm \ref{MainAlgm} converges. Define {\it maximal nuclear norm of feature-level correspondences} as
\begin{equation}\label{EqnMaxPerFeaNuclearNorm}
\gamma_n = \max_{j=1, \dots, n} \| {\bf D}_j \|_* ,
\end{equation}
which computes maximum of the nuclear norms of the established $n$ feature-level correspondences $\{ {\bf D}_j \}_{j=1}^n$. Given the series $\{ \gamma_1, \dots, \gamma_n \}$ obtained by increasingly setting $n = 1$ to the current $n$ value, we further define $\bar{\gamma}_n = \sum_{i = 1}^n \gamma_i  / n$, which averages this series. Based on the notions of $\gamma_n$ and $\bar{\gamma}_n$, we propose a simple scheme that estimates the number of inliers as the current value of $n$ if
\begin{equation}\label{EqnInlierNumEst}
(\gamma_{n+1} - \bar{\gamma}_n) / \bar{\gamma}_n > \delta ,
\end{equation}
where $\delta$ is a scalar parameter. To investigate the efficacy of this proposed scheme, we plot in Figure \ref{InlierEstSimuFigs}-(b) the curves of $\gamma_n$ values from $n = 1$ to $n = n_k$, for those simulated experiments reported in Figure \ref{InlierEstSimuFigs}-(a). Figure \ref{InlierEstSimuFigs}-(b) shows that for any curve corresponding to one of these simulated experiments, there is a clear stepping of $\gamma_n$ values when $n$ is set as the ground truth number of inliers, indicating that $\gamma_n$ is a good measure for estimation of inlier numbers. Our use of $\bar{\gamma}_n$ in (\ref{EqnInlierNumEst}) is to improve the estimation robustness. We also observe from Figure \ref{InlierEstSimuFigs}-(b) that the gap of curve stepping becomes narrower when the ratio of sparse errors is increased in each feature vector, suggesting less efficacy of the proposed scheme (\ref{EqnInlierNumEst}). This is reasonable since when errors in feature vectors increase, it becomes less distinctive between inlier and outlier features.

Simply put, our proposed scheme starts from setting $n = 1$, and solves a series of ROML problems with increasing values of $n$, until the condition (\ref{EqnInlierNumEst}) is satisfied. The ratio computed in (\ref{EqnInlierNumEst}) is numerically stable w.r.t. varying values of $d$ (feature dimension) and $K$ (number of groups/images). We set the parameter $\delta = 0.05$ throughout this paper. It gives perfect estimation of the true numbers of inliers for all the simulated experiments reported in this section. In Section \ref{ExpCaltech101MSRCSec}, we report experiments when applying this scheme to practical data.

\subsubsection{Detection of the True Inliers}
\label{InlierDetectSec}

By solving the problem (\ref{EqnNuclearL1Form}), ROML is able to identify $n$ features from each of a set of $K$ images and establish their correspondences. However, these $nK$ features are not necessarily all the true inliers. This is particularly the case when some true inlier features are contaminated with noise, e.g., due to object appearance variations caused by illumination or pose changes, and when some inliers are missing, e.g., due to partial occlusion of object instances. Detecting the true inliers out of the $nk$ features is practically useful for applications such as 3D reconstruction and object recognition.

We have introduced an error term $\mathbf{E} \in \mathbb{R}^{dn\times K}$ in (\ref{EqnNuclearL1Form}) to improve the robustness of ROML against the aforementioned contaminations of inlier features. One may think of using $\mathbf{E}$ for detection of the true inliers, e.g., by thresholding $\ell_1$-norms of the $nK$ $d$-dimensional subvectors in $\mathbf{E}$ ($n$ subvectors per column). However, the ROML formulation (\ref{EqnNuclearL1Form}) is a non-convex problem. It decomposes out sparse errors into $\mathbf{E}$ mainly for obtaining better PPMs $\{ \mathbf{P}^k \}_{k=1}^K$, so that potential inliers from the $K$ images can be identified and matched in ${\bf D} = [ \mathrm{vec}({\bf F}^{1}{\bf P}^{1}), \dots, \mathrm{vec}({\bf F}^{K}{\bf P}^{K}) ]$. By the non-convex problem nature of (\ref{EqnNuclearL1Form}), the obtained $\mathbf{E}$ is not guaranteed to be consistent with those noise contaminating inlier features, especially when inlier features are heavily contaminated and become less distinctive from outliers. This is different from the RPCA problem \cite{RPCA} where an exact solution of the low-rank matrix and sparse error matrix can be obtained with theoretical guarantee. To detect the true inliers from the selected features by ROML, we propose an alternative scheme that first solves a RPCA problem: $\min_{\mathbf{L}_{rpca}, \mathbf{E}_{rpca}} \| \mathbf{L}_{rpca} \|_* + \lambda_{rpca} \| \mathbf{E}_{rpca} \|_1 \ \mathrm{s.t.} \ \mathbf{D} = \mathbf{L}_{rpca} + \mathbf{E}_{rpca}$, using the obtained $\mathbf{D}$ from ROML as the input data, where $\lambda_{rpca} = 1/ \sqrt{dn}$ as theoretically derived in RPCA \cite{RPCA}. The proposed scheme then thresholds $\ell_1$-norms of the $nK$ $d$-dimensional error subvectors in $\mathbf{E}_{rpca} \in \mathbb{R}^{dn\times K}$, which are the decomposed error vectors respectively for the selected $nK$ features by ROML. For any $j^{th}$ selected feature from the $k^{th}$ image, we determine it as a true inlier if
\begin{equation}\label{EqnTrueInlierDetect}
\left\| [\mathbf{E}_{rpca}]_{jd-d+1:jd, k} \right\|_1 < \xi ,
\end{equation}
where $[\mathbf{E}_{rpca}]_{jd-d+1:jd, k}$ denotes the error subvector with $j \in \{1, \dots, n\}$ and $k \in \{1, \dots, K\}$, and $\xi$ is a scalar parameter. The scheme (\ref{EqnTrueInlierDetect}) is practically useful if magnitude of $\mathbf{E}_{rpca}$ is stable when applying ROML (and the subsequent RPCA problem) to various data applications, so that $\xi$ is less concerned with tuning. Note that magnitudes of entries in $\mathbf{E}_{rpca}$ are controlled by two factors: magnitudes of entries in the feature vectors $\{\mathbf{F}^k \}_{k=1}^K$, and levels of noise contaminating these feature vectors. Since all the feature vectors in $\{\mathbf{F}^k \}_{k=1}^K$ have been normalized in Algorithm \ref{MainAlgm}, magnitude of $\mathbf{E}_{rpca}$ is less influenced by the first factor. To investigate how the scheme (\ref{EqnTrueInlierDetect}) performs w.r.t. the second factor, we conducted simulated experiments using varying levels of noise and ratios of missing true inliers. Given unit $\ell_2$-norm of normalized feature vectors in $\{\mathbf{F}^k \}_{k=1}^K$, we set $\xi = 4$ for all relevant experiments of inlier detection reported in this paper.

More specifically, we randomly generated synthetic data similarly as did in Section \ref{InlierNumEstSec}. We set $K = 30$ and $n_k = 30$ for $k = 1, \dots, K$. Dimension of each feature vector was set as $d = 50$. Sparse errors were added to feature vectors in the same way as in Section \ref{InlierNumEstSec}. The number of inliers was set as $n = 10$, which gives a total of $nK = 300$ true inliers from the $K$ groups/images. After inlier features were generated, we randomly replaced a certain number of them with additionally generated outlier features, simulating the situation of missing inliers. In this investigation, we considered the test settings where the ratios of sparse errors in each feature vector were ranged in [$0.1$, $0.5$], simulating increased levels of noise, and the ratios of missing inliers for each group were ranged in [$0.05$, $0.5$]. Under each setting, performance of the scheme (\ref{EqnTrueInlierDetect}) is measured by {\it precision} and {\it recall}. Precision is computed as the number of detected true inliers divided by the total number of detected features, and recall is computed as the number of detected true inliers divided by the total number of true inliers contained in the $K$ groups. We run $5$ random trials and averaged the results under each setting. Table \ref{Table-DetectActualInliers-Simu} reports these simulated experiments, which suggests that precision scores of detecting the true inliers by the scheme (\ref{EqnTrueInlierDetect}) are very high for the considered test settings (in fact perfect precision for most of the settings). When the levels of noise (ratios of sparse errors) contaminating feature vectors increase, magnitudes of the decomposed error subvectors in $\mathbf{E}_{rpca}$ for the true inliers also increase, and become less distinctive from those of outliers, resulting in slightly reduced recall scores. Nevertheless, results in Table \ref{Table-DetectActualInliers-Simu} show that in a wide range of noise and missing inlier settings, the proposed scheme (\ref{EqnTrueInlierDetect}) is effective for detecting the true inliers. In Section \ref{ExpCaltech101MSRCSec}, we also report experiments of applying (\ref{EqnTrueInlierDetect}) to practical data for detection of true inliers.

\begin{table}[t]
\caption{ {\small Simulation for detection of true inliers, using the proposed scheme (\ref{EqnTrueInlierDetect}). Synthetic data are generated under $12$ test settings: the ratios of sparse errors in each feature vector are $10 \%$ (Noise Level I), $30 \%$ (Noise Level II), or $50 \%$ (Noise Level III), and the ratios of missing inliers in each group are $5 \%$, $10 \%$, $30 \%$, or $50 \%$. Results for each setting are obtained by averaging over $5$ random trials and presented in the format of {\it Precision/Recall}. } } \label{Table-DetectActualInliers-Simu}
\begin{center}
\begin{tabular}{ccccc}

\hline {\scriptsize Ratios of}       &        &         &         &          \\
       {\scriptsize miss. inliers} & $5 \%$ & $10 \%$ & $30 \%$ & $50 \%$  \\

\hline {\scriptsize Noise L. I} &  $1.00/1.00$  &  $1.00/1.00$  &  $1.00/1.00$  &  $1.00/1.00$   \\

\hline {\scriptsize Noise L. II} &  $1.00/1.00$  &  $1.00/1.00$  &  $1.00/0.99$  &  $0.99/0.98$  \\

\hline {\scriptsize Noise L. III} &  $1.00/0.99$  &  $1.00/0.96$  &  $1.00/0.95$ &   $0.99/0.92$   \\

\hline

\end{tabular}

\end{center}
\end{table}


\section{Choices of Feature Types and Their Applicable Spectrums}
\label{FeaDiscussSec}

In the previous sections, we have represented an image as a set of features, where features generally refer to vectors characterizing image points and local regions centered on them. The task of object matching is then posed as Problem 1. Depending on different applications, these features can be chosen as either image coordinates, local region descriptors, or combination of them encoding both spatially structural and local appearance information. In the following, we present details of different choices of feature types and their applicable spectrums for robust object matching.

\subsection{Image Coordinates}
\label{ImageCoordAsFeaSec}

Given a set of points in an image, their coordinates can be directly used as features. In fact, coordinates of a set of inlier points in an image encode geometric relations among them, and it is the geometric structure of these points that determines the object pattern, and also provides a constraint for use in object matching. Image coordinates based features have been intensively used in early shape matching works \cite{SLH,ShapiroBrady92,ICP,ChuiRangarajanNewPointMatching,BelongieShapeContext}.

For a moving rigid object in a video sequence or images of a rigid object captured from different viewpoints, denote ${\bf f}_{i}^{k} = [x_{i}^{k}, y_{i}^{k}]^{\top} \in \mathbb{R}^{2}$, $i = 1, \dots, n_{k}$, as image coordinates based $n_{k}$ features extracted from the $k^{th}$ image. Let $ {\bf F}^{k} = [ {\bf f}_{1}^{k}, \dots, {\bf f}_{n_{k}}^{k} ] \in \mathbb{R}^{2\times n_{k} }$. It has been shown in \cite{Rank4FactorShapeFromMotion} that the matrix, defined by \begin{equation}\label{EqnCoordDRank4Form} {\bf D}'( \{ \mathbf{P}^k \}_{k=1}^K ) = \big[({\bf F}^{1}{\bf P}^{1})^{\top}, \dots, ({\bf F}^{K}{\bf P}^{K})^{\top}\big]^{\top} \in \mathbb{R}^{2K\times n},\end{equation} is highly rank deficient (at most rank $4$ when considering translation and there is no measurement noise), if correct PPMs $\{ {\bf P}^{k} \}_{k=1}^{K}$ are used so that $n$ inlier points can be selected from each of $\{ {\bf F}^{k} \}_{k=1}^{K}$ and corresponding points $\{ {\bf f}_{j}^{k} \}_{k=1}^{K}$, $j \in \{1, \dots, n \}$, can be aligned in the same column of ${\bf D}'$. (\ref{EqnCoordDRank4Form}) is different from the formation of ${\bf D}(\{ \mathbf{P}^k \}_{k=1}^K)$ in (\ref{EqnDMatrix}). By applying the same low-rank and sparse constraints as in (\ref{EqnNuclearL1Form}), we will show in Section \ref{ExpHotelSeqSec} that image coordinates based features are very useful for matching rigid objects.

\subsection{Local Region Descriptors}

It is also straightforward to use region descriptors characterizing locally visual appearance information as features. These include SIFT \cite{SIFT}, HOG \cite{HOG}, Geometric Blur \cite{GeometricBlur,BergShapeMatchObjectRecog}, GIST \cite{GIST}, or even raw pixels of local patches. In general, these feature descriptors have the properties of invariance and distinctiveness. The invariance property makes it possible to match salient features extracted from images under geometric transformation or illumination change, while feature distinctiveness is important to differentiate between different salient regions. Features of such kind can be used in scenarios where they are discriminative enough for matching, or geometric constraints between feature points are not available, such as common object localization \cite{ObjectnessForUOCD,bMCL}. In Section \ref{ExpCOLSec}, we present how ROML can be applied to this application.

\subsection{Combination of Image Coordinates and Region Descriptors}
\label{CoordDescriptorCombSec}

Local region descriptors alone could be ambiguous for feature matching when there exist repetitive textures or less discriminative local appearance in images. To improve the matching accuracy, it is necessary to exploit the geometric structure of inlier points that consistently appears in each of the set of images. In literature, there are many ways to exploit such geometric constraints, such as pair-wise compatibility of feature correspondences used in graph matching \cite{LeordeanuSpectralCorrespondence,TorresaniModelAndGlobalOpt}, or linear-form constraints benefiting from a template image \cite{HongshengLiConvexRelax,HaoJiangLinearSolution}. In this work, we consider a simple method introduced in \cite{OneShot}. For any interest point in each of the set of images, this method learns a low-dimensional embedded feature vector that combines information of both the local appearance and the spatial relations of this point relative to other points in the image. We present details of how to compute this type of learned embedded features in Appendix \ref{appendix_LowDimEmbedFeaLearning}.

As suggested by Theorem \ref{ExactRelaxTheorem}, when there are no outliers, the thus learned features can be directly used in our ROML framework. When there exist outliers in any of the set of images, we can always normalize those features to let them have constant $\ell_{2}$-norm, and our method still applies. Since this type of learned features encode both appearance and spatial layout information, our method can potentially apply in more general settings, such as matching of non-rigid, articulated objects, or instances of a same object category. Experiments in Section \ref{ExpObjMatching} show the promise.

\section{Experiments}
\label{ExpObjMatching}

In this section, we present experiments to show the effectiveness of ROML for robustly matching objects in a set of images. We consider different testing scenarios from the relatively simple rigid object matching, to the more challenging matching of object instances of a common category, and matching a non-rigid object moving in a video sequence. For these testing scenarios, we choose appropriate feature types of either image coordinates or combination of image coordinates and local region descriptors, while features of region descriptors alone will be used in Section \ref{ExpCOLSec} for the application of common object localization. In the following experiments, without mentioning we always set the penalty parameter $\lambda = 5 / \sqrt{dn}$ when solving the ROML problem (\ref{EqnNuclearL1Form}) using Algorithm \ref{MainAlgm}, where $\rho$ was initially set as $1\mathrm{e}^{-4}$ and iteratively increased with a factor of $1.001$.

\subsection{Rigid Object with 3D Motion}
\label{ExpHotelSeqSec}

\begin{table*}[t]
\caption{ {\small Results of different methods on the ``Hotel'' sequence. Accuracies are measured by the Match Ratio criteria.} } \label{Table-HotelMainComp}
\begin{center}
\begin{tabular}{ccccccccc}
\hline {\footnotesize Methods} & {\footnotesize DD \cite{TorresaniModelAndGlobalOpt}} & {\footnotesize SMAC \cite{SMAC}} & {\footnotesize LGM \cite{LearningGraphMatching}} & {\footnotesize RankCon \cite{OliveiraRankConstraint}} & {\footnotesize One-Shot \cite{OneShot}} & {\footnotesize Prev \cite{ZinanECCV}} & {\footnotesize ROML-Pair} & {\footnotesize ROML} \\

\hline {\footnotesize Accuracies} & { $99.8 \%$} & { $84 \%$} & { $90 \%$} & { $57 \%$} & { ${\bf 100 \%}$} & { $72 \%$} & { ${\bf 100 \%}$} & { ${\bf 100 \%}$}  \\

\hline

\end{tabular}

\end{center}
\end{table*}

The CMU ``Hotel'' sequence consists of $101$ frames of a toy hotel building undergoing 3D motion \cite{CMUHotel}. Each frame has been manually labelled with the same set of $30$ landmark points \cite{LearningGraphMatching}, i.e., $n = 30$. We use the ``Hotel'' sequence to show that ROML can be applied using image coordinates as features for matching rigid objects. In particular, we sampled $K = 15$ frames out of the total $101$ frames (every $7$ frames), in order to simulate the wide baseline matching scenario. Given PPMs $\{ \mathbf{P}^k \}_{k=1}^K$, image coordinates of landmark points in these $15$ frames were arranged into a matrix ${\bf D}' (\{ \mathbf{P}^k \}_{k=1}^K) \in \mathbb{R}^{2K\times n}$, as defined in (\ref{EqnCoordDRank4Form}). We used Algorithm \ref{MainAlgm} to optimize a PPM for each frame, where the penalty parameter was set as $\lambda = 5 / \sqrt{2K}$, and $\rho$ was initialized as $1\mathrm{e}^{-6}$ and iteratively increased with a factor of $1.0001$.

We compare our method with representative pair-wise graph matching methods including Dual Decomposition (DD) \cite{TorresaniModelAndGlobalOpt}, SMAC \cite{SMAC}, and Learning Graph Matching (LGM) \cite{LearningGraphMatching}, which are based on either linear or quadratic assignment formulations, and also with more related methods \cite{OliveiraRankConstraint,OneShot} that are able to simultaneously match the set of $15$ frames. For the former set of methods, matchings between a total of $105$ frame pairs need to be established. Note that although all these methods are based on image coordinates, many of them have used the advanced shape context features \cite{BelongieShapeContext}. To evaluate the performance of different methods, we use the Match Ratio criteria, which is defined in Section \ref{InlierNumEstSec}. Table \ref{Table-HotelMainComp} reports the Match Ratios of different methods, where results of SMAC and LGM are from \cite{TorresaniModelAndGlobalOpt,OneShot}, and result of RankCon \cite{OliveiraRankConstraint} is from our own code implementation of the method. Table \ref{Table-HotelMainComp} tells that ROML and One-Shot \cite{OneShot} achieve the best performance (no matching error). However, One-Shot \cite{OneShot} uses shape context feature to characterize each landmark point, and it performs a low-dimensional embedding combining information of both geometric structure and local descriptors of landmark points, while ROML just directly uses image coordinates. RankCon \cite{OliveiraRankConstraint} also exploits low-rank constraints, however, its performance is much worse than that of ROML. This is due to the inherent error-propagation nature of the method. Matchings of landmark points in RankCon \cite{OliveiraRankConstraint} are established in a frame-by-frame manner; once landmarks in the current frame are matched to those in the previous frames, the matchings cannot be corrected in the later stages, and matching errors will inevitably propagate and accumulate. Thus for RankCon \cite{OliveiraRankConstraint}, it is critical to make the matchings of the first few frames accurate. Unfortunately, the low-rank property leveraged by RankCon \cite{OliveiraRankConstraint} is weak given only landmark points in the first few frames. We also present result of our previous method \cite{ZinanECCV} (Prev \cite{ZinanECCV}) in Table \ref{Table-HotelMainComp}, which corroborates our discussions on the advantage of ROML over \cite{ZinanECCV} in Section \ref{IntroSec}. It is interesting to observe that under the setting of pair-wise matching as for methods \cite{TorresaniModelAndGlobalOpt,SMAC,LearningGraphMatching}, ROML still performs perfectly on the ``Hotel'' sequence (``ROML-Pair'' in Table \ref{Table-HotelMainComp}). In fact, since the problem size of matching each frame pair is smaller, ROML converges faster with less iterations.


Compared to \cite{OliveiraRankConstraint,OneShot}, ROML has the additional advantage of being more robust against missing inliers. To verify, we performed another experiment by removing randomly selected landmark points in each frame. For each removed landmark point, we also generated arbitrary image coordinates for it and made sure the generated coordinates were far enough away from the true ones, in order to fit with the algorithmic settings of these comparative methods. We set $\lambda = 2 / \sqrt{2K}$ in Algorithm \ref{MainAlgm}. One-Shot \cite{OneShot} uses k-means clustering to obtain feature correspondences in the learned feature space. We chose its best-performing dimensionality of learned features, and run $10$ trials of k-means clustering and averaged the results. Parameters of RankCon \cite{OliveiraRankConstraint} has also been tuned to its best performance. Table \ref{Table-HotelMissingPtsRobustTest} reports the Match Ratio results, which are computed over the non-missing points. Table \ref{Table-HotelMissingPtsRobustTest} clearly shows that ROML is less influenced when there exist missing inlier points.

\begin{table}[t]
\caption{ {\small Match Ratios of different methods on the ``Hotel'' sequence with missing landmark (inlier) points. The cases of $1$, $3$, and $5$ missing points in each frame are tested (the corresponding ratios of missing points over the total $30$ ones are $3 \%$, $10 \%$, and $17 \%$ respectively). } } \label{Table-HotelMissingPtsRobustTest}
\begin{center}
\begin{tabular}{|c|c|c|c|} \hline

{\scriptsize No. of missing} & {} & {} & {}  \\
{\scriptsize points} & {\scriptsize RankCon \cite{OliveiraRankConstraint}} & {\scriptsize One-Shot \cite{OneShot}} & {\scriptsize ROML}  \\

\hline { $1$} & { $43 \%$} & { $76 \%$} & { ${\bf 95 \%}$}  \\

\hline { $3$} & { $26 \%$} & { $64 \%$} & { ${\bf 79 \%}$} \\

\hline { $5$} & { $23 \%$} & { $59 \%$} & { ${\bf 71 \%}$} \\

\hline

\end{tabular}

\end{center}
\end{table}

\subsection{Object Instances of a Common Category}
\label{ExpCaltech101MSRCSec}

In this section, we test how ROML performs to match object instances belonging to a same object category. We used $6$ image sets of different categories from Caltech101 \cite{Caltech101}, MSRC \cite{MSRC}, and the Internet. Numbers of images in these $6$ sets ranged from $16$ to $25$. For each image, interest points were detected by SIFT: the numbers of detected interest points per image were from $27$ to $174$, out of which we manually labelled inlier points as matching ground truth. When some inlier points were not detected by SIFT in some images, we also manually labelled them in order to produce consistent sets of inlier points across the sets of images. In these experiments, we chose the low-dimensional embedded feature representation \cite{OneShot}, as explained in Section \ref{CoordDescriptorCombSec}, which encodes information of both geometric structure and descriptor similarity. To learn the embedded features, we used Geometric Blur descriptors \cite{BergShapeMatchObjectRecog} to characterize local regions around interest points, and Euclidean distances between points in each image for measuring geometric relations. The parameters for embedded feature learning were set as $\sigma_{spa.} = 10$ and $\sigma_{des.} = 0.2$ (cf. Appendix \ref{appendix_LowDimEmbedFeaLearning} for the definition of $\sigma_{spa.}$ and $\sigma_{des.}$), and the dimensionality of learned features was set as $d = 60$. Feature matching was realized by solving (\ref{EqnNuclearL1Form}) using Algorithm \ref{MainAlgm}.

We compare our method with One-Shot \cite{OneShot}, and also with several recent graph matching methods including DD \cite{TorresaniModelAndGlobalOpt}, RRWM \cite{GraphReweightedRandomWalk}, and SM \cite{LeordeanuSpectralCorrespondence}, and hyper-graph matching methods including TM \cite{TensorHighOrderGraphMatching}, RRWHM \cite{HyperGraphReweightedRandomWalk}, and ProbHM \cite{ZassShashua}. Codes of DD \cite{TorresaniModelAndGlobalOpt}, RRWM \cite{GraphReweightedRandomWalk}, TM \cite{TensorHighOrderGraphMatching}, and RRWHM \cite{HyperGraphReweightedRandomWalk} are respectively from their authors' publicly available webpages.  For ProbHM \cite{ZassShashua}, we used a code implementation provided by the authors of RRWHM \cite{HyperGraphReweightedRandomWalk}. The simple methods of One-Shot \cite{OneShot} and SM \cite{LeordeanuSpectralCorrespondence} are based on our own implementations. For One-Shot, we chose its best-performing dimensionality of the learned embedded features as our method used, and run $10$ trials of k-means clustering and averaged the results. For pair-wise graph matching methods \cite{TorresaniModelAndGlobalOpt,GraphReweightedRandomWalk,LeordeanuSpectralCorrespondence,TensorHighOrderGraphMatching,HyperGraphReweightedRandomWalk,ZassShashua}, we generated a total of $\frac{K!}{(K-2)!2!}$ image pairs for each test set with $K$ images. These graph matching methods characterize interest points in each image by both their spatial relations and their respective local region descriptors. For a fair comparison, we used the same Geometric Blur region descriptors as our method used \footnote{We have also tried the SIFT descriptor \cite{SIFT} to characterize appearance of local regions around interest points. The matching accuracies using SIFT were slightly worse than those using Geometric Blur for both our method and these comparative methods.}. Their parameters were also tuned to their respective best performance on the $6$ test sets.

Table \ref{Table-Caltech101MSRCMainMatchRatioResult} reports results of different methods in terms of Match Ratio. Example feature correspondences for DD \cite{TorresaniModelAndGlobalOpt} and our method are shown in Figure \ref{Caltech101MSRCExamImgShowFigs}. Table \ref{Table-Caltech101MSRCMainMatchRatioResult} and Figure \ref{Caltech101MSRCExamImgShowFigs} suggest that for the relatively simple ``Airplane'', ``Motorbike'', and ``Face'' image sets, our method gives very good matching results. The ``Car'', ``Bus'', and ``Bank of America (BoA)'' sets are more difficult due to the cluttered background, large viewpoint changes, or intra-category variations between different instances. Our method still gives reasonably good and consistent matching results. Both One-Shot and our method can match multiple images simultaneously. Our results are much better than those of One-Shot, which shows that One-Shot cannot perform well in the presence of outliers, and also that our ROML formulation optimized by the ADMM method is very effective for multi-image feature matching. Our method greatly outperforms graph and hyper-graph matching methods. It demonstrates that leveraging more object pattern constraints (i.e., geometric and feature similarity constraints) from multiple images is very useful for feature matching. Moreover, Figure \ref{Caltech101MSRCExamImgShowFigs} suggests that our matching results across the $4$ images are more consistent than those from graph matching methods: another desired property for many computer vision applications. In Table \ref{Table-Caltech101MSRCMainMatchRatioResult}, we also compare with our previous method \cite{ZinanECCV} (Prev \cite{ZinanECCV} in Table \ref{Table-Caltech101MSRCMainMatchRatioResult}). Results of Prev \cite{ZinanECCV} are obtained using the same low-dimensional embedded features as ROML does. Table \ref{Table-Caltech101MSRCMainMatchRatioResult} tells that on all the $6$ image sets, matching accuracies of ROML are much better than those of Prev \cite{ZinanECCV}. The improved accuracy comes from the new way of PPM optimization, i.e., exactly solving an equivalent LSAP in the present paper instead of sub-optimally solving two costly subproblems in \cite{ZinanECCV}, as we have discussed in Section \ref{IntroSec}.

\begin{table*}[t]
\caption{ {\small Match Ratios of different methods on $6$ image sets of different object categories.} } \label{Table-Caltech101MSRCMainMatchRatioResult}
\begin{center}
\begin{tabular}{cccccccccc}
\hline {\scriptsize Methods} & {\scriptsize RRWM \cite{GraphReweightedRandomWalk} } & {\scriptsize SM \cite{LeordeanuSpectralCorrespondence} } & {\scriptsize TM \cite{TensorHighOrderGraphMatching} } & {\scriptsize RRWHM \cite{HyperGraphReweightedRandomWalk} } & {\scriptsize ProbHM \cite{ZassShashua} } & {\scriptsize DD \cite{TorresaniModelAndGlobalOpt} } & {\scriptsize OneShot \cite{OneShot} } & {\scriptsize Prev \cite{ZinanECCV} }  & {\scriptsize ROML } \\

\hline {\scriptsize Airplanes} & { $28 \%$} & { $54 \%$} & { $17 \%$} & { $54 \%$} & { $32 \%$} & { $70 \%$} & { $65 \%$} & { $87 \%$} & { ${\bf 95 \%}$}  \\

\hline {\scriptsize Face} & { $40 \%$} & { $57 \%$} & { $26 \%$} & { $54 \%$} & { $14 \%$} & { $64 \%$} & { $61 \%$} & { $53 \%$} & { ${\bf 89 \%}$}  \\

\hline {\scriptsize Motorbike} & { $50 \%$} & { $46 \%$} & { $23 \%$} & { $58 \%$} & { $28 \%$} & { $73 \%$} & { $68 \%$} & { $89 \%$} & { ${\bf 99 \%}$}  \\

\hline {\scriptsize Car} & { $26 \%$} & { $39 \%$} & { $12 \%$} & { $23 \%$} & { $12 \%$} & { $51 \%$} & { $50 \%$} & { $59 \%$} & { ${\bf 81 \%}$}  \\

\hline {\scriptsize Bus} & { $13 \%$} & { $25 \%$} & { $24 \%$} & { $43 \%$} & { $18 \%$} & { $52 \%$} & { $44 \%$} & { $64 \%$} & { ${\bf 79 \%}$}  \\

\hline {\scriptsize BoA} & { $7 \%$} & { $12 \%$} & { $6 \%$} & { $15 \%$} & { $7 \%$} & { $12 \%$} & { $16 \%$} & { $35 \%$} & { ${\bf 75 \%}$}  \\

\hline

\end{tabular}

\end{center}
\end{table*}

\begin{figure*}[t]
\centering

\begin{tabular}{ccc}

\includegraphics[scale=0.1125]{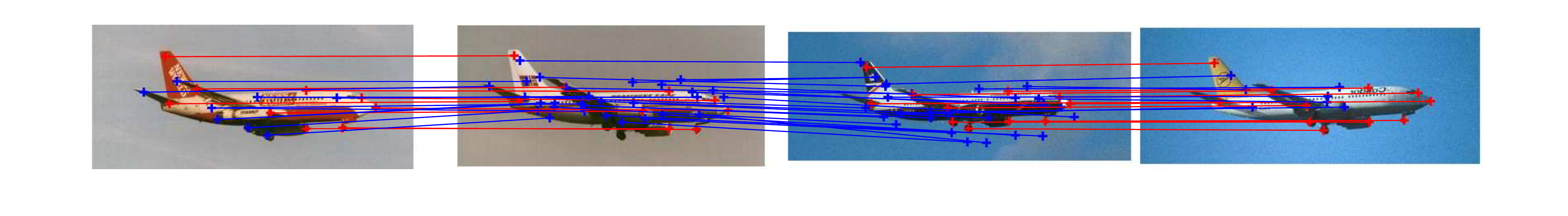} & \includegraphics[scale=0.11]{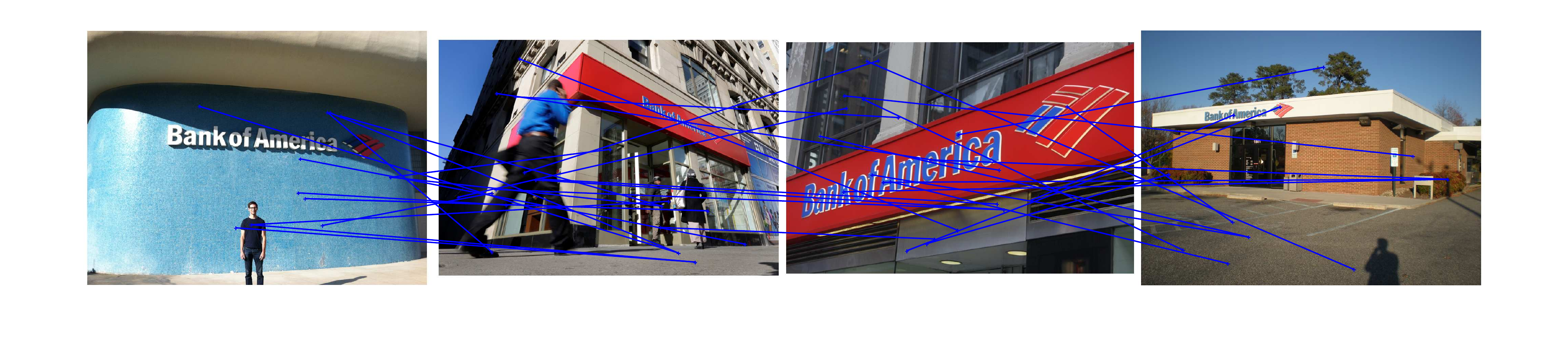} & \includegraphics[scale=0.1125]{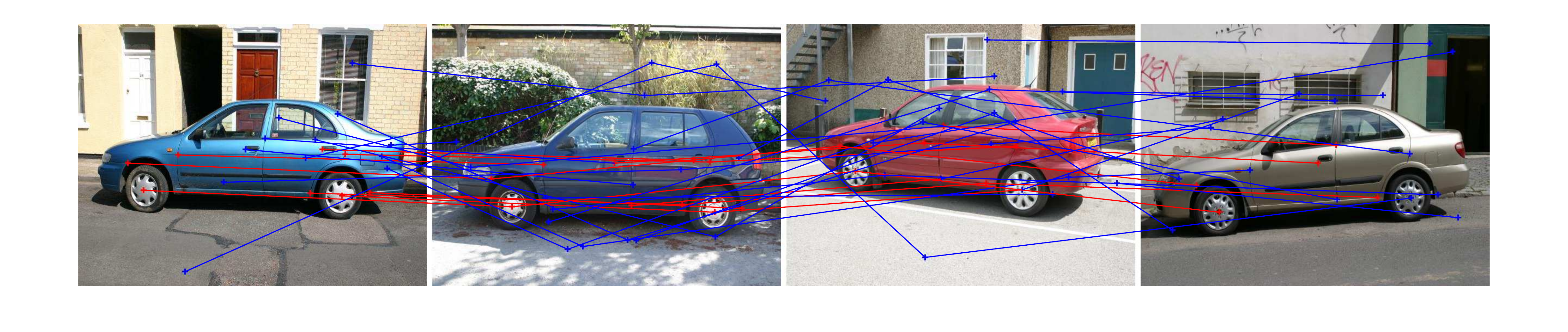} \vspace{-0.2cm} \\
\includegraphics[scale=0.11]{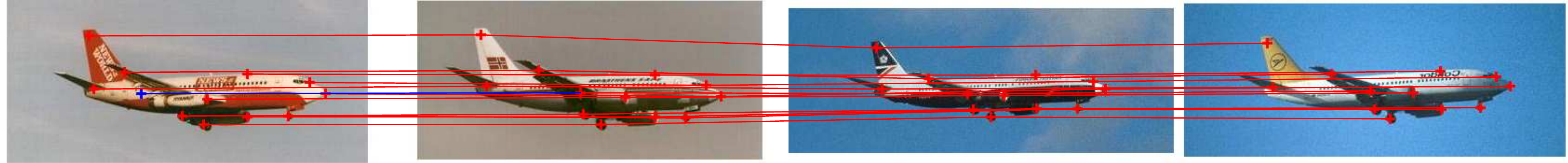} & \includegraphics[scale=0.11]{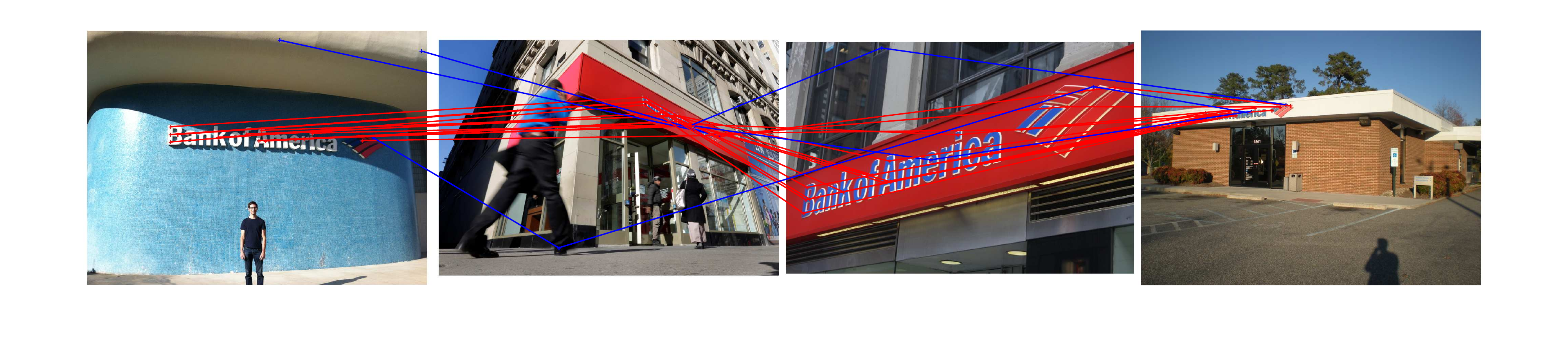} & \includegraphics[scale=0.11]{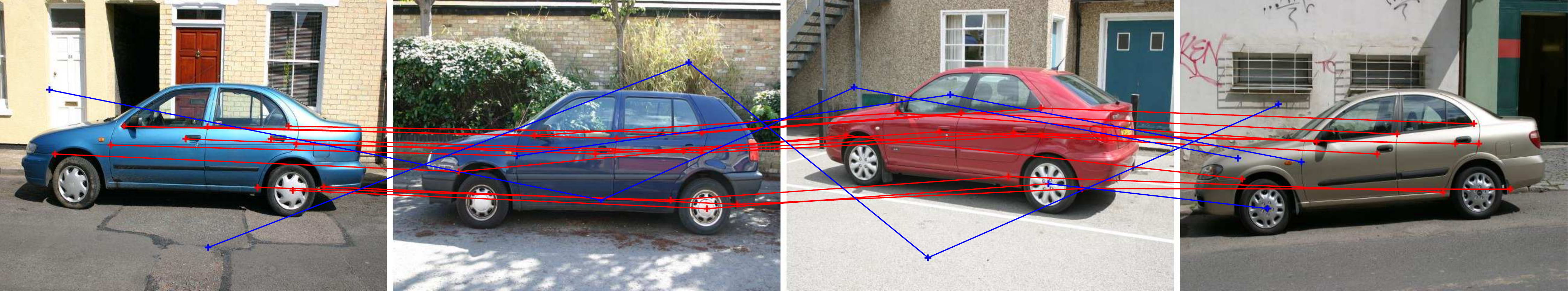} \\

\includegraphics[scale=0.15]{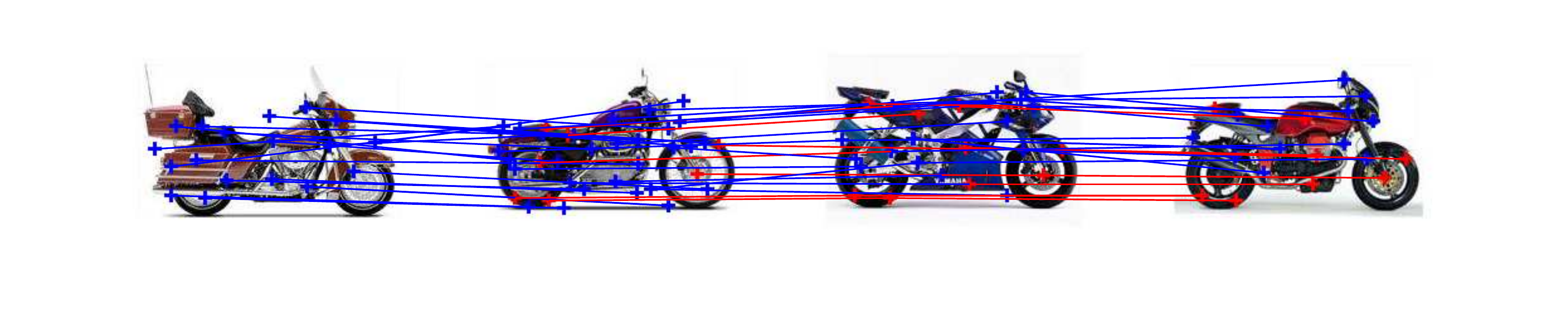} & \includegraphics[scale=0.12]{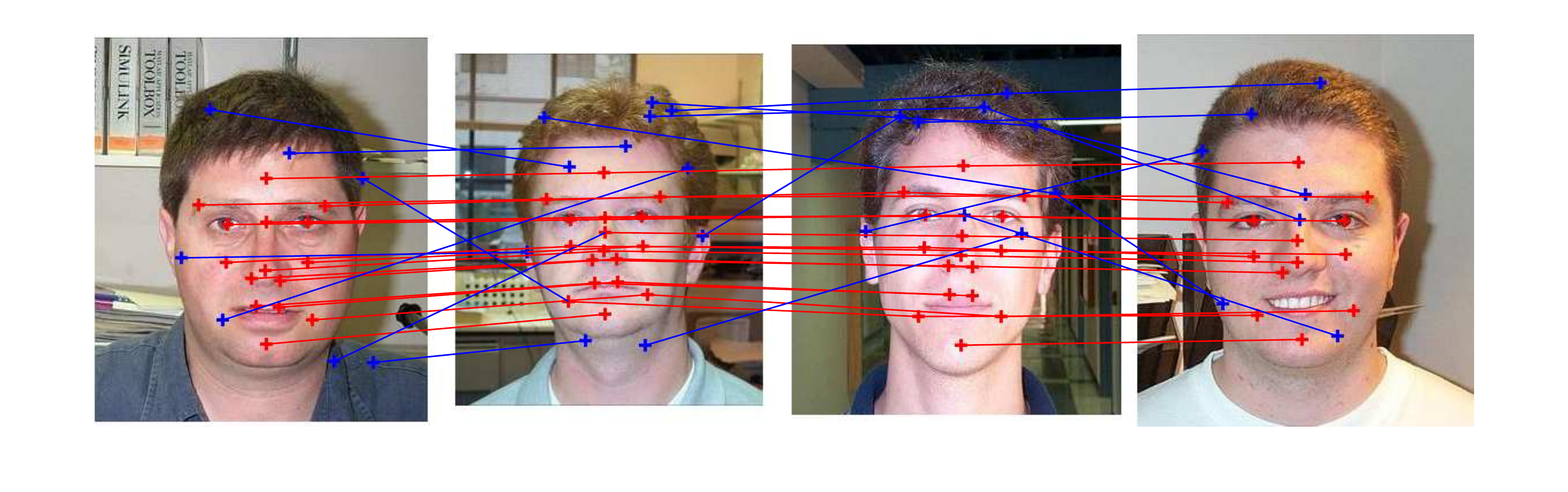} & \includegraphics[scale=0.16]{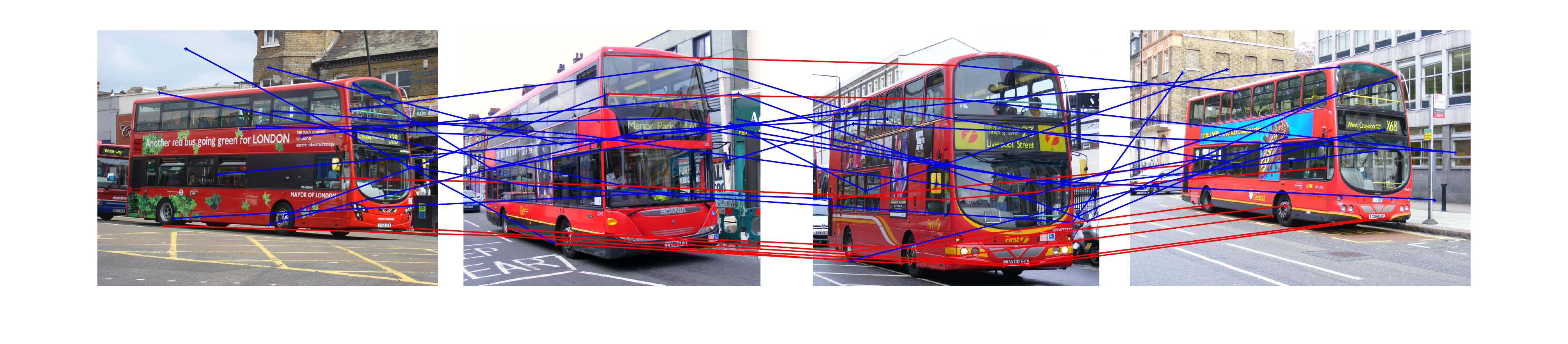} \vspace{-0.3cm} \\
\includegraphics[scale=0.15]{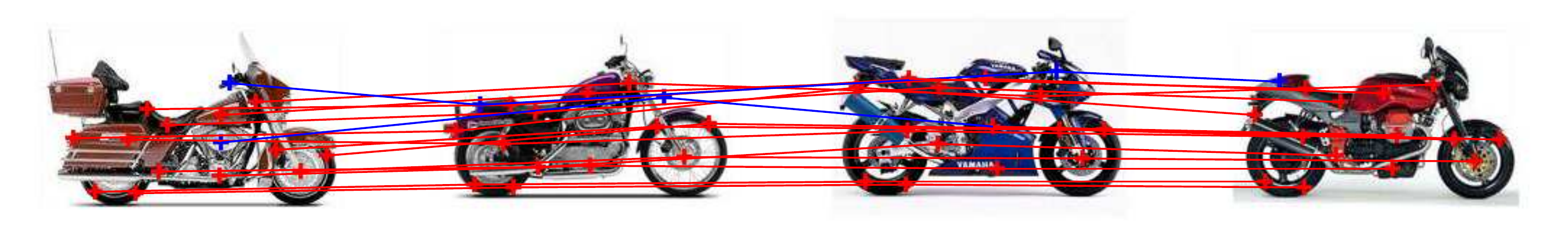} & \includegraphics[scale=0.12]{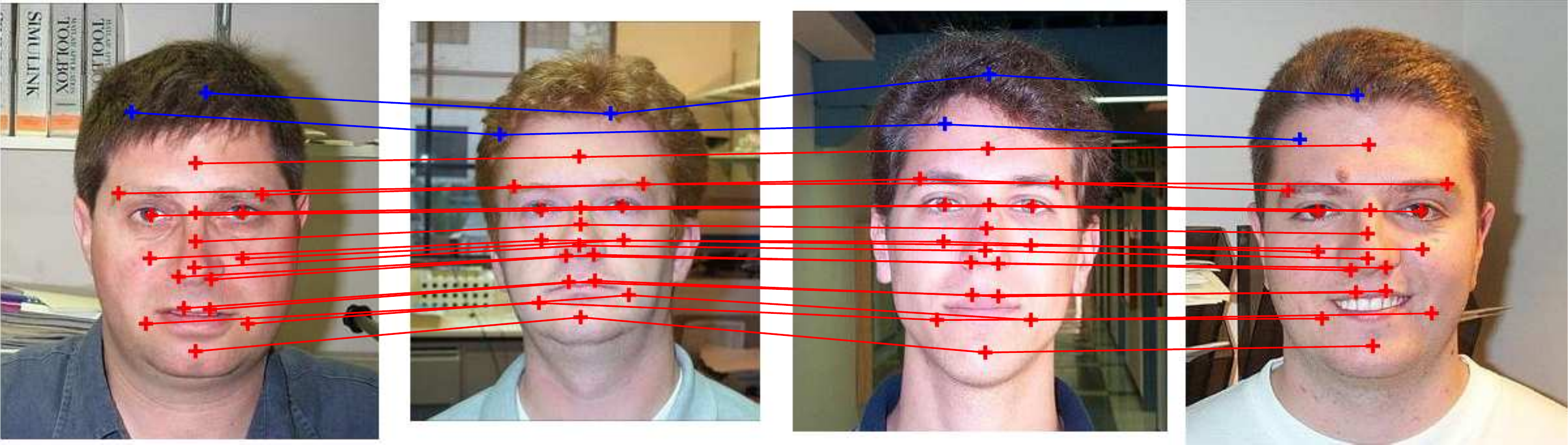} & \includegraphics[scale=0.16]{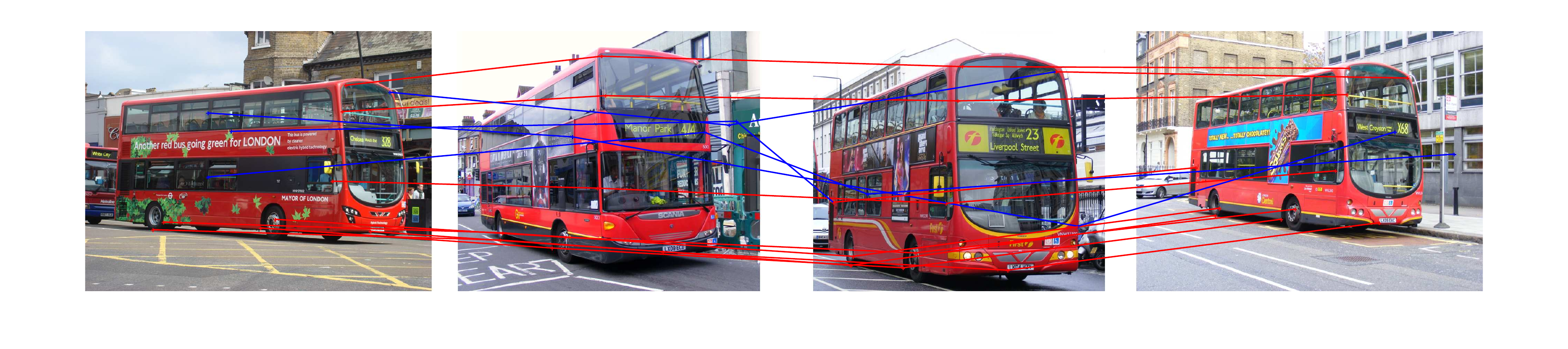}  \\

\end{tabular}

\caption{ {\small Example feature correspondences among $4$ images for different image sets. For every pair top is from DD \cite{TorresaniModelAndGlobalOpt}, and bottom is from our method. Red lines represent identified ground truth correspondences, and blue lines are for false ones.} }  \label{Caltech101MSRCExamImgShowFigs}
\end{figure*}

Different choices of dimension $d$ in low dimensional feature learning may influence our method's performance. In Figure \ref{PerformVsFeaDimAndImgNumFigs}-(a), we plot our matching accuracies with different choices of $d$ on these $6$ test sets. It shows that better results can generally be obtained when $d \geq 40$. It is expected that our method performs well only when the size of image sets (the $K$ value) is relatively large. In Figure \ref{PerformVsFeaDimAndImgNumFigs}-(b), we plot results of our method on the $6$ test sets with different choices of $K$. It shows that when $K > 10$, our method can stably get good results, which confirms that simultaneously matching a set of images is very useful for robust object matching.

\begin{figure}[t]
\centering

\includegraphics[scale=0.21]{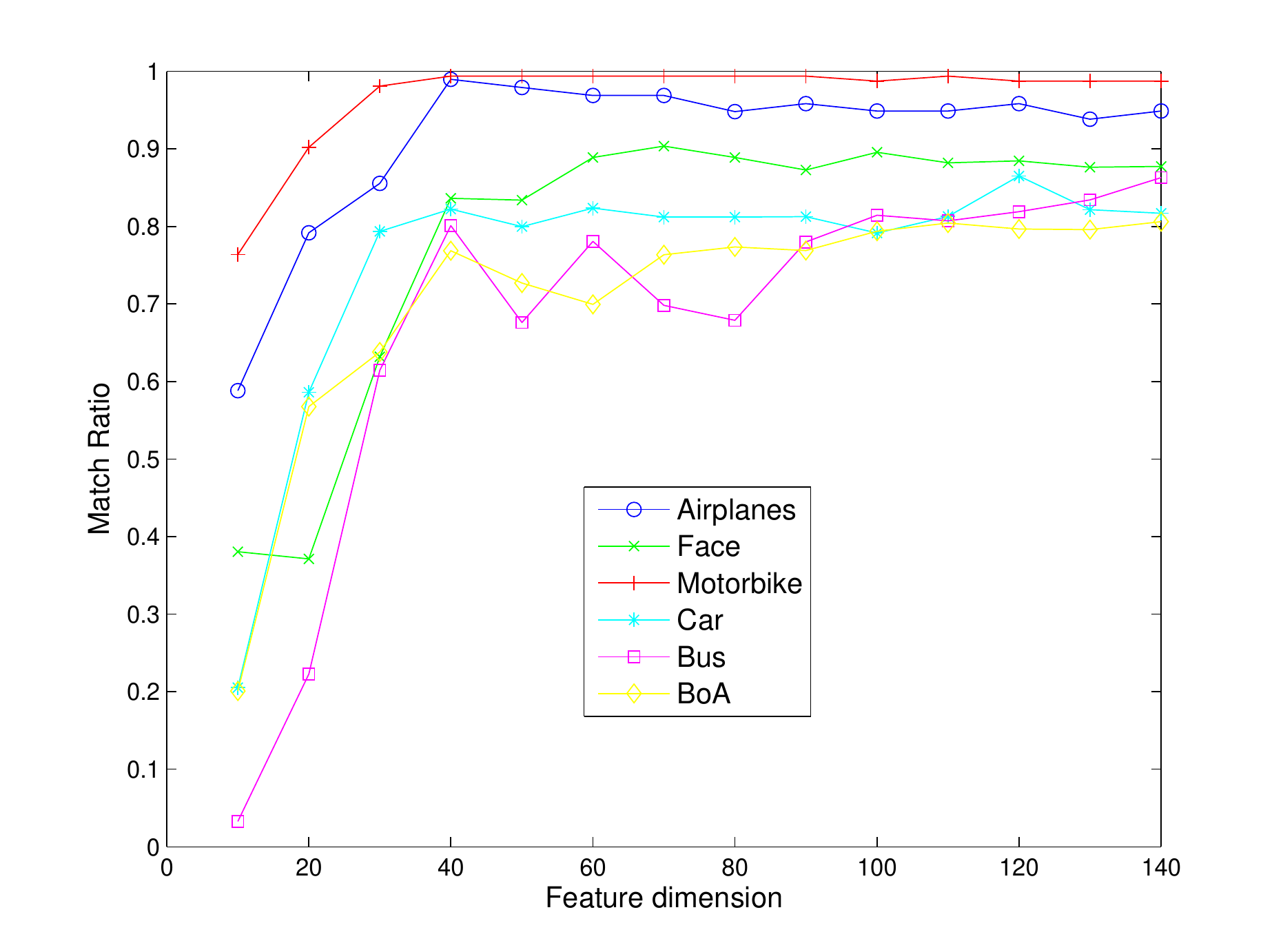}
\includegraphics[scale=0.21]{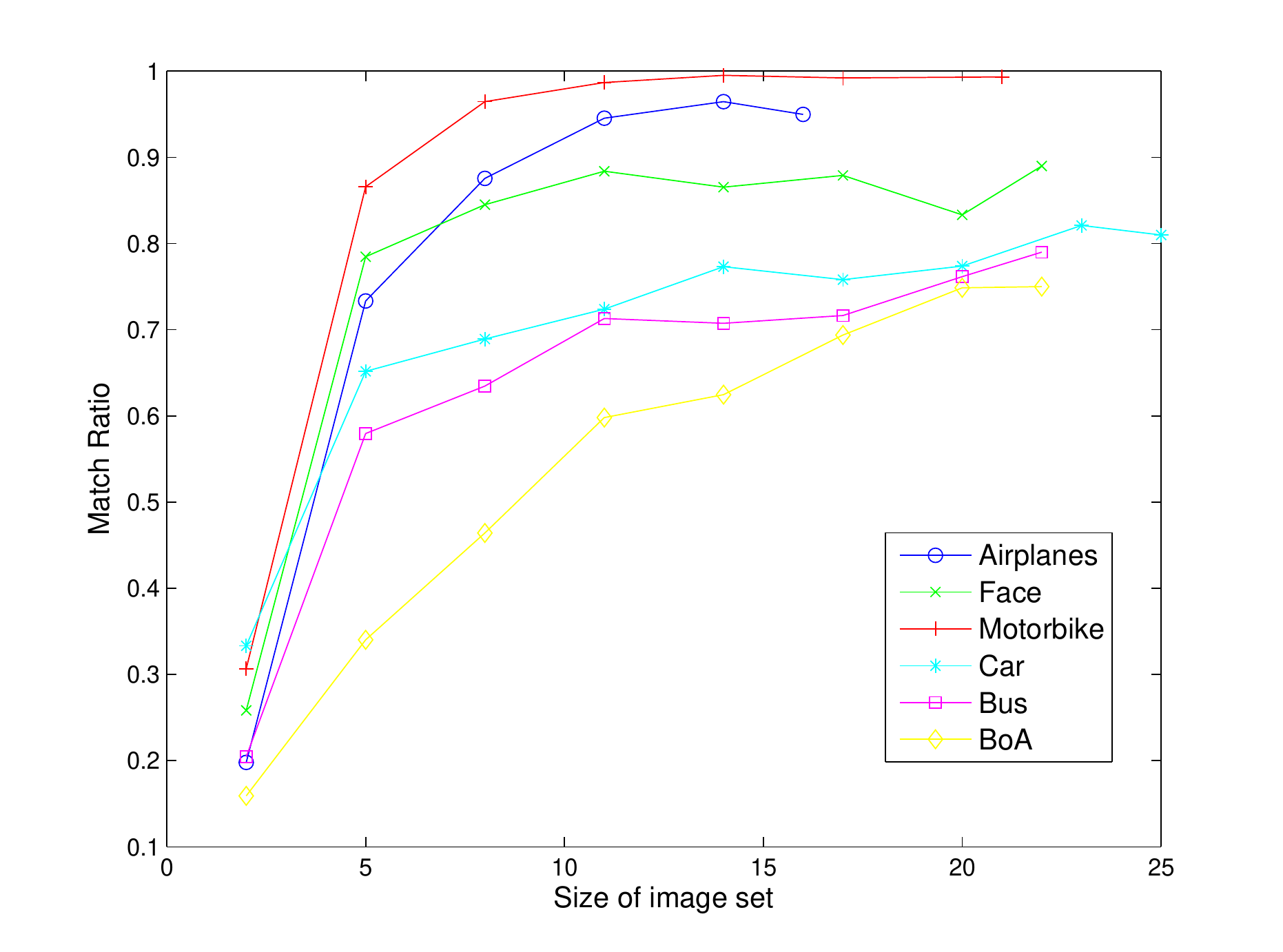} \\
\hfill (a) \hfill\hfill (b)  \hfill\hfill

\caption{ {\small Performance of ROML using (a) different choices of dimension $d$ in low-dimensional feature learning and (b) different sizes of image sets (the $K$ values).} }\label{PerformVsFeaDimAndImgNumFigs}
\end{figure}

Except for matching accuracy, one may also be interested in comparing matching efficiency of different methods. We have analyzed the computational complexity of our proposed method in Section \ref{ComplexityAnalysisSec}. In Table \ref{Table-Caltech101MSRCMainCompTimeResult}, we report practical computation time of different methods for those experiments reported in Table \ref{Table-Caltech101MSRCMainMatchRatioResult}. These experiments were conducted on an Intel Xeon CPU running at 2.8GHz, using Matlab implementation of different methods. Table \ref{Table-Caltech101MSRCMainCompTimeResult} suggests that ROML is much more efficient than the best-performing graph matching method DD \cite{TorresaniModelAndGlobalOpt}, and is slightly slower than other graph/hyper-graph matching methods. One-Shot \cite{OneShot} is very fast, however, its accuracy (reported in Table \ref{Table-Caltech101MSRCMainMatchRatioResult}) is not satisfactory. As an improved method of our previous work \cite{ZinanECCV}, ROML is much more efficient than \cite{ZinanECCV} as well. The improved efficiency is again due to the new way of PPM optimization in the present paper. In spite of this improved efficiency, most of ROML's computation is still on solving LSAPs for updating the set of $K$ PPMs (steps $5 \sim 7$ in Algorithm \ref{MainAlgm}), which concerns with $K$ independent subproblems and are fully parallelizable. When implementing the PPM optimization steps in parallel (ROML-Parallel in Table \ref{Table-Caltech101MSRCMainCompTimeResult}), efficiency of ROML is further improved.

\begin{table*}[t]
\caption{ {\small Computation time (seconds) of different methods on the $6$ image sets used in Table \ref{Table-Caltech101MSRCMainMatchRatioResult}. All experiments were conducted on an Intel Xeon CPU running at 2.8GHz, using Matlab implementation of different methods. } } \label{Table-Caltech101MSRCMainCompTimeResult}
\begin{center}
\begin{tabular}{lrrrrrrrrrrrrr}

\hline {\small Methods} & & {\small Airplanes} & & {\small Face} & & {\small Motorbike} & & {\small Car} & & {\small Bus} & & {\small BoA} & \\

\hline {\scriptsize RRWM \cite{GraphReweightedRandomWalk} }       & &  $22.92$  & &  $211.10$  & &  $53.20$  & &  $260.83$  & &  $157.30$  & &  $240.60$  & \\

\hline {\scriptsize SM \cite{LeordeanuSpectralCorrespondence} }   & &  $5.44$  & &  $61.71$  & &  $17.61$  & &  $39.31$  & &  $20.61$  & &  $43.15$  & \\

\hline {\scriptsize TM \cite{TensorHighOrderGraphMatching} }      & &  $44.01$  & &  $418.30$  & &  $145.38$  & &  $282.92$  & &  $187.39$  & &  $315.04$  & \\

\hline {\scriptsize RRWHM \cite{HyperGraphReweightedRandomWalk} } & &  $28.27$  & &  $178.34$  & &  $52.40$  & &  $255.72$  & &  $100.14$  & &  $185.97$  & \\

\hline {\scriptsize ProbHM \cite{ZassShashua} }                   & &  $39.58$  & &  $375.48$  & &  $128.45$  & &  $252.04$  & &  $168.70$  & &  $269.31$  & \\

\hline {\scriptsize DD \cite{TorresaniModelAndGlobalOpt} }        & &  $2145.56$  & &  $11714.50$  & &  $3999.30$  & &  $3688.99$  & &  $3972.35$  & &  $3827.95$  & \\

\hline {\scriptsize One-Shot \cite{OneShot} }                     & &  $0.50$  & &  $1.65$  & &  $1.15$  & &  $1.73$  & &  $1.44$  & &  $1.40$  & \\

\hline {\scriptsize Prev \cite{ZinanECCV} }                       & &  $874.19$  & &  $6641.19$  & &  $2497.25$  & &  $2220.49$  & &  $1665.51$  & &  $2640.98$  & \\

\hline {\scriptsize Prev \cite{ZinanECCV}-Parallel }              & &  $219.79$  & &  $1664.23$  & &  $626.41$  & &  $559.84$  & &  $419.05$  & &  $662.25$  & \\

\hline {\scriptsize ROML }                                        & &  $169.63$  & &  $839.95$  & &  $376.48$  & &  $305.51$  & &  $236.07$  & &  $415.51$  & \\

\hline {\scriptsize ROML-Parallel }                               & &  $52.09$  & &  $236.24$  & &  $110.90$  & &  $106.36$  & &  $74.74$  & &  $124.55$  & \\

\hline

\end{tabular}

\end{center}
\end{table*}


Results of ROML in the above experiments are obtained by setting the value of $n$ as the ground truth number of inliers for each image set. In practice, however, the true number of inliers is unknown. It is interesting to investigate how ROML performs when providing different values of $n$ to Algorithm \ref{MainAlgm}, assuming different numbers of inliers. We conducted such experiments using the same $6$ image sets. Results are plotted in Figure \ref{Figs-InlierNumEst-6ImgSets} in terms of Match Ratio and Identification Ratio (defined in Section \ref{InlierNumEstSec}). Figure \ref{Figs-InlierNumEst-6ImgSets} tells that better results can be achieved when the values of $n$ are close to the ground true ones, indicating the importance of knowing this prior knowledge. We have proposed in Section \ref{InlierNumEstSec} a scheme for estimating the true number of inliers for a given image set, i.e., the condition (\ref{EqnInlierNumEst}). Applying this scheme to the $6$ image sets gives the estimation results listed in Table \ref{Table-InlierNumEstResults-6ImgSets}. Compared with the true ones, results in Table \ref{Table-InlierNumEstResults-6ImgSets} seem biased towards smaller estimations. This may be due to large variations among different object instances that exist in these practical data. As a result, some of the corresponded inlier features across an image set become less correlated, and thus more like outliers. Nevertheless, Table \ref{Table-InlierNumEstResults-6ImgSets} and Figure \ref{Figs-InlierNumEst-6ImgSets} suggest that in these practical problems, we can still use the estimated numbers of inliers (as inputs of Algorithm \ref{MainAlgm}) to establish certain numbers of accurate feature matchings, as demonstrated by the Match Ratio results in Figure \ref{Figs-InlierNumEst-6ImgSets}.

\begin{figure}[t]
\centering

\includegraphics[scale=0.22]{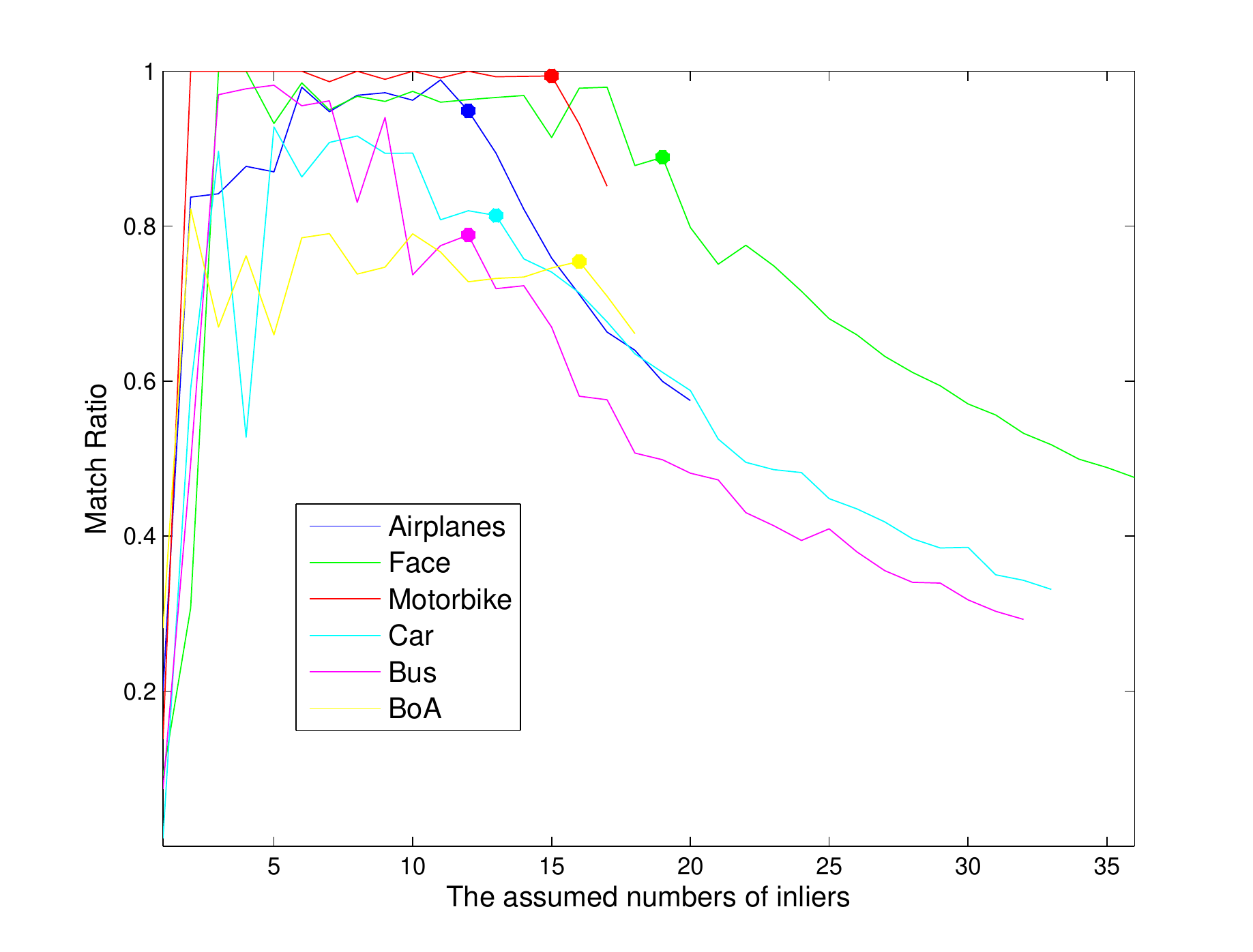}
\includegraphics[scale=0.22]{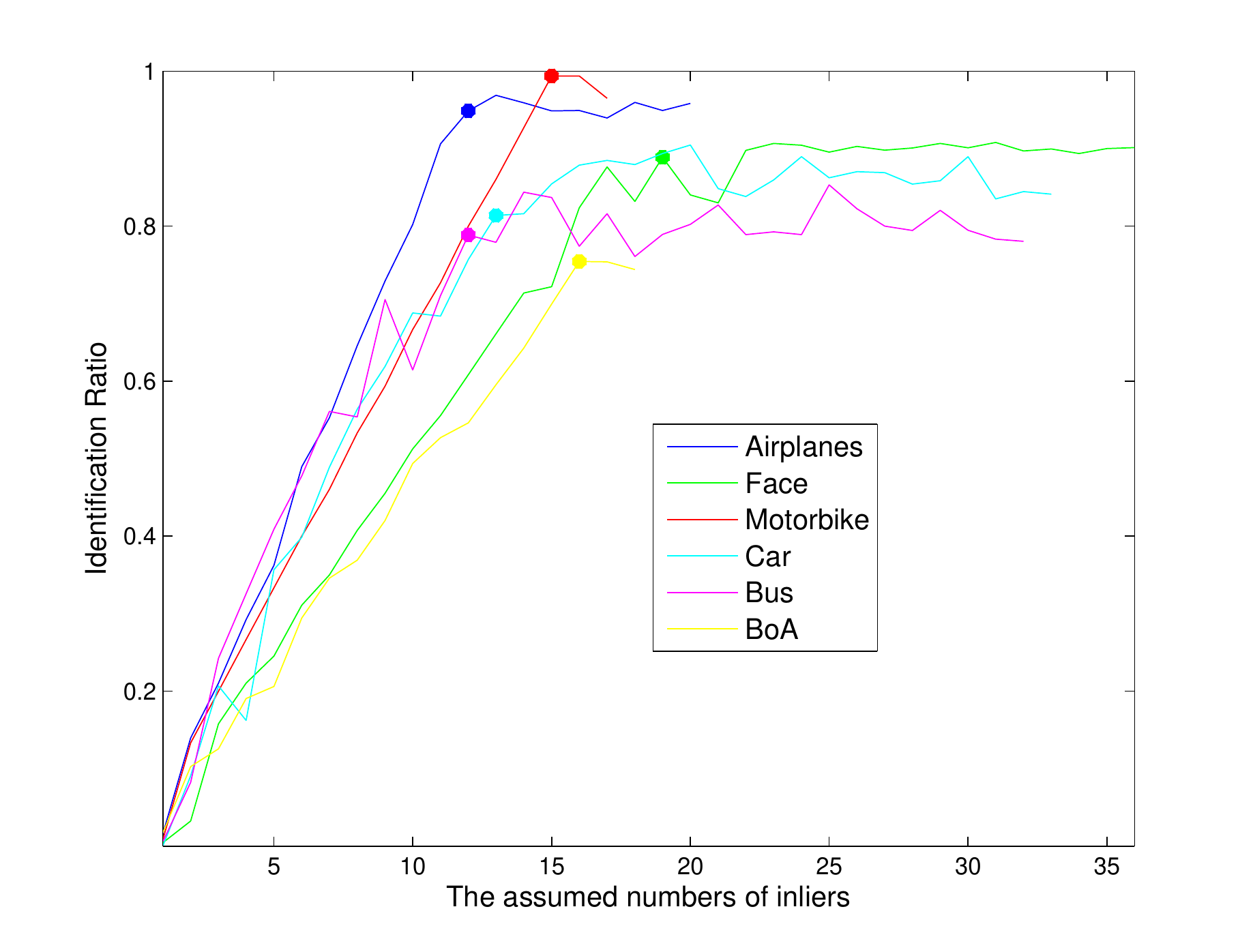} \\
\hfill (a) \hfill\hfill (b)  \hfill\hfill

\caption{ {\small Match Ratio (a) and Identification Ratio (b) of ROML on $6$ image sets of different object categories, obtained by providing different values of $n$ (the assumed number of inliers for each image set) to Algorithm \ref{MainAlgm}. The mark on each curve corresponds to the true number of inliers. } }\label{Figs-InlierNumEst-6ImgSets}
\end{figure}

\begin{table}[t]
\caption{ {\small Estimation of inlier numbers on $6$ image sets of different object categories, using the proposed scheme (\ref{EqnInlierNumEst}).} } \label{Table-InlierNumEstResults-6ImgSets}
\begin{center}
\begin{tabular}{ccc}

\hline & {\scriptsize Ground truth} & {\scriptsize Estimated} \\
       & {\scriptsize inlier number} & {\scriptsize inlier number} \\

\hline {\scriptsize Airplanes} & $12$ & $9$ \\

\hline {\scriptsize Face} & $19$ & $11$ \\

\hline {\scriptsize Motorbike} & $15$ & $13$ \\

\hline {\scriptsize Car} & $13$ & $10$ \\

\hline {\scriptsize Bus} & $12$ & $5$ \\

\hline {\scriptsize BoA} & $16$ & $11$ \\

\hline

\end{tabular}

\end{center}
\end{table}


%
%

In the above experiments, we evaluate ROML's performance by comparing the ground truth inlier features and their correspondences with those identified by ROML, e.g., the measure of Match Ratio. In practice, the ground truth information on which features are inliers is unavailable for a given image set. It is particularly interesting to detect the true inliers out of those features identified by ROML, so as to make ROML more useful for practical problems such as 3D reconstruction and object recognition. We have developed such a scheme (\ref{EqnTrueInlierDetect}) in Section \ref{InlierDetectSec}. Applying our proposed scheme to the $6$ image sets used in this section gives the results listed in Table \ref{Table-DetectActualInliers-6ImgSets}, where performance of the scheme (\ref{EqnTrueInlierDetect}) is measured by precision and recall scores (defined in Section \ref{InlierDetectSec}). These results were obtained by randomly replacing different portions of ground truth inliers with outliers for each image set, to simulate the scenarios of missing inliers. Table \ref{Table-DetectActualInliers-6ImgSets} shows that precision scores are generally high when not many true inliers are missing, and the corresponding recall scores are satisfactory to give enough numbers of true inliers for practical use. Corresponding Match Ratio results for experiments reported in Table \ref{Table-DetectActualInliers-6ImgSets} are also presented in Table \ref{Table-MatchRatioInDetectActualInliers-6ImgSets}, where Match Ratio is computed based on non-missing inliers. Table \ref{Table-MatchRatioInDetectActualInliers-6ImgSets} suggests that ROML is less influenced when only a small portion of ground truth inliers are missing.

\begin{table}[t]
\caption{ {\small Detection of true inliers on $6$ image sets of different object categories, using the proposed scheme (\ref{EqnTrueInlierDetect}). For each image set, different portions of randomly chosen ground truth inliers are replaced with outliers, to simulate the scenarios of missing inliers.  Results are presented in the format of {\it Precision/Recall}. } } \label{Table-DetectActualInliers-6ImgSets}
\begin{center}
\begin{tabular}{ccccc}

\hline {\scriptsize Ratios of}       &        &         &         &          \\
       {\scriptsize miss. inliers} & $5 \%$ & $10 \%$ & $30 \%$ & $50 \%$  \\

\hline {\scriptsize Airplanes} & $1.00/0.81$  &  $1.00/0.76$  &  $0.84/0.72$  &  $0.69/0.63$  \\

\hline {\scriptsize Face} &  $0.91/0.77$  &  $0.92/0.81$  &  $0.84/0.75$  &  $0.73/0.69$  \\

\hline {\scriptsize Motorbike} &  $1.00/0.93$  &  $0.95/0.87$  &  $0.81/0.78$  &  $0.62/0.70$  \\

\hline {\scriptsize Car} &  $0.94/0.78$  &  $0.93/0.77$  &  $0.80/0.67$  &  $0.62/0.60$  \\

\hline {\scriptsize Bus} &  $0.94/0.62$  &  $0.97/0.63$  &  $0.83/0.63$  &  $0.64/0.48$  \\

\hline {\scriptsize BoA} &  $1.00/0.73$  &  $0.90/0.60$  &  $0.63/0.42$  &  $0.53/0.39$  \\

\hline

\end{tabular}

\end{center}
\end{table}

\begin{table}[t]
\caption{ {\small Match Ratios of ROML on $6$ image sets of different object categories. For each image set, different portions of randomly chosen ground truth inliers are replaced with outliers, to simulate the scenarios of missing inliers.   } } \label{Table-MatchRatioInDetectActualInliers-6ImgSets}
\begin{center}
\begin{tabular}{ccccc}

\hline {\scriptsize Ratios of}       &        &         &         &          \\
       {\scriptsize missing inliers} & $5 \%$ & $10 \%$ & $30 \%$ & $50 \%$  \\

\hline {\scriptsize Airplanes} & $95 \%$  &  $94 \%$  &  $74 \%$  &  $69 \%$  \\

\hline {\scriptsize Face} & $80 \%$  &  $84 \%$  &  $77 \%$  &  $72 \%$   \\

\hline {\scriptsize Motorbike} &  $98 \%$  &  $93 \%$  &  $84 \%$  &  $69 \%$ \\

\hline {\scriptsize Car} &  $82 \%$  &  $79 \%$  &  $71 \%$  &  $57 \%$  \\

\hline {\scriptsize Bus} & $68 \%$  &  $72 \%$  &  $66 \%$  &  $55 \%$  \\

\hline {\scriptsize BoA} & $76 \%$  &  $59 \%$  &  $38 \%$  &  $30 \%$ \\

\hline

\end{tabular}

\end{center}
\end{table}

\subsection{Non-Rigid Object Moving in a Video Sequence}
\label{ExpTrackingSec}

Lastly, we test how ROML performs to match a non-rigid object moving in a video sequence. This could be a much harder application scenario than that in Section \ref{ExpCaltech101MSRCSec}. For example, in a video sequence with static background, some salient points in the background consistently appear across the video frames, with little change of appearance, and they can form their own ``low-rank pattern''. The corresponding salient points (inliers) on the non-rigid foreground of interest may only appear in some of the frames, possibly with changing appearance. In this section, we used a $25$-frame ``Tennis'' sequence and a $50$-frame ``Marple'' sequence \cite{BroxMotionSegData} to test how ROML performs in this challenging scenario. For the first sequence, we used KLT tracker \cite{KLT} to detect $100$ interest points in each frame. For the second one, we detected $150$ interest points in each frame. We again used the type of learned embedded features as explained in Section \ref{CoordDescriptorCombSec}. The parameter settings were the same as those used in Section \ref{ExpCaltech101MSRCSec}. Figure \ref{ROMLFailureExamShowFigs} illustrates the matching results of ROML among $4$ frames of these two sequences respectively. Clearly, most of the identified correspondences by ROML are from the background, showing the failure when applying ROML to this challenging scenario. In fact, comparative graph and hyper-graph matching methods all failed on these two sequences. This difficulty of mining/matching foreground objects from video sequences with static background poses a common challenge for these feature-based object matching methods.

\begin{figure}[t]
\centering

\includegraphics[scale=0.1875]{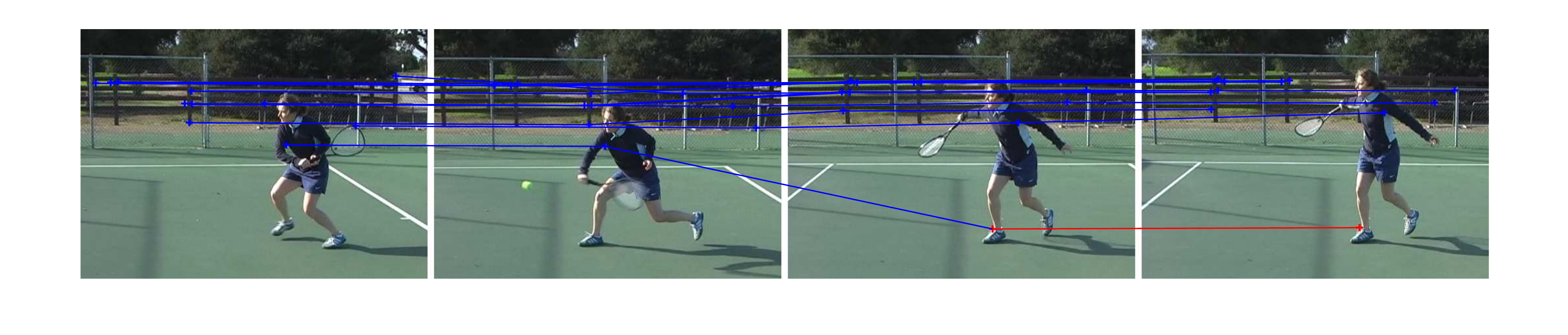} \vspace{-0.2cm} \\
\includegraphics[scale=0.22]{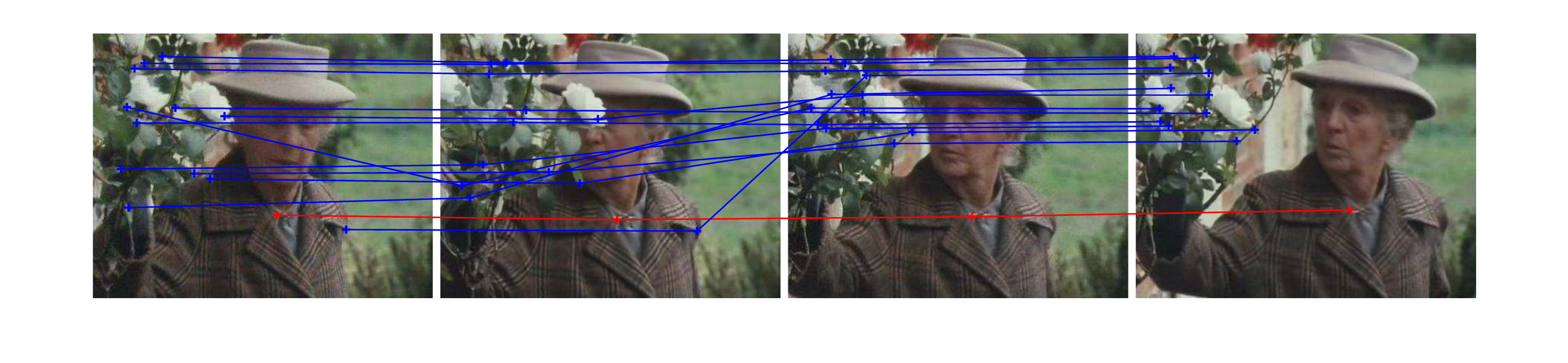} \\

\caption{ {\small Illustration of the failure of ROML on matching a non-rigid object moving in a video sequence. Most of the identified correspondences by ROML are from the background (blue lines), rather than from the foreground object of interest (red lines).} }\label{ROMLFailureExamShowFigs}
\end{figure}

The above challenge can be largely alleviated when inliers from the foreground object can be specified in the first frame of a video sequence. This resembles an object tracking scenario. In the following, we report experiments that show how ROML performs in this alleviated non-rigid object matching scenario. More specifically, given a video sequence with interest points detected in each of the total $K$ frames, we label inlier points from those detected in the first frame. The task is to match these inlier points, which are supposed to be on the object of interest, across the subsequent $K - 1$ video frames, simulating a tracking scenario. We again used the ``Tennis'' and ``Marple'' sequences. We labelled $11$ inliers from the $100$ detected interest points in the first frame of the ``Tennis'' sequence, and $14$ inliers from the $150$ detected interest points in the first frame of the ``Marple'' sequence. To adapt our method to this scenario, we simply fix ${\bf P}^{1}$ in steps $5$ to $7$ of Algorithm \ref{MainAlgm} so that it selects the $n$ inliers labelled in the first frame, while optimizing the other $K - 1$ PPMs $\{ {\bf P}^{k} \}_{k=2}^{K}$. \footnote{To make this scheme effective, we {\it emphasize} the labelled first frame by normalizing feature vectors of its interest points to have a larger value of $\ell_{2}$-norm, compared to those in the other $K-1$ frames. This trick is important since when applying ROML to a video sequence with static background, it is possible that some of the interest points in the background are selected to form a ``low-rank pattern'', while those of the true pattern labelled in the first frame are treated as outliers. Theoretical correctness of this scheme is pending for proof. In practice, we observed that it worked well, and we defer the proof of this scheme in future research.} We compare ROML with a baseline KLT tracker, and recent graph and hyper-graph matching methods \cite{TorresaniModelAndGlobalOpt,GraphReweightedRandomWalk,LeordeanuSpectralCorrespondence,TensorHighOrderGraphMatching,HyperGraphReweightedRandomWalk,ZassShashua}. Since inlier points in the first frame are labelled ground truth, for graph and hyper-graph matching methods, we generated $24$ and $49$ frame pairs for the ``Tennis'' and ``Marple'' sequences respectively, i.e., between the first frame and each of the other frames, and used them for pair-wise matching. The other settings of these methods were the same as those used to produce Table \ref{Table-Caltech101MSRCMainMatchRatioResult} in Section \ref{ExpCaltech101MSRCSec}. Parameters of these methods were also tuned to their respective best results on the two sequences. Table \ref{Table-NonRigidTrackingResult} reports the quantitative results of different methods in terms of Match Ratio. Example correspondences of interest points for DD \cite{TorresaniModelAndGlobalOpt} and our method are also shown in Figure \ref{NonRigidTrackingExamFrameShowFigs}. KLT tracker generally fails since there are abrupt motion and/or occlusion of inlier points in these two sequences. Compared to graph/hyper-graph matching methods, our method gives better results, which confirms the effectiveness of ROML for simultaneous multi-image object matching. We also compare with our previous method \cite{ZinanECCV} in Table \ref{Table-NonRigidTrackingResult}. Consistent to those results in Section \ref{ExpCaltech101MSRCSec}, ROML again improves over Prev \cite{ZinanECCV} on these two sequences.

\begin{table*}[t]
\caption{ {\small Match Ratios of different methods on the ``Tennis'' and ``Marple'' sequences \cite{BroxMotionSegData}.} } \label{Table-NonRigidTrackingResult}
\begin{center}
\begin{tabular}{cccccccccc}
\hline {\scriptsize Methods} & {\scriptsize KLT \cite{KLT} } & {\scriptsize RRWM \cite{GraphReweightedRandomWalk} } & {\scriptsize SM \cite{LeordeanuSpectralCorrespondence} } & {\scriptsize TM \cite{TensorHighOrderGraphMatching} } & {\scriptsize RRWHM \cite{HyperGraphReweightedRandomWalk} } & {\scriptsize ProbHM \cite{ZassShashua} } & {\scriptsize DD \cite{TorresaniModelAndGlobalOpt} } & {\scriptsize Prev \cite{ZinanECCV} } & {\scriptsize ROML } \\

\hline {\scriptsize Tennis} & { $ 3\%$} & { $ 23\%$} & { $ 43\%$} & { $ 13\%$} & { $ 18\%$} & { $ 16\%$} & { $ 57\%$} & { $52 \%$} & { ${\bf 73\%}$}  \\

\hline {\scriptsize Marple} & { $ 4\%$} & { $ 3\%$} & { $ 25\%$} & { $ 8\%$} & { $ 13\%$} & { $ 14\%$} & { $ 23\%$} & { $41 \%$} & { ${\bf 51\%}$}  \\

\hline

\end{tabular}

\end{center}
\end{table*}

\begin{figure}[t]
\centering

\begin{tabular}{c}
\includegraphics[scale=0.18]{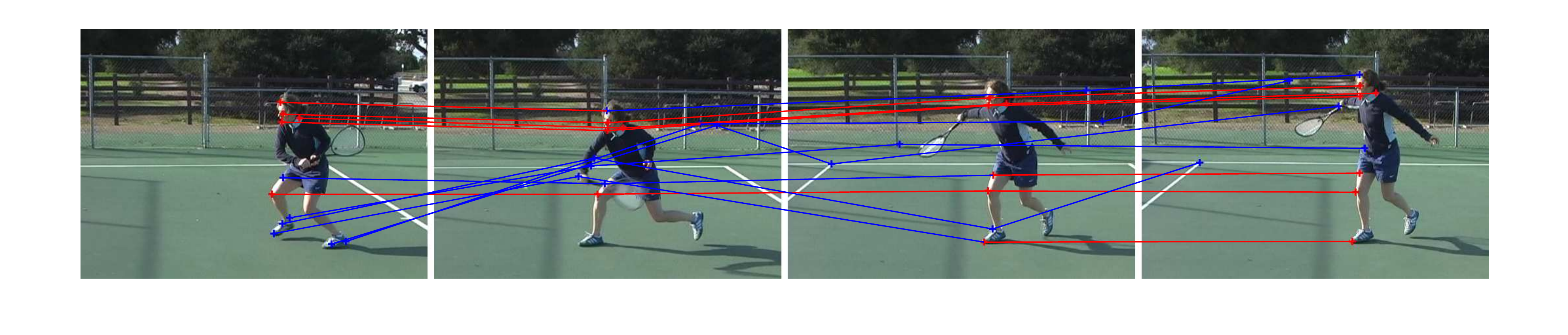} \vspace{-0.2cm} \\
\includegraphics[scale=0.1825]{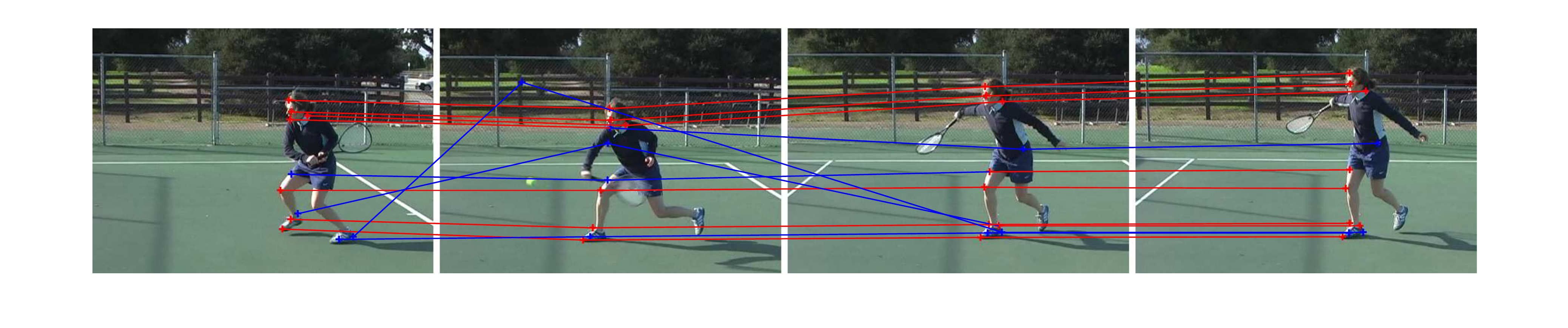} \\

\includegraphics[scale=0.22]{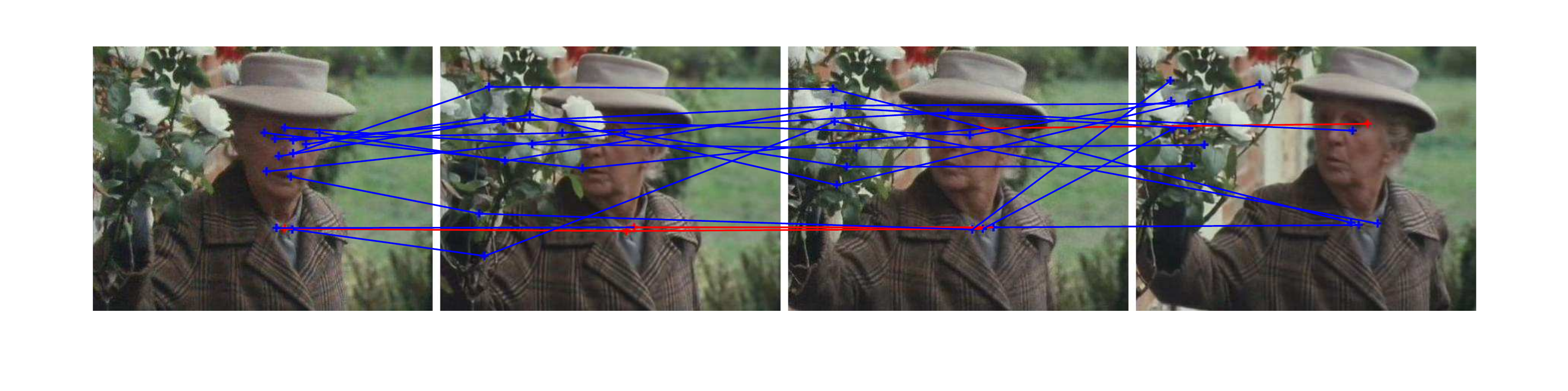} \vspace{-0.4cm} \\
\includegraphics[scale=0.22]{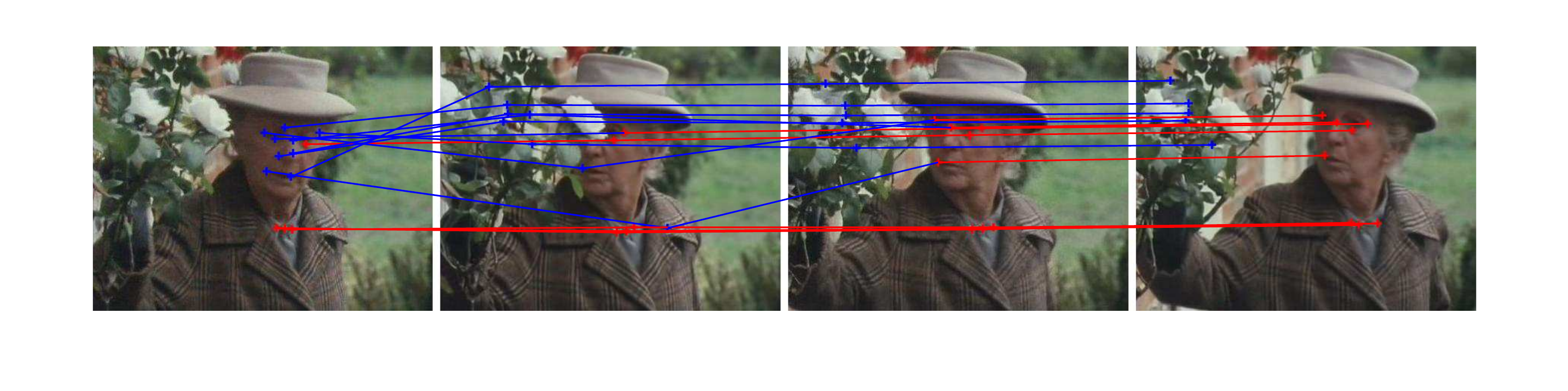}   \\
\end{tabular}

\caption{ {\small Example correspondences of interest points among $4$ frames of the ``Tennis'' and ``Marple'' sequences \cite{BroxMotionSegData} respectively. For every pair top is from DD \cite{TorresaniModelAndGlobalOpt}, and bottom is from our method. Red lines represent identified ground truth correspondences, and blue lines are for false ones.} }\label{NonRigidTrackingExamFrameShowFigs}
\end{figure}

\section{Common Object Localization}
\label{ExpCOLSec}

Learning models of object categories typically requires manually labelling a large amount of training images (e.g., up to a bounding box of the object of interest), which however, are expensive to obtain and may also suffer from unintended biases by annotators. A recently emerging research topic \cite{TuytelaarsUOCDSurvey} considers automatically discovering and learning object models from a collection of unlabelled images. Given an image collection containing object instances belonging to unknown categories, the task is to identify the categories, localize object instances in images, and learn models for them so that the learned models can be applied to novel images for object detection. This is a weakly supervised (or unsupervised) learning scenario when the image collection is known to contain object instances of a single category (or multiple categories), which is in general ill-posed. A critical component for success of learning is precise object localization inside each image. However, precise {\it common object localization} (COL) is extremely difficult given unknown object categories/models, and also large intra-category variations and cluttered background.

Many methods have been proposed for this challenging task in either weakly supervised or unsupervised settings \cite{UOCDLinkAnalysis,ForegroundFocusForUOCD,TopicModelImageCorrespUOCD,RussellSegmentationUOCD}. Among them the methods \cite{ForegroundFocusForUOCD,UOCDLinkAnalysis,TopicModelImageCorrespUOCD} explicitly take object (or its associated parts/features) localization into account. These methods normally require the objects of interest covering a large portion of the images. More recently, saliency guided object learning techniques \cite{bMCL,ObjectnessForUOCD} are proposed, which exploit generic knowledge of ``objectness'' \cite{Objectness,ObjectProposal,SalientyByComposition} obtained from low-level image cues and/or learning from other irrelevant annotated images. Consequently, they can potentially locate object instances with large scale/appearance variations in cluttered background.

In this section, we present experiments to show how ROML can be applied to this COL task using local region descriptors as features. Similar to \cite{ObjectnessForUOCD}, we also sample candidate bounding boxes from each image based on their objectness scores, and use appropriate region descriptors to characterize the appearance inside each bounding box. We then optimize (\ref{EqnNuclearL1Form}) to select a bounding box from each image, i.e., $n = 1$ for the PPMs to be optimized. Ideally the selected bounding boxes should localize object instances deemed common in the given image collection, i.e., the matrix ${\bf L}$ in (\ref{EqnNuclearL1Form}) is rank deficient. We used the PASCAL datasets \cite{PASCALVOC2006,PASCALVOC2007} for the COL experiments in both weakly supervised and unsupervised settings. For the weakly supervised case, we followed the same settings as in \cite{ObjectnessForUOCD}. In particular, we used a subset of the PASCAL06 \cite{PASCALVOC2006} train+val dataset containing all images of $6$ classes (bicycle, car, cow, horse, motorbike, sheep) from the left and right viewpoints. We conducted COL on all images of each class-viewpoint combination, which are assumed to contain object instances of the same class at a similar viewpoint. To make the problem better defined, we followed \cite{ObjectnessForUOCD} and removed images in which all objects are marked as difficult or truncated in the ground truth annotation. The PASCAL07 dataset \cite{PASCALVOC2007} is more challenging as objects vary greatly in appearance, scale, and location. We also used $6$ classes (aeroplane, bicycle, boat, bus, horse, and motorbike) of the PASCAL07 train+val dataset from the left and right viewpoints. The other settings were the same as for the PASCAL06 dataset. These classes of PASCAL06 and PASCAL07 datasets were chosen because they are the object classes on which fully supervised methods can perform reasonably well. For every image in one class-viewpoint combination, we used \cite{Objectness} to sample $100$ bounding boxes proportionally to their probability of containing an object (the objectness score). To describe the region appearance inside each bounding box, we used the GIST descriptor with the default parameters as in \cite{GIST}, which gives a $512$-dimensional feature vector. As suggested in \cite{ObjectnessForUOCD}, the shape and objectness score of a bounding box provide additional information that may help for COL. Let ${\bf f}$ be the GIST descriptor vector for a bounding box. To use shape and objectness score, we first augmented ${\bf f}$ with the aspect ratio ($width / height$) $r$ of the bounding box, and then added perturbation noise ${\bf n} \in \mathbb{R}^{513}$ whose entries were drawn from normal distribution with standard deviation set as one minus objectness score of the bounding box. We used the thus produced vector $[{\bf f}^{\top} \ \kappa_{r}r]^{\top} + \kappa_{\bf n}{\bf n}$ as the feature for each sampled bounding box, where $\kappa_{r}$ and $\kappa_{\bf n}$ are weighting parameters. We set $\kappa_{r} = 0.08$ and $\kappa_{\bf n} = 0.015$ for all the experiments reported in this section.

We measure COL performance by the percentage of correctly localized images out of all images in a class-viewpoint combination, where localization correctness in an image is based on PASCAL criteria, i.e., intersection of a bounding box with ground truth is more than half of their union. We compare with several baseline weakly supervised object localization and learning methods including MultiSeg \cite{RussellSegmentationUOCD} and Exemplar \cite{ChumUOCD}, and also with WSL-GK \cite{ObjectnessForUOCD}, which is saliency guided and performs EM-like alternation of localizing objects and learning the object class model. In the preparation of this paper, we notice that a more recent work \cite{TaoXiangSaliencyForCOL} gives better COL performance by using more advanced saliency estimation method. Since this section is mainly to show the usefulness of ROML for the COL task, we will not pursue adopting this new saliency method to further improve our results.

Table \ref{Table-COLOverallAccuracy} reports COL accuracies of different methods on the PASCAL06 and PASCAL07 datasets, which are obtained by averaging over all class-viewpoint combinations. Results of MultiSeg \cite{RussellSegmentationUOCD} and Exemplar \cite{ChumUOCD} in Table \ref{Table-COLOverallAccuracy} are from \cite{ObjectnessForUOCD}. Table \ref{Table-COLOverallAccuracy} suggests that Objectness \cite{Objectness} gives very good initial candidates of object bounding boxes. Consequently, results of both our method and WSL-GK \cite{ObjectnessForUOCD} on the PASCAL06 and PASCAL07 datasets compare favorably with those from MultiSeg \cite{RussellSegmentationUOCD} and Exemplar \cite{ChumUOCD}. For the PASCAL07 dataset, our method is comparable to WSL-GK \cite{ObjectnessForUOCD} when no iterative steps of class learning are performed in \cite{ObjectnessForUOCD}, and greatly outperforms \cite{ObjectnessForUOCD} for the PASCAL06 dataset, for which our result in fact approaches final result of \cite{ObjectnessForUOCD}, which is obtained after full steps of class learning and using richer feature representation including GIST, color information, and HOG for object shapes. Since the present paper is focusing on object matching and localization, we defer extension of our method for object class learning as future research.

\begin{table*}[t]
\caption{ {\small COL accuracies of different methods on the PASCAL06 and PASCAL07 datasets. For objectness \cite{Objectness}, sampled bounding box with the highest score in each image is considered as the estimated localization.} } \label{Table-COLOverallAccuracy}
\begin{center}

\begin{tabular}{cc}

\begin{tabular}{|c|c|c|c|c|c|}
\hline \multirow{1}{*}{ {          } } & { {\scriptsize Objectness \cite{Objectness} } } & { {\scriptsize MultiSeg \cite{RussellSegmentationUOCD} } } & { {\scriptsize Exemp. \cite{ChumUOCD} } } & { {\scriptsize WSL-GK \cite{ObjectnessForUOCD} } } & { {\scriptsize ROML } }  \\
       \multirow{1}{*}{ {  } } & { { } } & { { } } & { { } } & { {\scriptsize (No Learning) } } & { { } }  \\
\hline \multirow{1}{*}{ {\scriptsize PASCAL06 } } & { { $51 \%$} } & { { $28 \%$ } } & { { $45 \%$} } & { { $55 \%$ } } & { { ${\bf 64 \%}$ } }  \\
\hline \multirow{1}{*}{ {\scriptsize PASCAL07 } } & { { $28 \%$} } & { { $22 \%$ } } & { { $33 \%$} } & { { ${\bf 37 \%}$ } } & { { $36 \%$} }  \\
\hline
\end{tabular}

\hspace{-0.2cm}
&
\hspace{-0.2cm}

\begin{tabular}{|c|}
\hline { {\scriptsize WSL-GK \cite{ObjectnessForUOCD} } }  \\
       { {\scriptsize (With Learning) } }  \\
\hline { { $64 \%$ } }  \\
\hline { { $50 \%$} }  \\
\hline
\end{tabular}

\end{tabular}

\end{center}
\end{table*}

We also conducted COL experiments in the unsupervised setting using $4$ classes from the PASCAL06 (bicycle, car, cow, and sheep) and PASCAL07 (aeroplane, bus, horse, and motorbike) datasets respectively. Other data setups were the same as those in the above weakly supervised COL experiments. For either of the PASCAL06 and PASCAL07 datasets, we put all images of different classes from one viewpoint as an image collection, and applied ROML for object localization. Performance was again measured by the percentage of correctly localized images out of all images in a class-viewpoint combination. Table \ref{Table-COLClassSpecificAccuracy} reports detailed results of different class-viewpoint combinations, where we also list results of ROML in the weakly supervised setting. Table \ref{Table-COLClassSpecificAccuracy} tells that ROML performs consistently well in both weakly supervised and unsupervised object localization. Example images of these classes with localized bounding boxes are shown in Figure \ref{COLExamImgShowFigs}, where we also show the bounding boxes with the highest objectness score in each image and those of ROML in weakly supervised setting for comparison.

\begin{table*}
\caption{ {\small COL accuracies of ROML for different class-viewpoint combinations of the PASCAL06 and PASCAL07 datasets in both weakly supervised and unsupervised settings.} } \label{Table-COLClassSpecificAccuracy}
\begin{center}
\begin{tabular}{ccccccccccc}
\hline  & & & {\scriptsize PASCAL06} &  & & & & {\scriptsize PASCAL07} &  & \\
\hline  & & {\scriptsize Bicycle} & {\scriptsize Car} & {\scriptsize Cow} & {\scriptsize Sheep} & & {\scriptsize Aeroplane} & {\scriptsize Bus} & {\scriptsize Horse} & {\scriptsize Motorbike} \\
\hline  {\scriptsize Weakly Supervised - Left} & & $84 \%$ & $79 \%$ & $60 \%$ & $58 \%$ & & $26 \%$ & $24 \%$ & $40 \%$ & $56 \%$ \\
\hline  {\scriptsize Unsupervised - Left} & & $80 \%$ & $79 \%$ & $60 \%$ & $52 \%$ & & $30 \%$ & $29 \%$ & $35 \%$ & $56 \%$ \\
\hline  {\scriptsize Weakly Supervised - Right} & & $69 \%$ & $70 \%$ & $66 \%$ & $52 \%$ & & $38 \%$ & $61 \%$ & $35 \%$ & $65 \%$ \\
\hline  {\scriptsize Unsupervised - Right} & & $67 \%$ & $63 \%$ & $57 \%$ & $40 \%$ & & $28 \%$ & $48 \%$ & $41 \%$ & $56 \%$ \\

\hline

\end{tabular}
\end{center}
\end{table*}

\begin{figure*}[t]
\centering

{\small

\begin{tabular}{ccccccccc}

Bicycle & Car & Cow & Sheep & & Aeroplane & Bus & Horse & Motorbike \\

\includegraphics[scale=0.2]{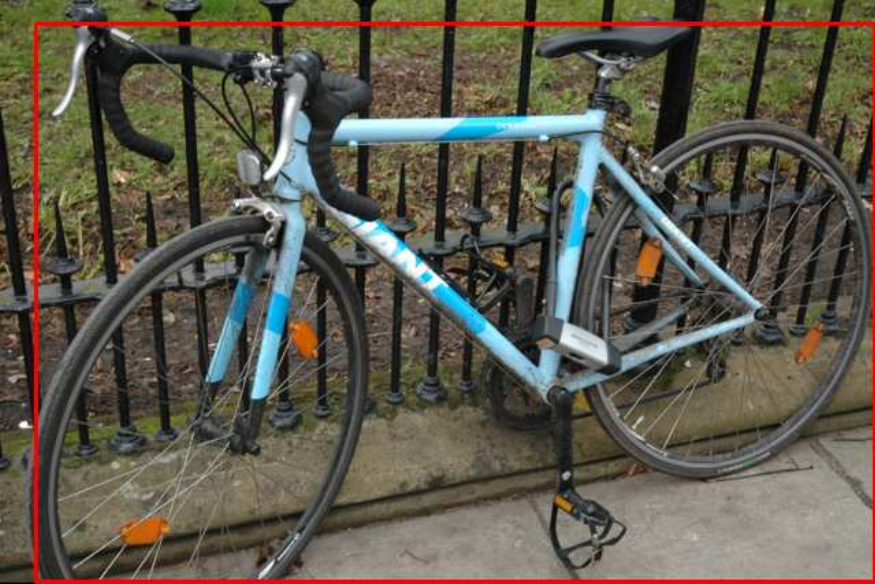} & \includegraphics[scale=0.2]{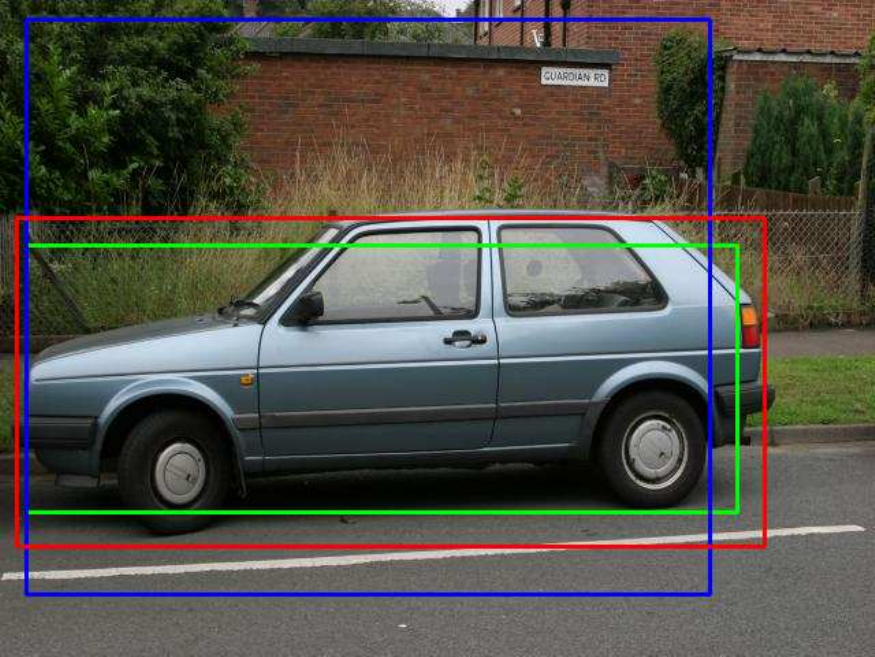} & \includegraphics[scale=0.2]{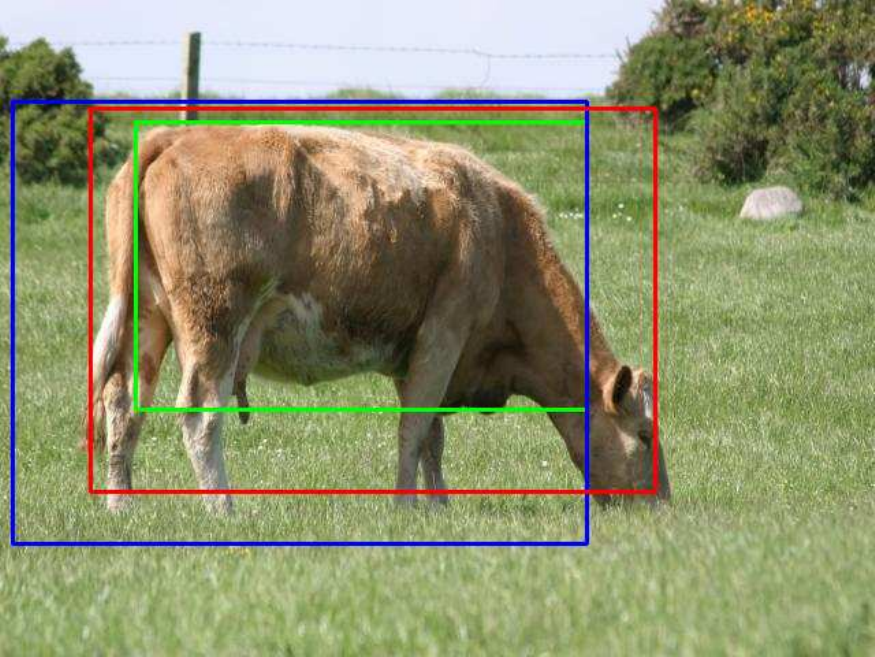} & \includegraphics[scale=0.2]{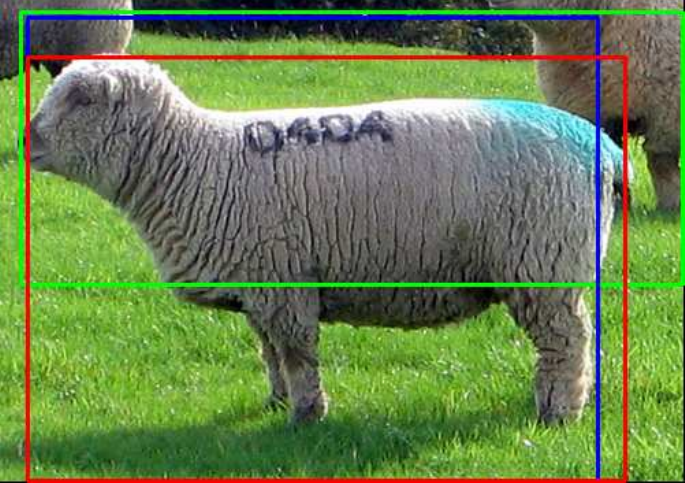} & & \includegraphics[scale=0.2]{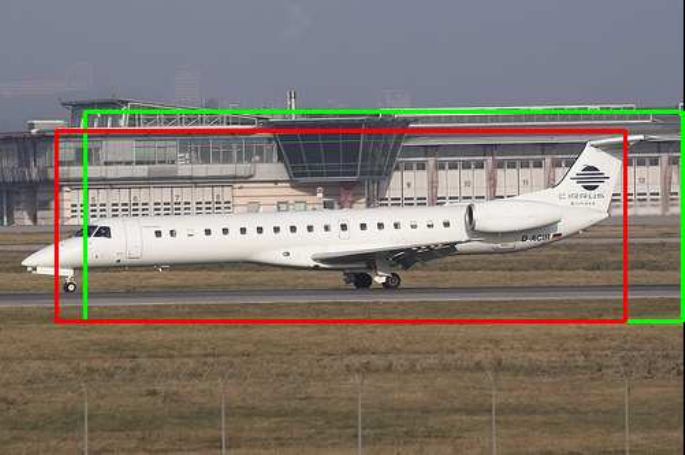} & \includegraphics[scale=0.2]{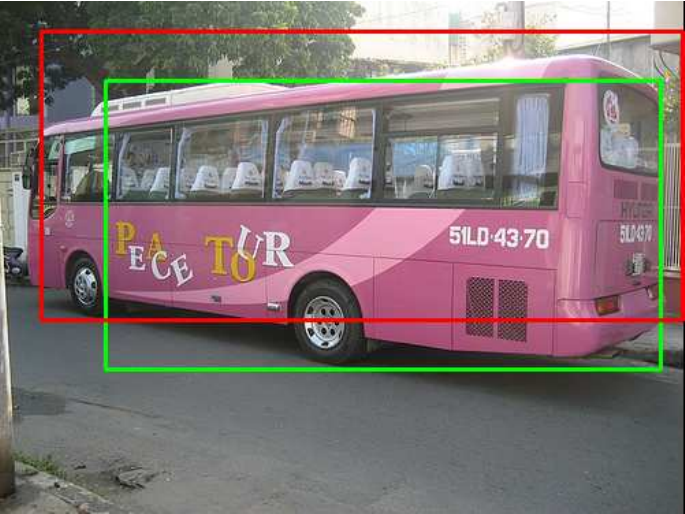} & \includegraphics[scale=0.2]{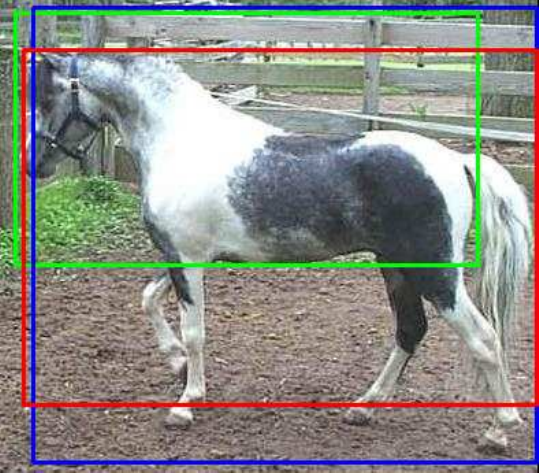} & \includegraphics[scale=0.2]{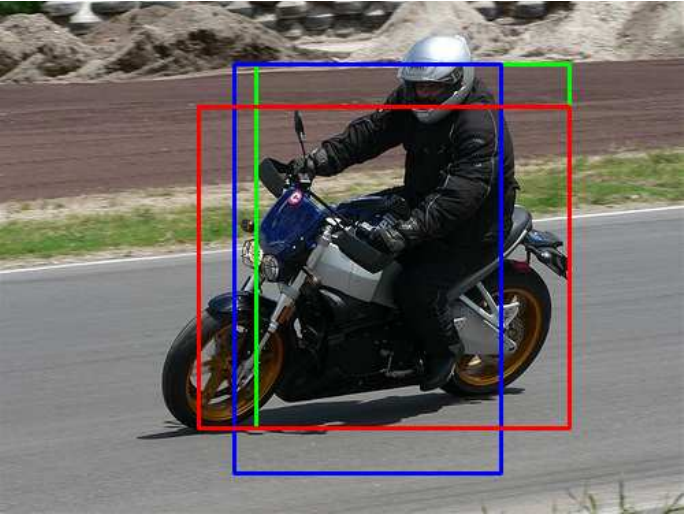} \\

\includegraphics[scale=0.2]{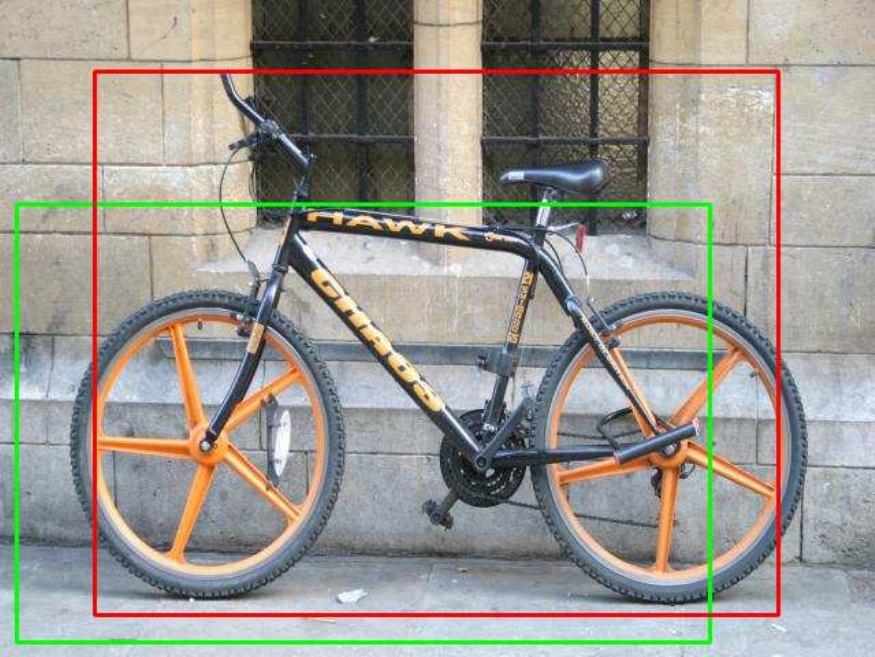} & \includegraphics[scale=0.2]{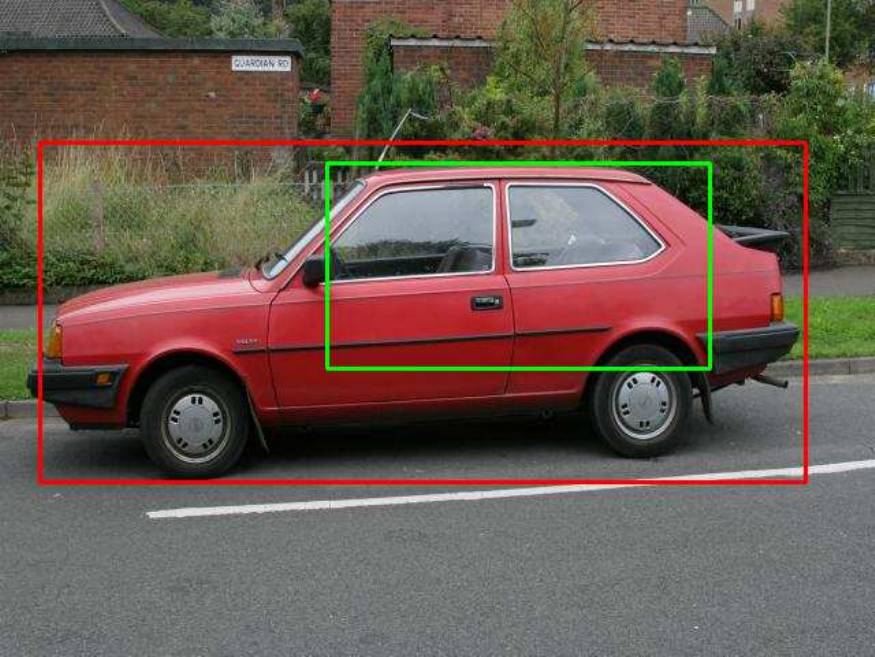} & \includegraphics[scale=0.2]{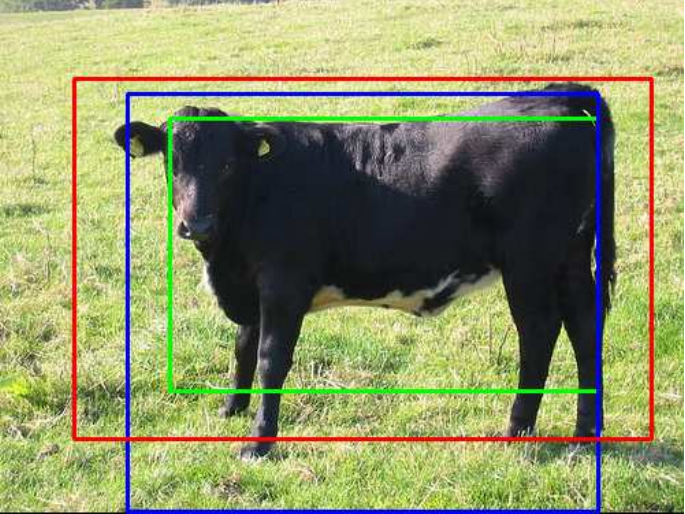} & \includegraphics[scale=0.2]{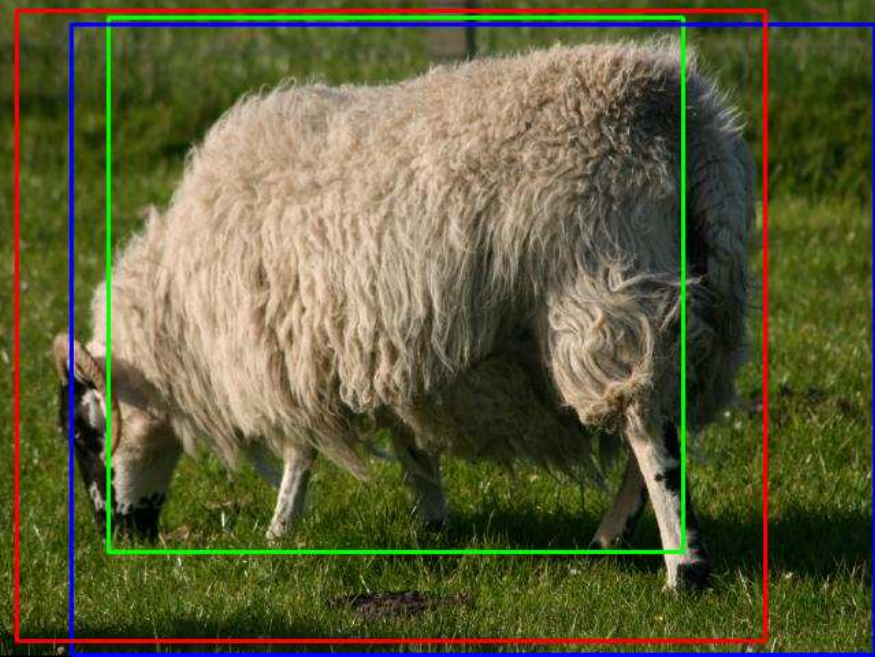} & & \includegraphics[scale=0.2]{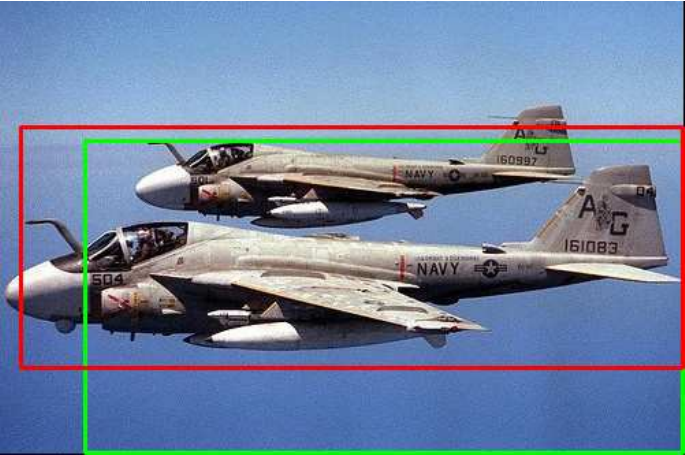} & \includegraphics[scale=0.2]{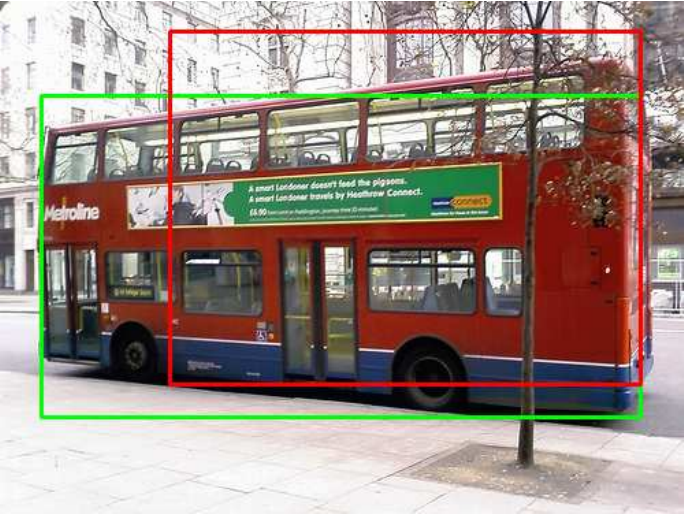} & \includegraphics[scale=0.2]{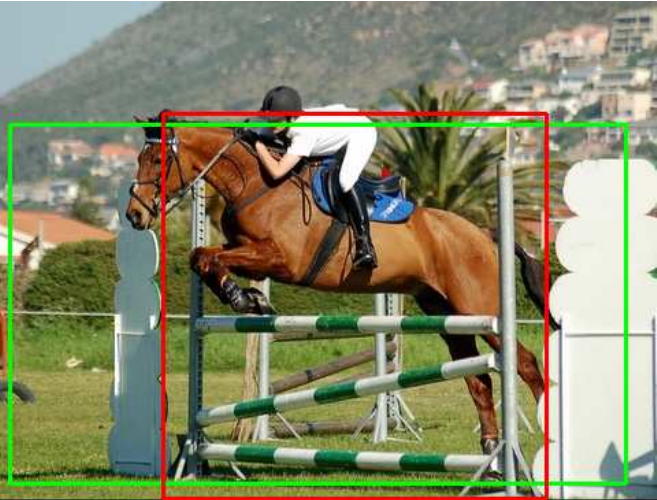} & \includegraphics[scale=0.2]{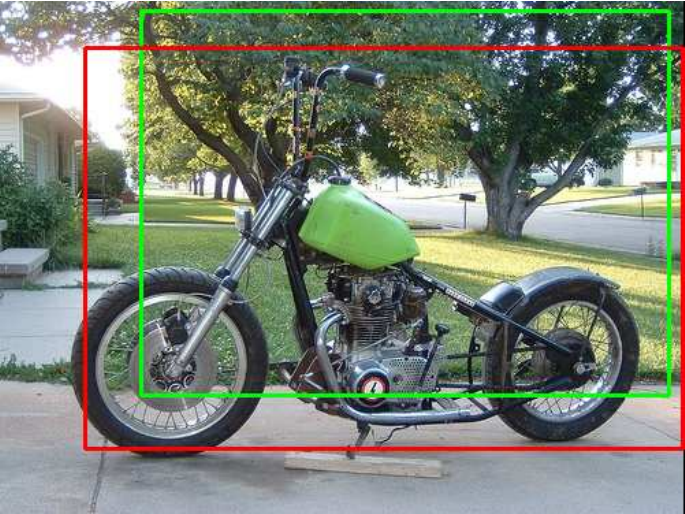} \\

\includegraphics[scale=0.2]{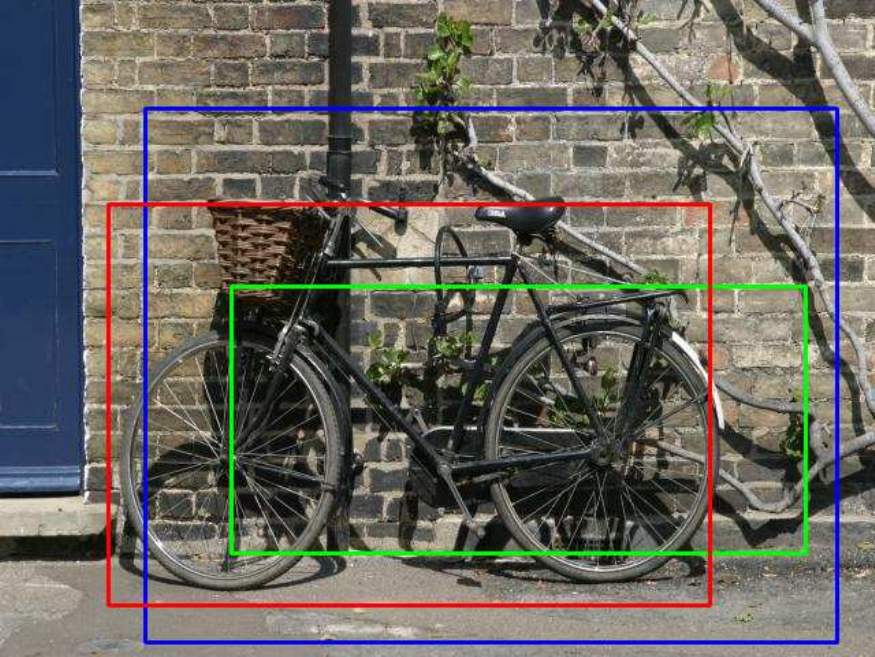} & \includegraphics[scale=0.2]{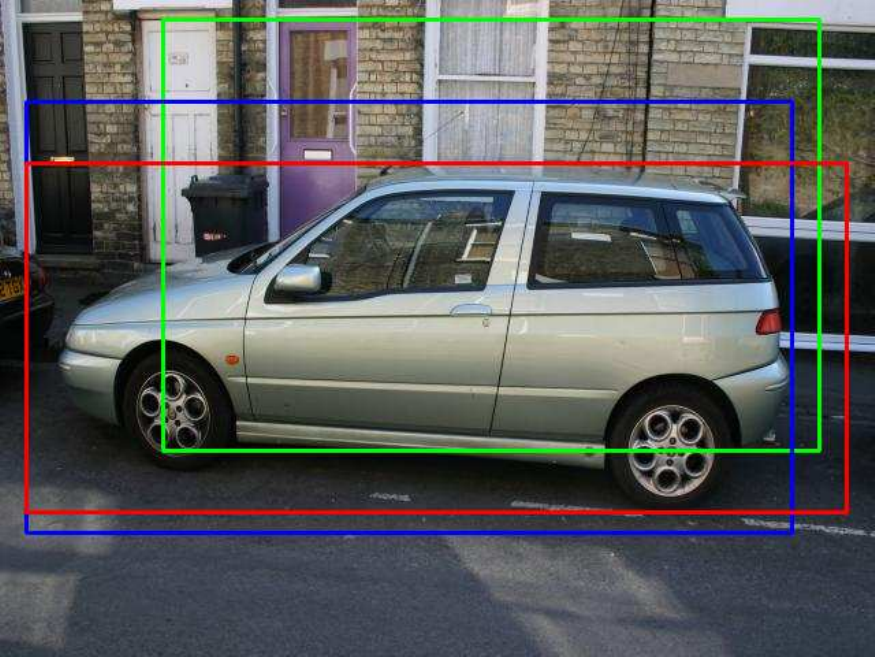} & \includegraphics[scale=0.2]{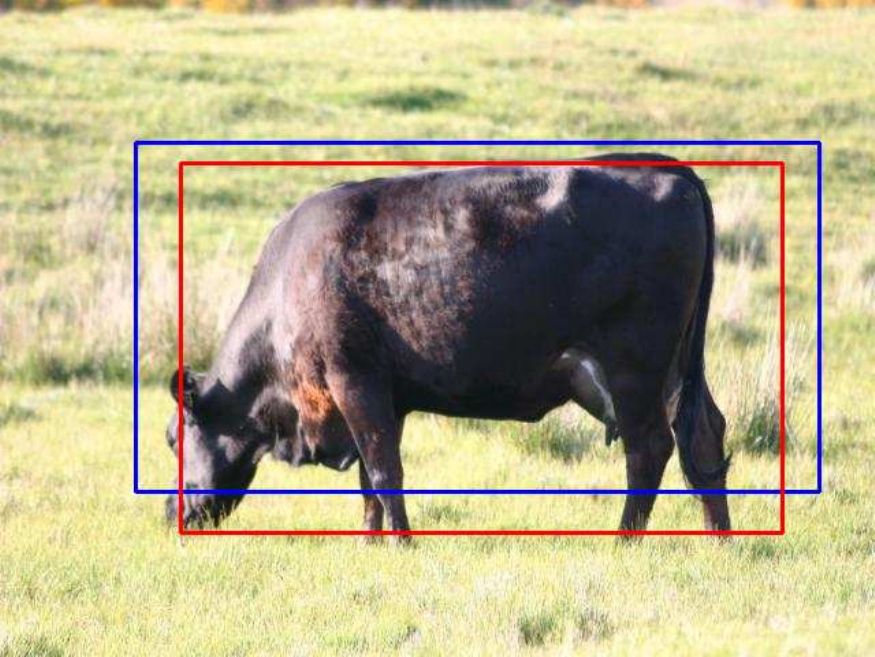} & \includegraphics[scale=0.2]{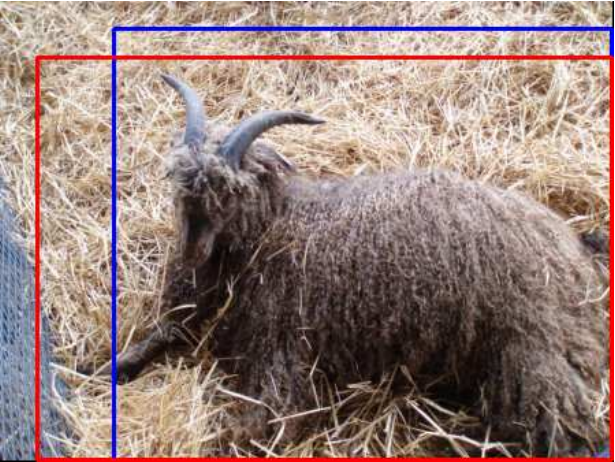} & & \includegraphics[scale=0.2]{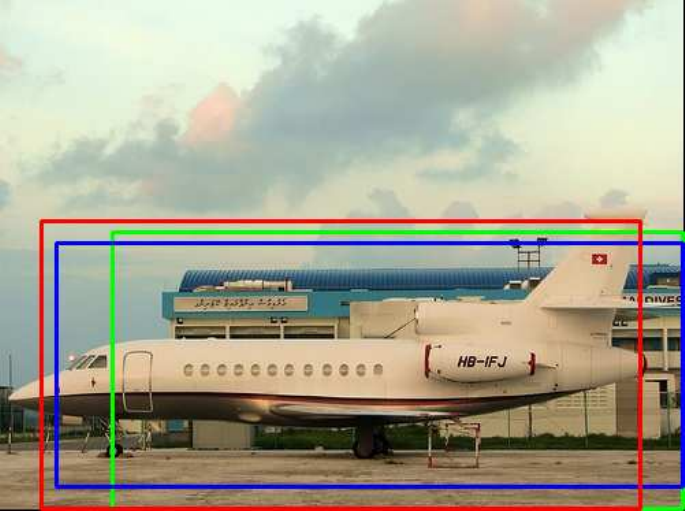} & \includegraphics[scale=0.2]{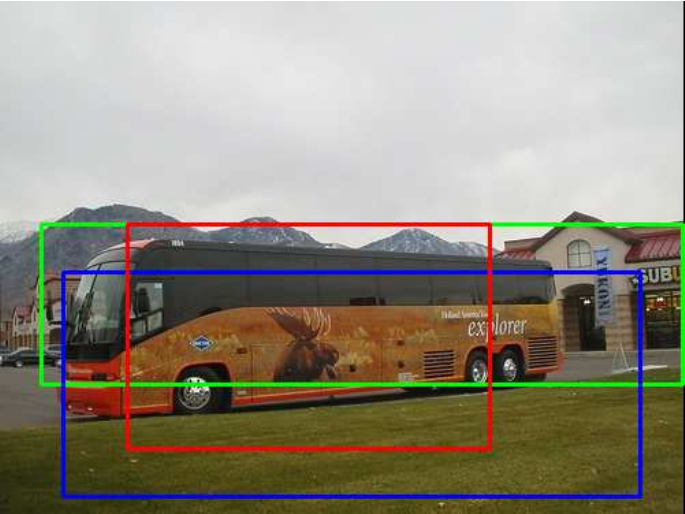} & \includegraphics[scale=0.2]{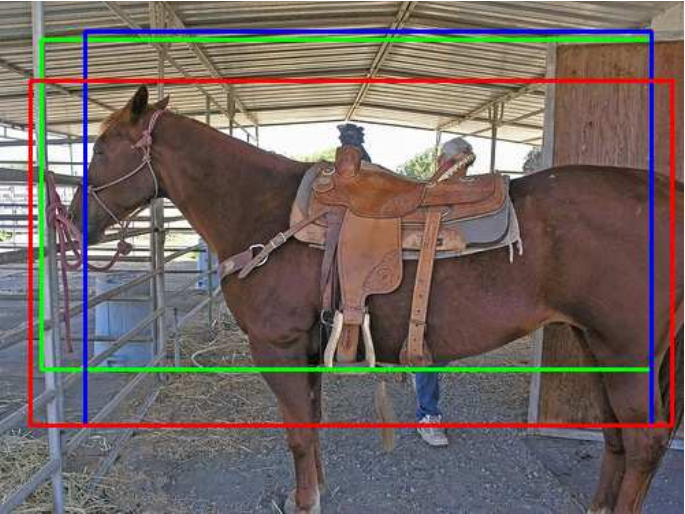} & \includegraphics[scale=0.2]{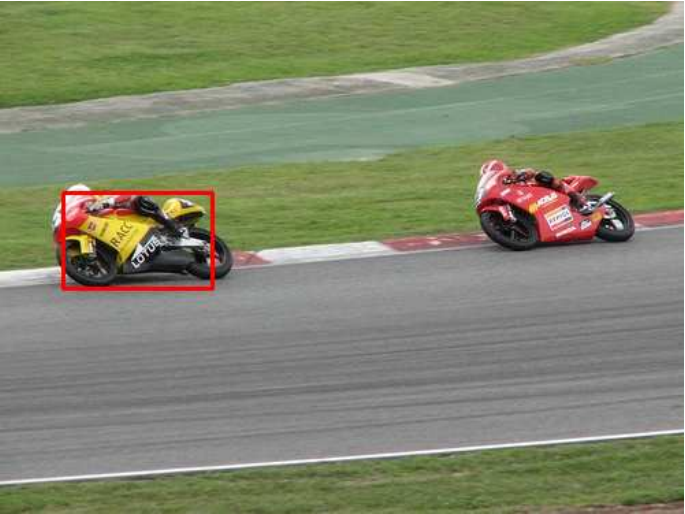} \\

\\

\includegraphics[scale=0.2]{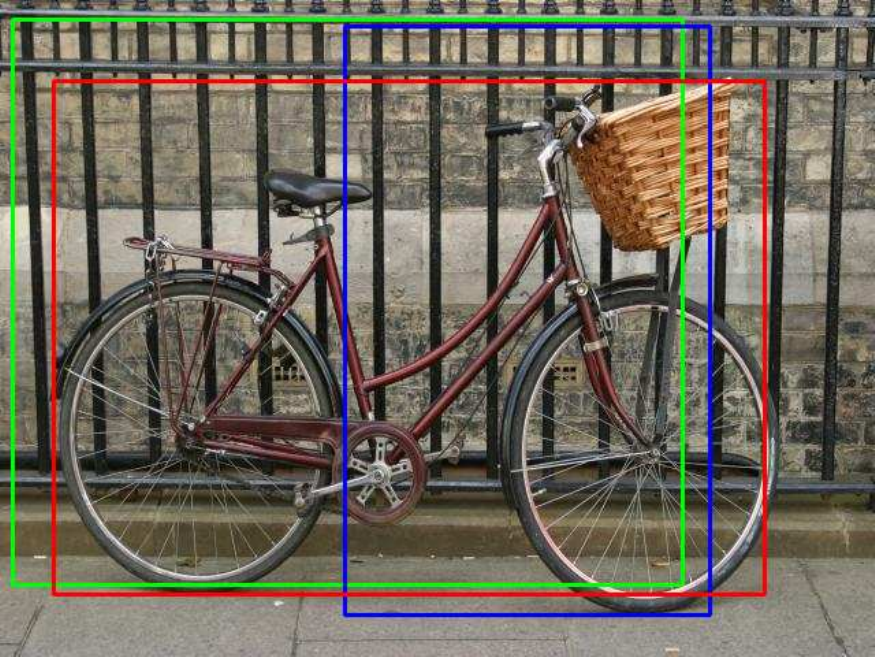} & \includegraphics[scale=0.2]{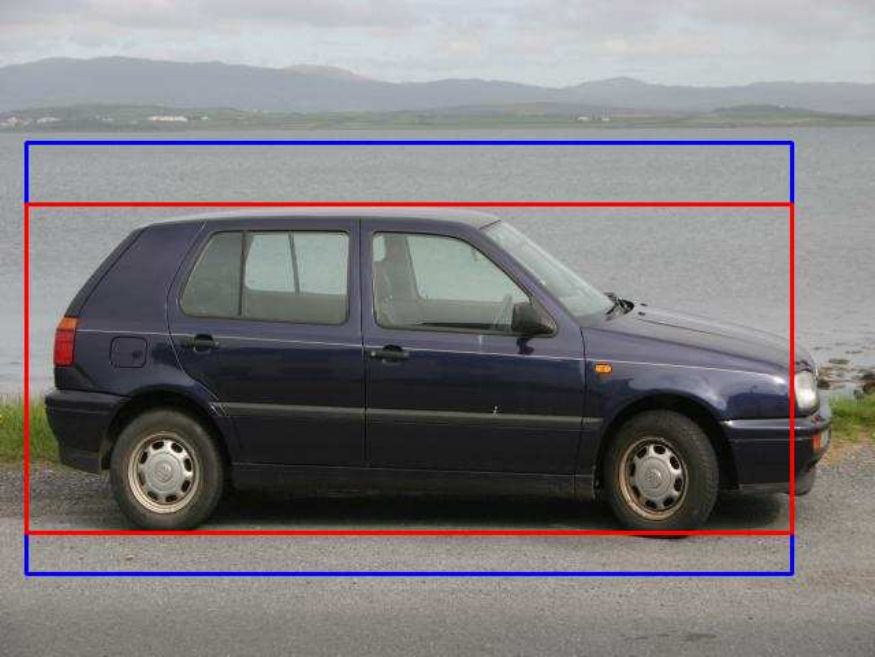} & \includegraphics[scale=0.2]{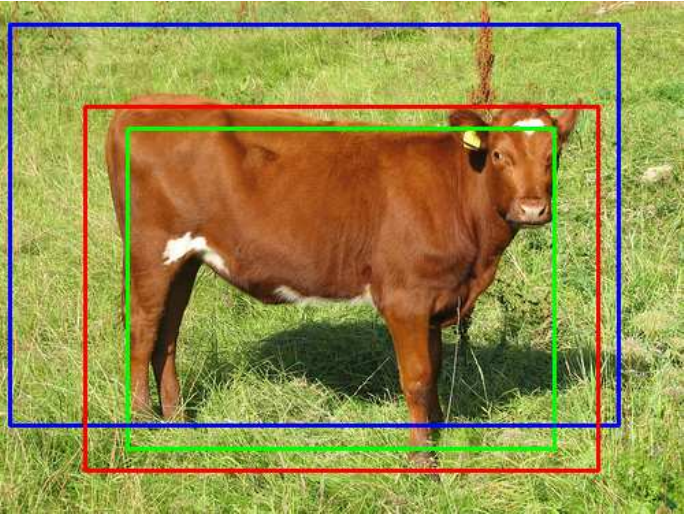} & \includegraphics[scale=0.2]{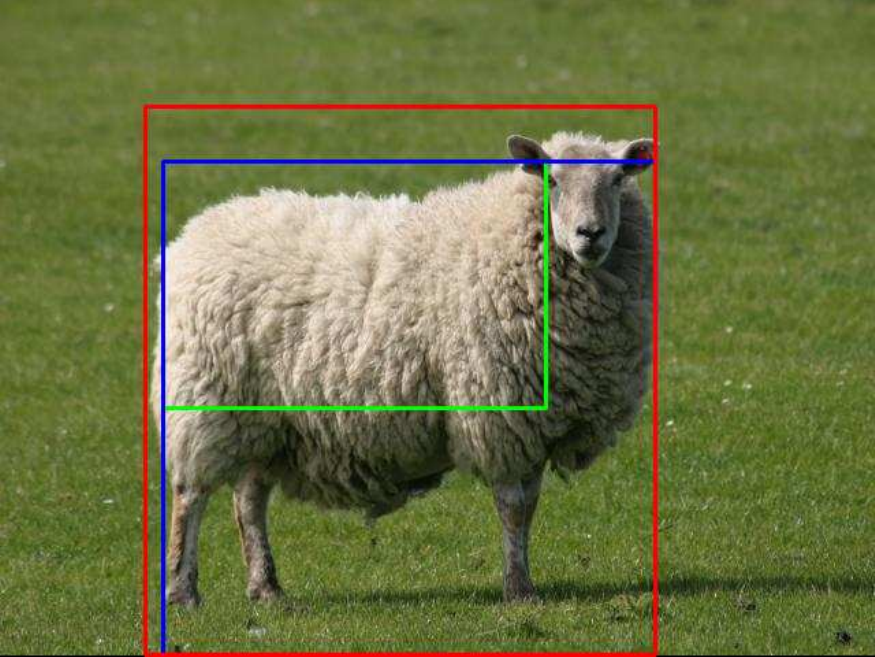} & & \includegraphics[scale=0.2]{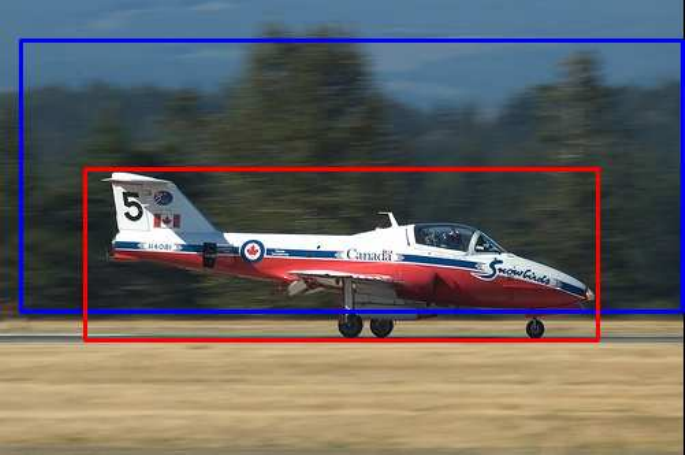} & \includegraphics[scale=0.2]{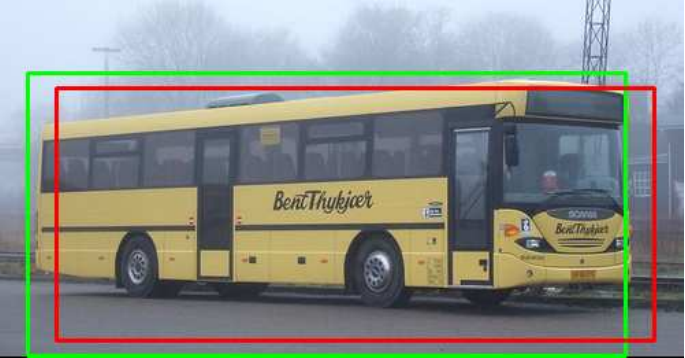} & \includegraphics[scale=0.2]{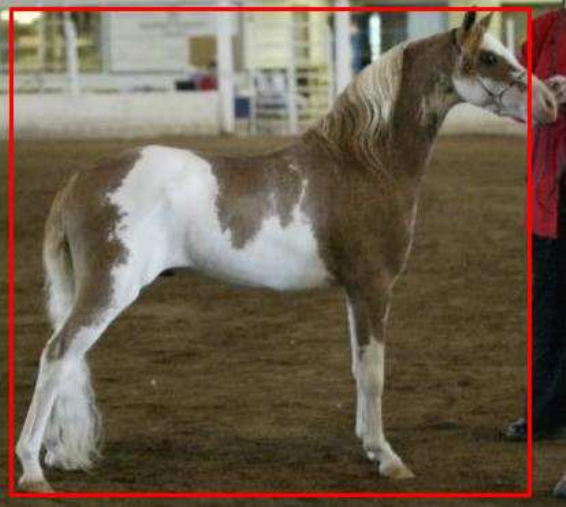} & \includegraphics[scale=0.2]{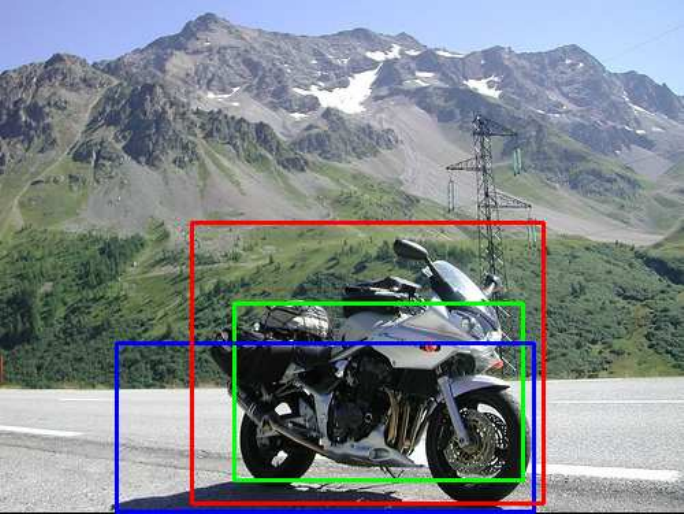} \\

\includegraphics[scale=0.2]{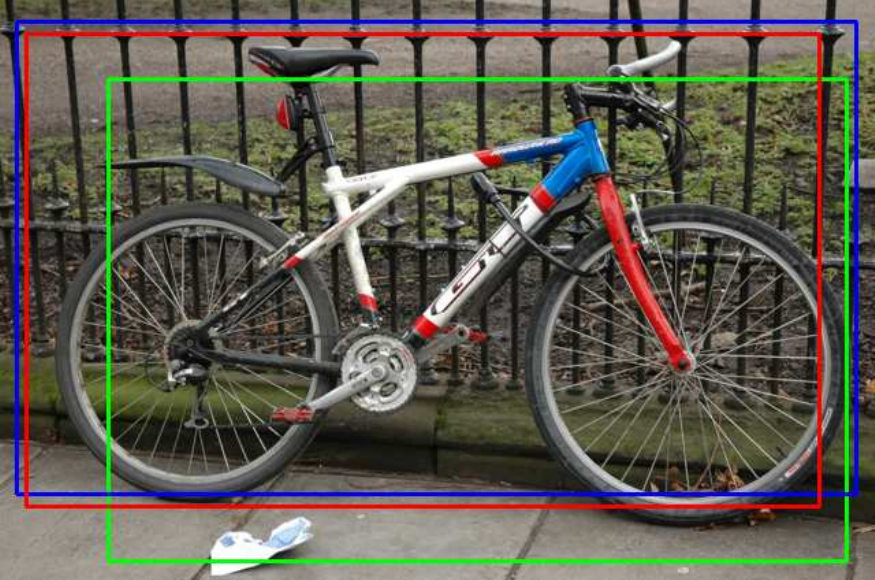} & \includegraphics[scale=0.2]{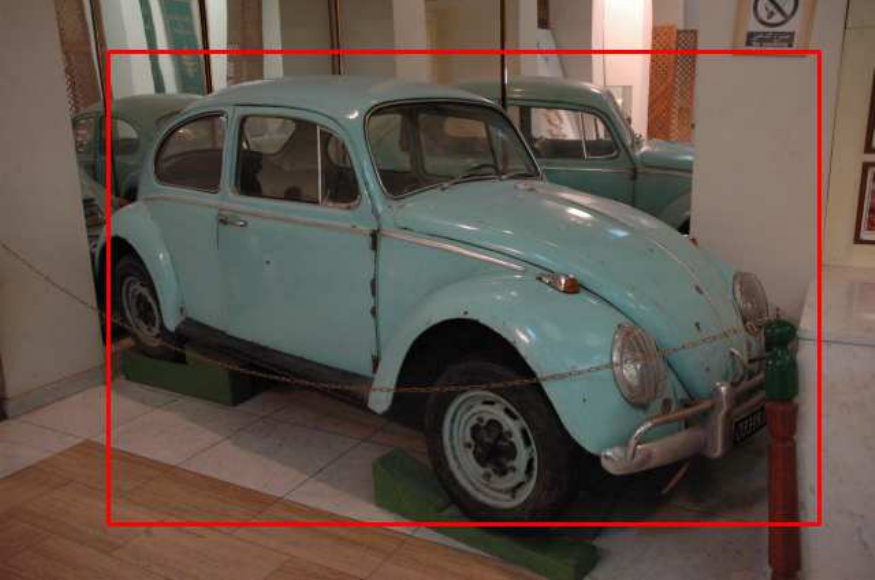} & \includegraphics[scale=0.2]{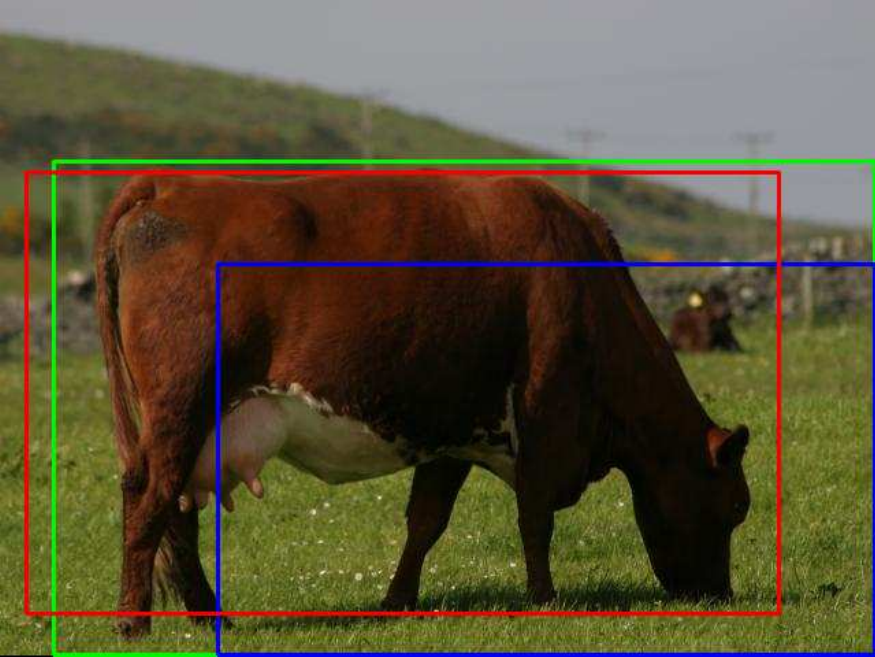} & \includegraphics[scale=0.2]{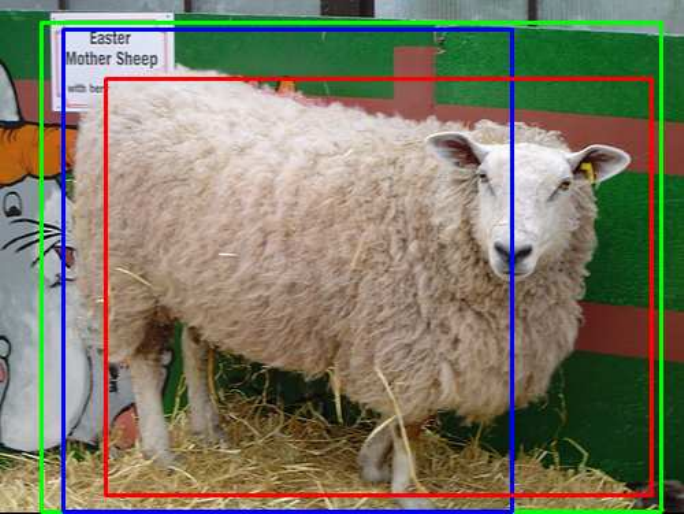} & & \includegraphics[scale=0.2]{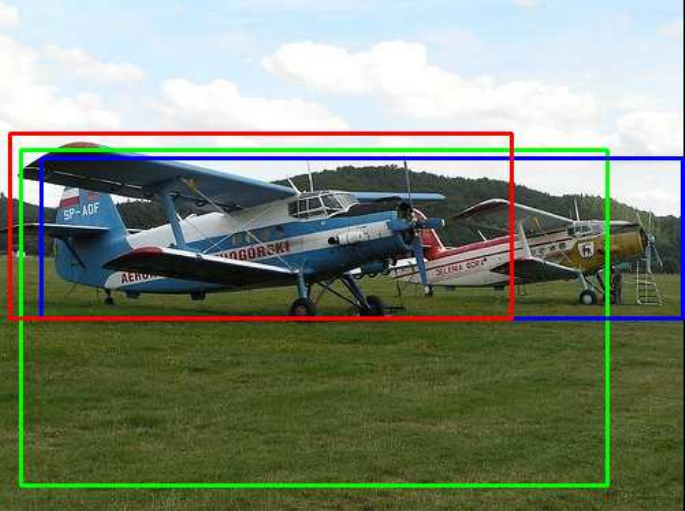} & \includegraphics[scale=0.2]{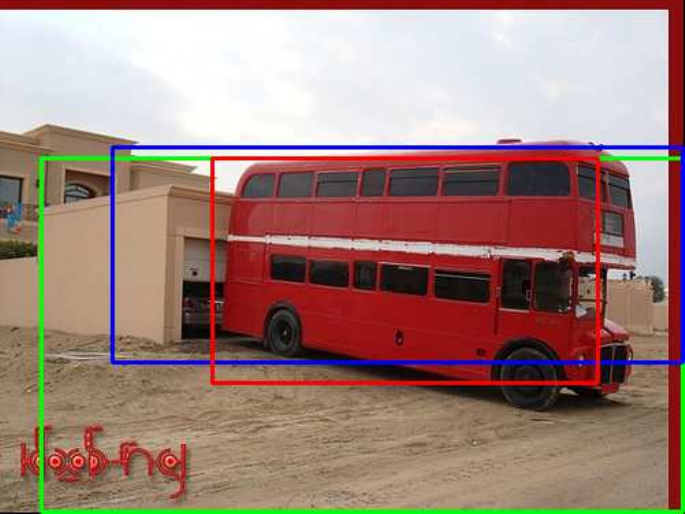} & \includegraphics[scale=0.2]{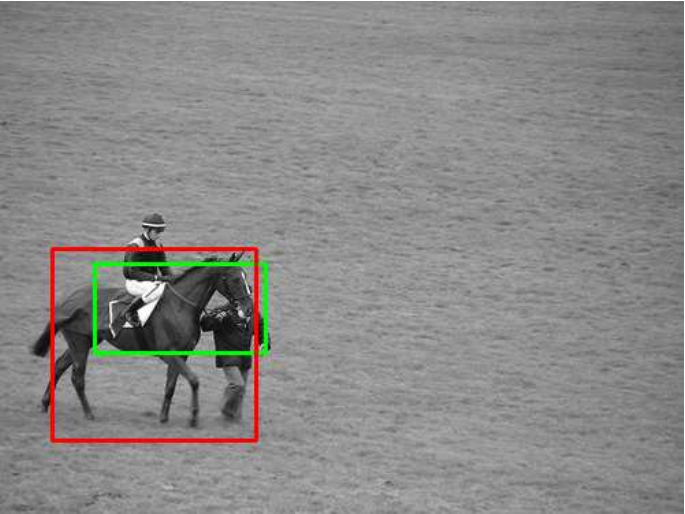} & \includegraphics[scale=0.2]{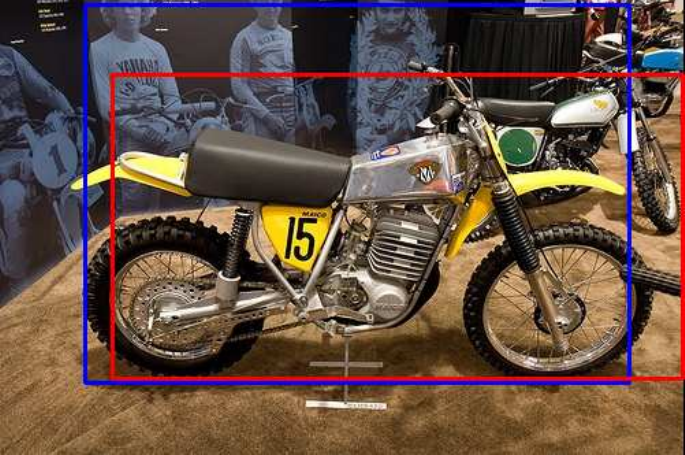} \\

\includegraphics[scale=0.2]{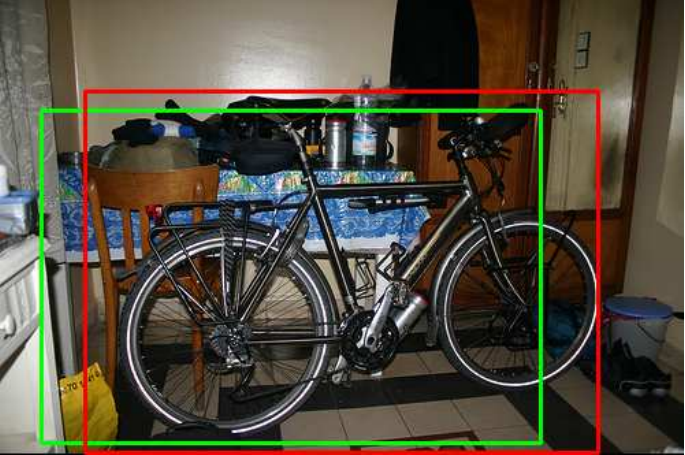} & \includegraphics[scale=0.2]{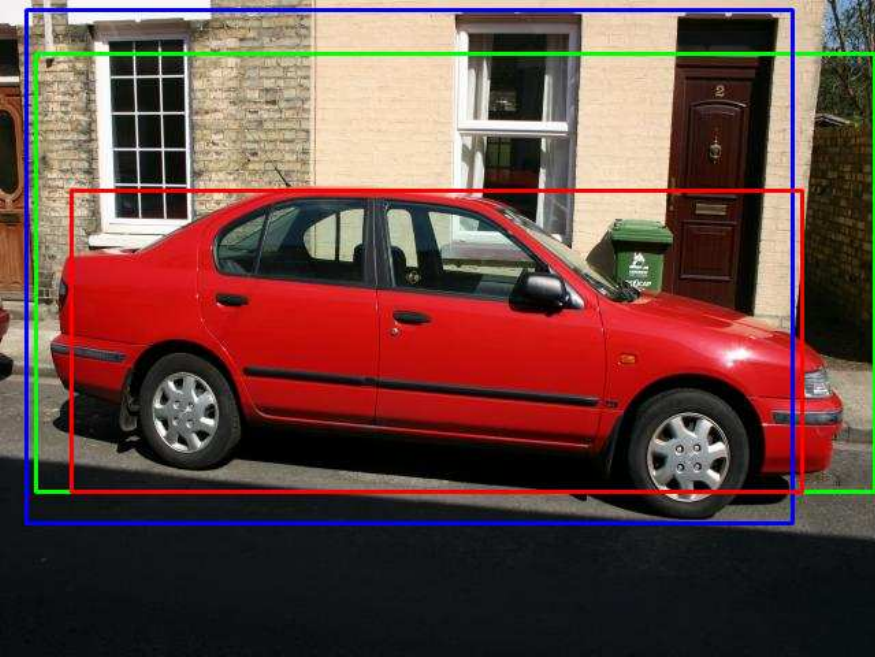} & \includegraphics[scale=0.2]{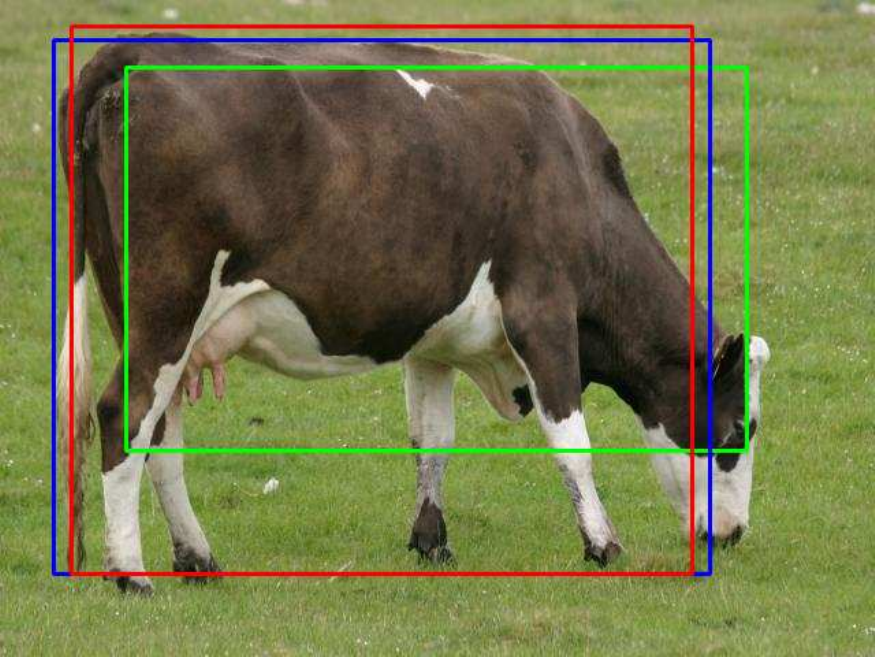} & \includegraphics[scale=0.2]{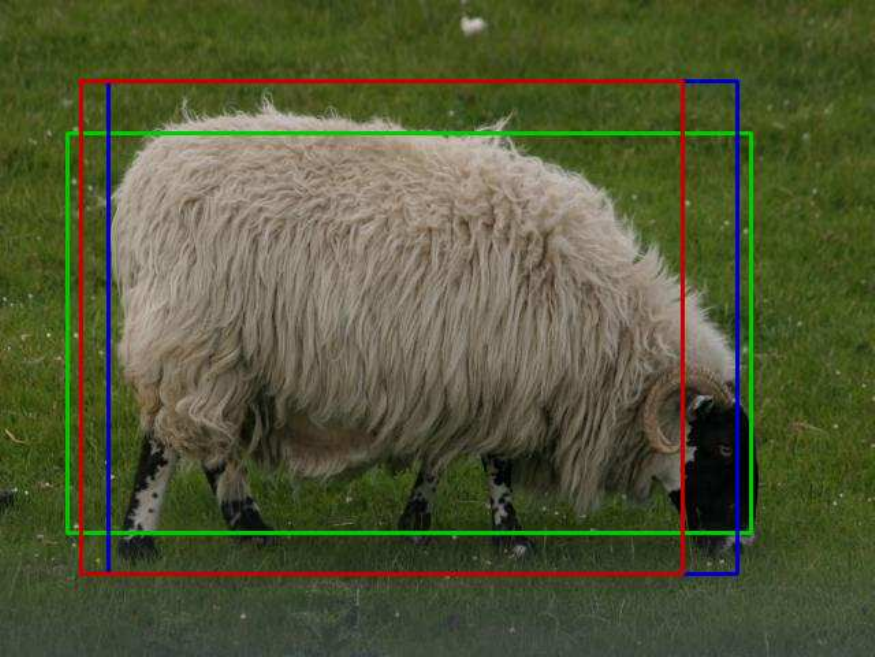} & & \includegraphics[scale=0.2]{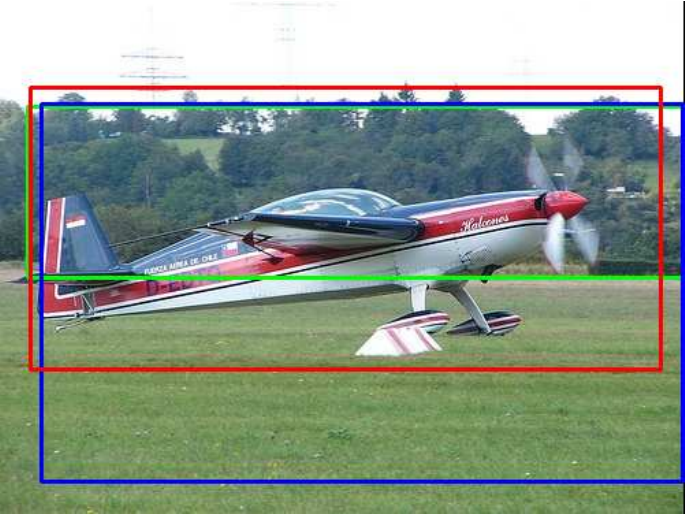} & \includegraphics[scale=0.2]{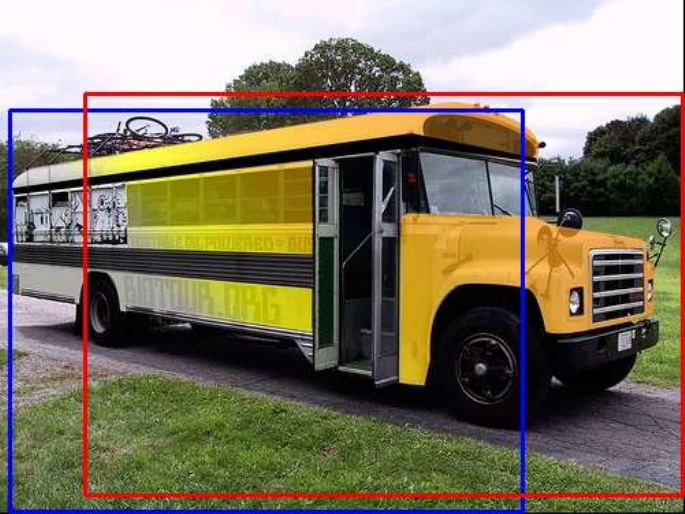} & \includegraphics[scale=0.2]{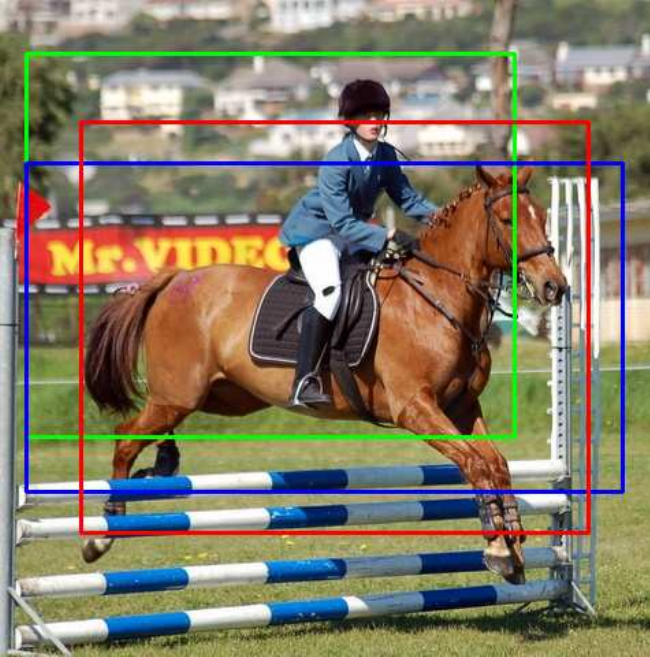} & \includegraphics[scale=0.2]{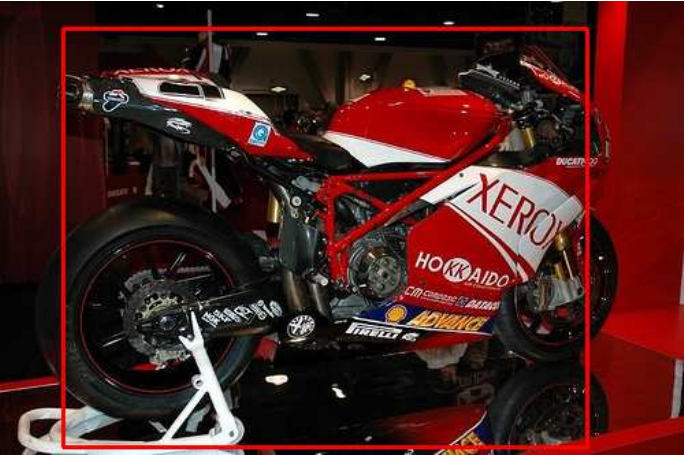} \\

\end{tabular}

}

\caption{ {\small Example images with estimated bounding boxes of different class-viewpoint combinations from the PASCAL06 and PASCAL07 datasets. In each image, results of Objectness \cite{Objectness} with the highest objectness score (green box),  and ROML in unsupervised (blue box) and weakly supervised (red box) settings are shown (they may coincide in some images where only one or two boxes are shown). Top part is for left viewpoint, and bottom part is for right viewpoint.} }\label{COLExamImgShowFigs}
\end{figure*}

\section{Conclusions}
\label{ConclusionSec}

In this paper, we propose a framework termed ROML, for robustly matching objects in a set of images. ROML is formulated as a rank and sparsity minimization problem to optimize a set of PPMs. The optimized PPMs identify inlier features from each image and establish their consistent correspondences across the image set. To solve ROML, we use the ADMM method, in which a subproblem associated with PPM updating is a difficult IQP. We prove that under widely applicable conditions, this IQP is equivalent to a formulation of LSAP, which can be efficiently solved by the Hungarian algorithm. Extensive experiments on rigid/non-rigid object matching, matching instances of a common object category, and common object localization show the efficacy of our proposed method.

In the present work, we have assumed for ROML that there is exactly one object instance contained in each of a set of images. This assumption is mainly to make an ideal problem setting for the difficult, combinatorial task of simultaneously matching inlier features of object instances across a set of images. However, in many practical problems such as unsupervised learning of object categories from an image collection, there could be multiple object instances, possibly of different categories, contained in one image. In these more challenging scenarios, ROML, by design, may at most identify and match one instance from each image, and ignore the other instances contained in the image collection. In order to learn object categories in these more challenging scenarios, one may need to extend the formulation of ROML so that multiple instances per image can be taken into account. For example, we have made an attempt of such kind in \cite{ZengAmbiguousLearning} for the task of learning object (face) categories from ambiguously labeled images, where each image in an image collection may contain multiple object instances of interest, and its associated caption has some labels of object category, with the true ones included, while the instance-label association is unknown. We take into account multiple instances per image in \cite{ZengAmbiguousLearning} by extending ROML to accommodate category-wise low-rank models and new constraints of PPMs. Nevertheless, extending ROML to unsupervised object learning remains an open question, and we are interested in pursuing this direction in future research.



\appendix

\section{Learning Features of Coordinates-Descriptor Combination}
\label{appendix_LowDimEmbedFeaLearning}

Local region descriptors alone could be ambiguous for feature matching when there exist repetitive textures or less discriminative local appearance in images. To improve the matching accuracy, it is necessary to exploit the geometric structure of inlier points that consistently appears in each of the set of images. In this work, we consider a simple method introduced in \cite{OneShot} to exploit such geometric constraints. The method derives an embedded feature representation that combines information of both spatial arrangement of feature points inside each image, and similarity of feature descriptors across images. We briefly summarize this method as follows.

Given a set of $K$ images, denote ${\bf A}_{spa.}^{k} \in \mathbb{R}^{n_{k}\times n_{k}}$ as an affinity matrix that measures the spatial proximity of any two of the $n_{k}$ extracted feature points in the $k^{th}$ image, where spatial proximity can be either measured based on Euclidean distances of image coordinates of feature points, which is invariant to translation and rotation, or made affine invariant \cite{OneShot}. In this work, we compute ${\bf A}_{spa.}^{k}$ using Gaussian kernel as ${\bf A}_{spa.}^{k} (i, j) = \mathrm{e}^{ - \| {\bf x}_{i}^{k} - {\bf x}_{j}^{k} \|^{2} / 2\sigma_{spa.}^{2} }$, where ${\bf x}^{k} = [x^{k}, y^{k}]^{\top}$ denotes image coordinates in the $k^{th}$ image, and $\sigma_{spa.}$ is a scaling parameter. Each feature point has an associated region descriptor. Denote ${\bf A}_{des.}^{pq} \in \mathbb{R}^{n_{p}\times n_{q}}$ as another affinity matrix, each entry of which measures the similarity of region descriptors between a pair of features selected from the $p^{th}$ and $q^{th}$ images respectively. ${\bf A}_{des.}^{pq}$ can be computed similar to ${\bf A}_{spa.}^{k}$ as ${\bf A}_{des.}^{pq} (i, j) = \mathrm{e}^{ - \| {\bf f}_{i}^{p} - {\bf f}_{j}^{q} \|^{2} / 2\sigma_{des.}^{2} }$, where ${\bf f}_{i}^{p}$ and ${\bf f}_{j}^{q}$ are feature descriptors from the $p^{th}$ and $q^{th}$ images respectively, and $\sigma_{des.}$ is a scaling parameter.

The method in \cite{OneShot} aims to learn embedded feature representations for all $N = \sum_{k=1}^{K}n_{k}$ points in the $K$ images so that in the embedded space: (1) spatial structure of the point set in each image should be preserved; (2) features from different images with high descriptor similarity should be close to each other. Let $\{ {\bf f}_{i}^{k} \in \mathbb{R}^{d} \}_{i=1}^{n_{k}}$, $k = 1, \dots, K$, be the new features to be learned \footnote{For consistency we use the same ${\bf f}$ for different feature types.}, the above objectives can be formalized as \begin{equation}\label{EqnOneShotEmbedObjective} \min\sum_{p,q}\sum_{i,j} \big\| {\bf f}_{i}^{p} - {\bf f}_{j}^{q} \big\|_{2}^{2} {\bf A}_{ij}^{pq} ,\end{equation} where the matrix ${\bf A} \in \mathbb{R}^{N\times N}$ is defined as: ${\bf A}^{pq} = {\bf A}_{spa.}^{k}$ when $p = q = k$, ${\bf A}^{pq} = {\bf A}_{des.}^{pq}$ when $p \neq q$, and ${\bf A}^{pq} \in \mathbb{R}^{n_{p}\times n_{q}}$ is the $(p,q)$ block of all the $K\times K$ blocks of ${\bf A}$. The objective function (\ref{EqnOneShotEmbedObjective}) turns to be a problem of Laplacian embedding \cite{LaplacianEmbed}. Let $\tilde{{\bf F}} = [{\bf f}_{1}^{1}, \dots, {\bf f}_{n_{1}}^{1}, \dots, {\bf f}_{1}^{K}, \dots, {\bf f}_{n_{K}}^{K}]^{\top} \in \mathbb{R}^{N\times d}$, (\ref{EqnOneShotEmbedObjective}) can be rewritten in matrix form as \begin{equation}\label{EqnOneShotEmbedObjMatrixForm} \min_{\tilde{{\bf F}}} \mathrm{trace}\big( \tilde{{\bf F}}^{\top} \tilde{{\bf L}}_{{\bf A}} \tilde{{\bf F}} \big) \ \mathrm{s.t.} \ \tilde{{\bf F}}^{\top} \tilde{{\bf D}}_{{\bf A}} \tilde{{\bf F}} = {\bf I}, \end{equation} where $\tilde{{\bf L}}_{{\bf A}} = \tilde{{\bf D}}_{{\bf A}} - {\bf A}$ is the Laplacian matrix of ${\bf A}$, and $\tilde{{\bf D}}_{{\bf A}}$ is a diagonal matrix with value of the $i^{th}$ diagonal entry as $\sum_{j}{\bf A}_{ij}$. (\ref{EqnOneShotEmbedObjMatrixForm}) is a generalized eigenvector problem: $\tilde{{\bf L}}_{{\bf A}} {\bf f} = \beta \tilde{{\bf D}}_{{\bf A}} {\bf f}$. Its optimal solution, i.e., the $N$ new features in the $d$-dimensional embedded space, can be obtained by the bottom $d$ nonzero eigenvectors.

\section{}
\label{appendix_LESolutionDerivation}

We present derivations of the solutions (\ref{EqnMainLagUpdateLSolution}) and (\ref{EqnMainLagUpdateESolution}) respectively for the problems (\ref{EqnMainLagUpdateL}) and (\ref{EqnMainLagUpdateE}) as follows.

Given updated variables $\mathbf{E}_t$, $\{ \mathbf{P}_t^k \}_{k=1}^K$, and $\mathbf{Y}_t$, and write ${\bf D}_{t} = [ \mathrm{vec}({\bf F}^{1}{\bf P}_{t}^{1}), \dots, \mathrm{vec}({\bf F}^{K}{\bf P}_{t}^{K}) ]$, the problem (\ref{EqnMainLagUpdateL}) can be written explicitly as
\begin{eqnarray}\label{AppendixEqnDerivationL}
\min_{\mathbf{L}} \| \mathbf{L} \|_* + \frac{\rho}{2} \| \mathbf{L} - ( \mathbf{D}_t - \mathbf{E}_t - \frac{1}{\rho}\mathbf{Y}_t ) \|_F^2 ,
\end{eqnarray}
which appears to be the form of a proximal operator associated with the nuclear norm. According to \cite{ZhouchenALM}, optimal solution of (\ref{AppendixEqnDerivationL}) can be written as (\ref{EqnMainLagUpdateLSolution}).

Given updated variables $\mathbf{L}_{t+1}$, $\mathbf{Y}_t$, and $\{ \mathbf{P}_t^k \}_{k=1}^K$ with ${\bf D}_{t} = [ \mathrm{vec}({\bf F}^{1}{\bf P}_{t}^{1}), \dots, \mathrm{vec}({\bf F}^{K}{\bf P}_{t}^{K}) ]$, the problem (\ref{EqnMainLagUpdateE}) can be written explicitly as
\begin{eqnarray}\label{AppendixEqnDerivationE}
\min_{\mathbf{E}} \lambda \| \mathbf{E} \|_1 + \frac{\rho}{2} \| \mathbf{E} - ( \mathbf{D}_t - \mathbf{L}_{t+1} - \frac{1}{\rho}\mathbf{Y}_t ) \|_F^2 ,
\end{eqnarray}
which appears to be the form of a proximal operator associated with the $\ell_1$-norm. According to \cite{ZhouchenALM}, optimal solution of (\ref{AppendixEqnDerivationE}) can be written as (\ref{EqnMainLagUpdateESolution}).

\bibliographystyle{spmpsci}      
\bibliography{Jia_RobustMatching_IJCVRevision}   

\end{document}